\newtheorem{definition}{Definition}
\newtheorem{theorem}{Theorem}
\newtheorem{proposition}{Proposition}
\numberwithin{equation}{subsection}
\numberwithin{definition}{subsection}
\numberwithin{theorem}{subsection}
\numberwithin{lemma}{subsection}
\numberwithin{corollary}{subsection}
\numberwithin{proposition}{subsection}
\numberwithin{remark}{subsection}
\numberwithin{assumption}{subsection}
\begin{document}

\title{Compact Multi-level-prior Tensor Representation for Hyperspectral Image Super-resolution}

\author{%
	Yinjian Wang\textsuperscript, Wei Li,~\IEEEmembership{Senior Member,~IEEE}, Yuanyuan Gui, Gemine Vivone,~\IEEEmembership{Senior Member,~IEEE}
    \thanks{%
        This paper was supported by NSFC Projects of International Cooperation and Exchanges [Grant number  W2411055]. (Corresponding author: Wei Li.)
    }
    \thanks{ Y. Wang, W. Li, and Y. Gui are with the School of Information and
    	Electronics, Beijing Institute of Technology, and
    	the National Key Laboratory of Science and Technology on Space-Born
    	Intelligent Information Processing, Beijing
    	100081, China(e-mail: yinjw@bit.edu.cn, liwei089@ieee.org, 953647315@qq.com).}
    \thanks{G. Vivone is with the National Research Council, Institute of Methodologies for Environmental Analysis (CNR-IMAA), 85050 Tito, Italy(e-mail: gemine.vivone@imaa.cnr.it).}
}
\maketitle

\begin{abstract}
	Fusing a hyperspectral image with a multispectral image acquired over the same scene, \textit{i.e.}, hyperspectral image super-resolution, has become a popular computational way to access the latent high-spatial-spectral-resolution image. To date, a variety of fusion methods have been proposed, among which the tensor-based ones have testified that multiple priors, such as multidimensional low-rankness and spatial total variation at multiple levels, effectively drive the fusion process. However, existing tensor-based models can only effectively leverage one or two priors at one or two levels, since simultaneously incorporating multi-level priors inevitably increases model complexity. This introduces challenges in both balancing the weights of different priors and optimizing multi-block structures. Concerning this, we present a novel hyperspectral super-resolution model compactly characterizing these multi-level priors of hyperspectral images within the tensor framework. Firstly, the proposed model decouples the spectral low-rankness and spatial priors by casting the latent high-spatial-spectral-resolution image into spectral subspace and spatial maps via block term decomposition. Secondly, these spatial maps are stacked as the spatial tensor encoding the high-order spatial low-rankness and smoothness priors, which are co-modeled via the proposed non-convex mode-shuffled tensor correlated total variation. Finally, we draw inspiration from the linearized alternating direction method of multipliers to design an efficient algorithm to optimize the resulting model, theoretically proving its Karush-Kuhn-Tucker convergence under mild conditions. Experiments on multiple datasets demonstrate the effectiveness of the proposed algorithm. The code implementation will be available from \url{https://github.com/WongYinJ}.
\end{abstract}


\begin{keywords}
	
	
	Low-rank tensor, total variation, non-convex optimization, hyperspectral super-resolution, image fusion, remote sensing.
\end{keywords}

\section{Introduction}
\label{sec: Intro}
Hyperspectral image (HSI) is much appreciated for its nanometer-level fine-grained spectral information which to a large extent facilitates many scene interpretation tasks such as classification {\cite{9954181,9457035}}, object tracking \cite{10128966}, and target detection {\cite{10549817,9478333}}. Unfortunately, acquiring high-spatial-resolution HSI at the hardware level remains a significant challenge due to the fundamental physical trade-offs in existing imaging systems. To be more specific, the desire for adequate photon capture to maintain photoelectric signal-to-noise-ratio necessitates larger detector units per spectral band in hyperspectral sensors, inherently reducing the spatial resolution of the acquired HSI. Such defect severely impairs the delineation of fine-scale features in the application scenarios above \cite{VIVONE2023405}. Consequently, fusing HSI with a corresponding multi-spectral image (MSI) that is easily captured at a much higher spatial resolution due to a much sparser spectral sampling rate \cite{7946218}, \textit{i.e.}, hyperspectral super-resolution (HSR), has become a popular software-level alternative to access to the latent high-spatial-spectral-resolution image (HSSI). 

In this vein, numerous HSR models have been subsequently proposed. Song \textit{et al.} \cite{SONG2014148} developed a dictionary-based fusion method using non-negative matrix factorization to couple spectral and spatial information from HSI and MSI, respectively, achieving higher-fidelity results than benchmarks on satellite data. A convex alternating direction method of multipliers (ADMM)-based fusion method that leverages subspace modeling and edge-preserving vector total variation (TV) regularization to achieve HSR has been presented in \cite{7000523}, which jointly estimates the spatial and spectral responses that degrade the latent HSSI. Wei \textit{et al.} \cite{7163298} modeled HSR as the problem of solving a Sylvester equation maximizing the likelihood criterion. By leveraging the structural properties of the circulant and downsampling matrices inherent to the fusion problem, they derived a closed-form solution for the associated Sylvester equation, eliminating the need for iterative updates. Dian \textit{et al.} \cite{DIAN2019262} framed the fusion task as estimating spectral basis functions and their corresponding coefficients, and integrated non-local spatial similarities, spectral unmixing priors, and a sparsity constraint into the fusion problem. Xue \textit{et al.} \cite{9356457} proposed a structured sparse low-rank representation model that effectively exploits spatial/spectral subspace relationships from higher-level analysis, achieving superior HSR performance.

Despite their promising performance, all those methods fail to outrun the regime of matricizing the 3D HSI data cube. Since matrix analysis relies on the conventional matrix product, which only captures row-column relationships, it struggles to model the intricate structures of high-dimensional data like HSI. Consequently, tensor-based HSR approaches have gained increasing attention in HSI analysis \cite{10035509}. Exploiting a coupled Tucker decomposition \cite{kolda2009tensor} paradigm, Dian \textit{et al.} \cite{8359412} translated the spatial-spectral correlation of HSSI into the sparse prior on the core tensor and delivered a coupled sparse tensor factorization (CSTF) HSR model, which was later generalized for semi-blind HSR \cite{8917657}. Chang \textit{et al.} \cite{9076843} proposed a HSI restoration model that simultaneously captures spectral-spatial nonlocal similarity and spectral correlation through third-order tensors, demonstrating how singular value reweighting significantly enhances tensor modeling capability and flexibility. As research progresses, it is acknowledged that HSI exhibits multi-dimensional structures that cannot be sufficiently characterized by imposing a single constraint on any individual factor. In light of this, Dian \textit{et al.} \cite{8603806} exploited the low tensor-train (TT) \cite{oseledets2011tensor} rank (LTTR) prior, formulated in a multi-rank form, of the grouped 4D non-local patch tensors. They also proposed the generalized tensor nuclear norm {(GTNN)} \cite{10522984} to promote flexibility when characterizing the multi-dimensional low tensor tubal rank \cite{doi:10.1137/110837711}. To avoid the computational burden inherited from patch clustering, Liu \textit{et al.} \cite{9328229} imposed the tensor trace norm \cite{6138863} straightforwardly on the HSSI to cast the HSR task as a low Tucker rank tensor approximation (LRTA) problem. However, these methods formulate the multi-dimensional rank minimization problem as minimizing the nuclear norms of several matrices unfolded from the desired HSSI, incapable of explicitly analyzing the structures of the decomposed factors \cite{10154463}. To overcome this limitation, Li \textit{et al.} \cite{10149108} proposed the anisotropic sparsity (AS) tensor norm to transform the multi-dimensional low-rankness of the original tensor into the multi-dimensional sparsity of the Tucker core tensor. The resulting AS constrained low-rank tensor approximation (ASLA) framework achieves simultaneous multi-dimensional prior exploitation and factor structure pursuit under Tucker decomposition. Xu \textit{et al.} \cite{10770239} proposed the cascade-transform based tensor nuclear norm to alleviate the boundary effects and singleton domain limitation accompanied with the conventional tensor nuclear norm (TNN), which, noteworthily, was then generalized into a full-scale/multi-dimensional version, reducing its directional sensitivity. Another variant of TNN based on a non-linear transform, along with its full-scale/multi-dimensional version, was presented in \cite{10994802}, breaking TNN's limitation in modeling complex data structures inherent in the linear transform.  Additionally, TV \cite{RUDIN1992259}, as an efficient tool modeling the spatial continuity equipped within diverse image modalities, has been successfully absorbed into tensor-based HSR frameworks, \textit{e.g.}, \cite{10149108,9323227,10641088,9556548,9151315}. However, with multi-prior being employed, these models inevitably increase by scale, which leads to the challenges of model selection and multi-block optimization. Thus many of them suffer from hyper-weight balancing and the loss of convergence guarantee. Besides, most of these methods only manage to extract those prior information from a single level, that is, from the original data or its factorized components. The comprehensive exploitation of multi-prior at multi-level remains an open issue.

The advancement of computing devices has enabled the development of deep learning techniques. The HSR field has therefore witnessed a proliferation of data-driven methods. Nguyen \textit{et al.} \cite{9924190} proposed an unsupervised CNN-based fusion framework that incorporates Stein’s unbiased risk estimate (SURE) into a novel loss function, combining backprojection {mean square error (MSE)} with SURE to approximate the projected MSE between the fused and ground-truth HSSIs. Inspired by the smoothed particle hydrodynamics (SPH) theory, \cite{10302422} analogized pixel motion to SPH particle dynamics, implementing multi-scale smooth convolutions to preserve spectral information while using discretized Navier-Stokes equations to guide pixel motion for enhanced spatial edge clarity. Dian {et al.} \cite{10137388} delivered a zero-shot learning (ZSL) HSR network that estimates sensor responses for realistic data simulation, also incorporating dimension reduction and model-based loss, thus overcoming data scarcity/generalization issues while maintaining high accuracy and efficiency. Ma \textit{et al.} \cite{ma2023learning} replaced convolutional neural network (CNN) with transformers to learn HSSI priors, and unfolded a proximal gradient algorithm into a network where self-attention captures global spatial interactions and 3D-CNN layers enhance spatial-spectral correlation modeling. Nevertheless, the reliance on large-scale computational resources and high-quality training data remains a critical limitation hindering the practical application of data-driven methods.

In such context, this paper presents an as compact as possible tensor model for the HSR task, to concisely realize multi-level-prior representation, which further facilitates a convergence-guaranteed iterative algorithm. Specifically, the contributions are three-folds:

(1) We present the non-convex mode-shuffled tensor correlated total variation (NMS-t-CTV), which modifies t-CTV \cite{10078018} with a designed mode-shuffle strategy improving from convex approximation to non-convex one. By imposing it on the stacked spatial tensor acquired from the block-term decomposition (BTD) \cite{doi:10.1137/070690729}, we theoretically justify that the resulting model can cross-represent multi-level-priors equipped within the latent HSSI in a compact form.

(2) Inspired by the linearized alternating direction method of multipliers {(LADMM)} \cite{10.5555/2986459.2986528,yang2013linearized}, we customize an optimization framework for the proposed model that successfully deals with the high dimensionality of the spatial tensor. With a mild assumption that some multipliers are bounded, we manage to theoretically guarantee the convergence of the optimization framework to the Karush-Kuhn-Tucker (KKT) \cite{Karush1939,KuhnTucker1951,BoydVandenberghe2004} point, which also benefits from the compactness of the proposed model.

(3) A broad experimental analysis demonstrates the practical efficacy of the proposed compact multi-level-prior tensor representation (CMlpTR). 

The remainder of this article is organized as follows. Sec. \ref{sec:Preparations} familiarizes the readers with the notation and the foundational concepts related to this paper. Sec. \ref{sec:Methodology} presents the proposed CMlpTR and the derivation of the optimization framework. Sec. \ref{sec:Experiments} shows the experimental results. Finally, Sec. \ref{sec:Conclusion} draws the conclusions.
\section{Preparation}
\label{sec:Preparations}
\subsection{Notations}
Throughout this article, we denote a scalar as $a$, a vector as
$\boldsymbol{a}$, a matrix as $\boldsymbol{A}$, and a tensor as $\mathcal{A}$. $\left\Vert\boldsymbol{A}\right\Vert_F,\left\Vert\boldsymbol{A}\right\Vert,\left\Vert\boldsymbol{A}\right\Vert_1$, and $\left\Vert\boldsymbol{A}\right\Vert_\ast$ denote the Frobenius, the $l_1$, and the nuclear norms of a matrix, respectively. Particularly, we use $\boldsymbol{I}_N$ to denote the $N\times N$ identity matrix, $\boldsymbol{F}_N$ to denote the $N\times N$ discrete Fourier transformation matrix, and $\boldsymbol{D}_N$ to denote the $(N-1)\times N$ first-order difference matrix, \textit{i.e.}, 
$$\boldsymbol{D}_N=\begin{bmatrix}
	1&-1&&&\\
	&1&-1&&\\
	&&\ddots&\ddots&&\\
	&&&1&-1
\end{bmatrix}.$$
Let $\boldsymbol{A}^\ast$ be the conjugate transpose of $\boldsymbol{A}$. Then we can denote the anisotropic TV (ATV) norm of a matrix $\boldsymbol{A}\in\mathbb{R}^{M\times N}$ as $$\left\Vert\boldsymbol{A}\right\Vert_{ATV}\triangleq\left\Vert\boldsymbol{D}_M\boldsymbol{A}\right\Vert_1+\left\Vert\boldsymbol{A}\boldsymbol{D}_N^*\right\Vert_1.$$
For a $3$-way tensor $\mathcal{A}\in\mathbb{R}^{I_1\times I_2\times I_3}$, we use $\mathcal{A}[i_1,i_2,i_3]$ to denote the $(i_1,i_2,i_3)$th element of $\mathcal{A}$, and $\mathcal{A}_{:,:,i_3}$ to denote its $i_3$th frontal slice.
Its Frobenius norm is then denoted by $\left\Vert\mathcal{A}\right\Vert_F$ and defined as $\left\Vert\mathcal{A}\right\Vert_F=\sqrt{\sum_{i_1,i_2,i_3}{\cal A}^2[i_1,i_2,i_3]}$. Denoting the mode-$n$ product \cite{kolda2009tensor} with $\times_n$, the TV and ATV norms of $\mathcal{A}$ are defined as:
\begin{align*}
	\left\Vert\mathcal{A}\right\Vert_{TV}&\triangleq\sqrt{\left\Vert\mathcal{A}\times_1\boldsymbol{D}_M\right\Vert_F^2+\left\Vert\mathcal{A}\times_2\boldsymbol{D}_N\right\Vert_F^2},
	\\\left\Vert\mathcal{A}\right\Vert_{ATV}&\triangleq\left\Vert\mathcal{A}\times_1\boldsymbol{D}_M\right\Vert_1+\left\Vert\mathcal{A}\times_2\boldsymbol{D}_N\right\Vert_1.
\end{align*}
By $\circ$, we denote the outer product \cite[Def 1.3]{doi:10.1137/070690729}. 
Besides, with a little notation abuse, we also denote the tensor conjugate transpose \cite{8606166} of $\mathcal{A}$ with $\mathcal{A}^\ast$. To facilitate presentation, we use $\mathtt{svds}(\cdot)$ and $\mathtt{permute}(\cdot)$ to denote two operators with the same functionality of their MATLAB namesakes.
\subsection{Tensor Preliminaries}
The tools to depict the tensor rank are various, among which this paper employs the tensor singular value decomposition (t-SVD), whose definition is as follows.
\begin{definition}[t-SVD \cite{9381277}]
	For a $3$-way tensor $\mathcal{A}\in\mathbb{R}^{I_1\times I_2\times I_3}$, there exist $\mathcal{U}\in\mathbb{R}^{I_1\times I_1\times I_3},\,\mathcal{S}\in\mathbb{R}^{I_1\times I_2\times I_3},\,\mathcal{V}\in\mathbb{R}^{I_2\times I_2\times I_3}$, such that,
	$$\mathcal{A}=\mathcal{U}\star\mathcal{S}\star\mathcal{V}^\ast$$
	in which $\star$ denotes the t-product \cite{KILMER2011641}, $\mathcal{U}$ and $\mathcal{V}$ are orthogonal tensors \cite{8606166}, and $\mathcal{S}$ is a f-diagonal tensor \cite{8606166}.
	\label{Def: t-SVD}
\end{definition}
Accordingly, denoting the Fourier transformation of $\mathcal{S}$ along mode-$3$ as $\overline{\mathcal{S}}$, \textit{i.e.}, $\overline{\mathcal{S}}=\mathcal{S}\times_3\boldsymbol{F}_L$, then the TNN of $\mathcal{A}$ is defined as:
\begin{definition}[TNN \cite{8606166}]
	For the $3$-way tensor $\mathcal{A}$ in Definition \ref{Def: t-SVD}, its TNN can be written as:
	$$\left\Vert\mathcal{A}\right\Vert_{\ast}\triangleq\frac{1}{I_3}\sum_{j=1}^{I_3}\sum_{i=1}^{\min{\left\{I_1,I_2\right\}}}\overline{\mathcal{S}}[i,i,j].$$
\end{definition}
Hence, we can define t-CTV as:
\begin{definition}[t-CTV \cite{10078018}]
	For a $3$-way tensor $\mathcal{A}\in\mathbb{R}^{I_1\times I_2\times I_3}$, its t-CTV can be written as:
	$$\left\Vert\mathcal{A}\right\Vert_{\overset{\sim}{\ast}}\triangleq\frac{1}{\left|\mathfrak{C}\right|}\sum_{n\in\mathfrak{C}\subset\left\{1,2,3\right\}}\left\Vert\mathcal{A}\times_n\boldsymbol{D}_{I_n}\right\Vert_{\ast}$$
	in which $\left|\mathfrak{C}\right|$ denotes the cardinality of $\mathfrak{C}$ and $\mathcal{A}\times_n\boldsymbol{D}_{I_n}$ is called the mode-$n$ gradient tensor.
\end{definition}

To improve the approximation tightness of TNN to the t-SVD rank \cite{10078018}, several works have proposed to use non-convex surrogates, thus the non-convex tensor pseudo nuclear norm (NTPNN) is defined as:
\begin{definition}[NTPNN \cite{9340243}]
	For the $3$-way tensor $\mathcal{A}$ in Definition \ref{Def: t-SVD}, its NTPNN is:
	$$\left\Vert\mathcal{A}\right\Vert_{\ast,\psi}\triangleq\frac{1}{I_3}\sum_{j=1}^{I_3}\sum_{i=1}^{\min{\left\{I_1,I_2\right\}}}\psi\left(\overline{\mathcal{S}}[i,i,j]\right)$$
	in which $\psi(\cdot):[0,+\infty)\rightarrow[0,+\infty)$ is some non-convex surrogate.
\end{definition}
Note that both TNN and NTPNN are restricted to the third mode, which lacks flexibility dealing with the multi-dimensional structure of tensors. This paper then exploits the idea of GTNN to comprehensively consider the all-around low-t-SVD-rankness (LTSVDR) along all modes, for which we define the NTPNN for mode-1 and mode-2:
\begin{definition}[Mode-$\left\{1,2\right\}$ NTPNN]
	For a $3$-way tensor $\mathcal{A}\in\mathbb{R}^{I_1\times I_2\times I_3}$, we denote a permutation operator $\mathtt{P}_n\left(A\right)\triangleq\mathtt{permute}\left(\mathcal{A},[3-n,3,n]\right)$. Then, the mode-$n$ NTPNN of $\mathcal{A}$ is:
	$$\left\Vert\mathcal{A}\right\Vert_{\overset{n}{\ast},\psi}\triangleq\left\Vert\mathtt{P}_n\left(A\right)\right\Vert_{\ast,\psi}, n=1,2.$$
\end{definition}
\subsection{HSR Problem Formulation}
For the convenience of derivation, we model the degradation from HSSI to HSI and MSI following TF-HSR \cite{10904006}. For the sake of consistency, this is reported here. 
\begin{definition}[TF-HSR \cite{10904006}]
	{\it Given the HSI $\mathcal{X}\in \mathbb{R} ^{i_1\times i_2\times I_3}$,
		the MSI $\mathcal{Y}\in \mathbb{R} ^{I_1\times I_2\times i_3}$, the spatial-degradation matrices $\boldsymbol{P}_1\in
		\mathbb{R}^{i_1\times I_1}$ and $\boldsymbol{P}_2\in
		\mathbb{R}^{i_2\times I_2}$, and the spectral-degradation matrix $\boldsymbol{P}_3\in \mathbb{R}^{i_3\times
			I_3}$ with $i_1< I_1$, $i_2< I_2$, and $i_3<
		I_3$, the TF-HSR problem estimates the most appropriate HSSI
		$\mathcal{Z}\in \mathbb{R} ^{I_1\times I_2\times I_3}$, such that
		\begin{equation}
			\begin{aligned}
				\mathcal{X} & =\mathcal{Z}\times_1\boldsymbol{P}_1\times_2\boldsymbol{P}_2, \\
				\mathcal{Y} & =\mathcal{Z}\times_3\boldsymbol{P}_3.
			\end{aligned}
			\label{eq: Tensor Formulation}
	\end{equation}}
	\label{Def: Tensor Formulation}
\end{definition}
\section{Methodology}
\label{sec:Methodology}
\subsection{Multi-level-prior Observation of HSSI}
\begin{figure*}[t]
	\centering
	\includegraphics[width=\linewidth]{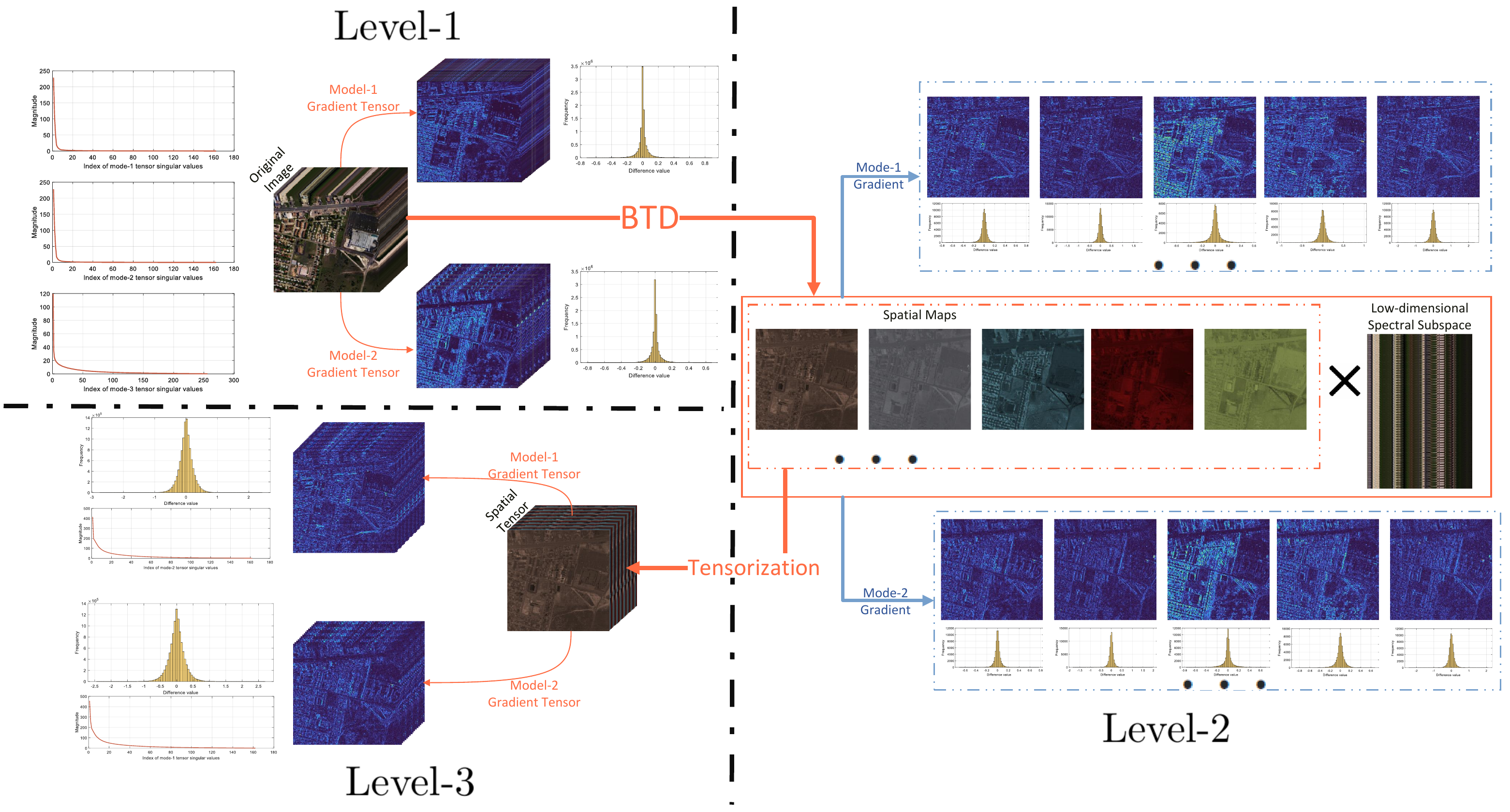}
	\caption{\label{fig: Overall}Multi-level-prior observation. Level-1: the original data domain; level-2: the factorization domain; Level-3: the enhanced tensorization domain.}
\end{figure*}
Since a real HSSI $\mathcal{Z}$ is by no means accessible, we stay in line with other HSR researches (\textit{e.g.}, \cite{9556548,10904006,9076843}) to observe the priors of HSI as a surrogate. Firstly, as shown by the Level-1 part of Fig. \ref{fig: Overall}, in the original HSSI, there underlies the multi-mode LTSVDR, \textit{i.e.}, the tensor singular values of $\mathtt{P}_1\left(\mathcal{Z}\right)$, $\mathtt{P}_2\left(\mathcal{Z}\right)$, and $\mathcal{Z}$ exhibit rapid decay. Besides, taking into account the spatial TV prior, the mode-{1,2} gradient tensors of $\mathcal{Z}$ should exhibit sparsity.

Furthermore, following the BTD paradigm, the HSSI can be decomposed into a spectral subspace and a series of spatial maps, which is illustrated by the Level-2 part of Fig. \ref{fig: Overall} and written as:
\begin{align}
	\mathcal{Z}=\mathcal{A}\times_3\boldsymbol{S}=\sum_{r=1}^{R}\mathcal{A}_{:,:,r}\circ\boldsymbol{s}_r,
	\label{eq: BTD}
\end{align}
where $\mathcal{A}\in\mathbb{R}^{I_1\times I_2 \times R}$ is the spatial tensor whose frontal slices $\left\{\mathcal{A}_{:,:,r}\right\}_{r=1}^R$ are the spatial maps and $\boldsymbol{S}\in\mathbb{R}^{I_3\times R}$ is the spectral subspace with $\boldsymbol{s}_r$ its $r$th column. Thereupon, the spectral low-rankness of the HSSI is reflected by the low-dimensionality of $\boldsymbol{S}$, while the spatial smoothness is encoded in the horizontal and vertical gradients of the spatial maps. 

Moreover, \cite{11005667} has suggested that by inspecting the stacked spatial tensor $\mathcal{A}$ instead of the spatial maps, we get the enhanced version of spatial priors. To this end, and also inspired by t-CTV, we examine the spatial TV and low-rankness both from the gradient tensors of $\mathcal{A}$, as visualized by the Level-3 part of Fig. \ref{fig: Overall}.

Thus, taking $\mathcal{Z}=\mathcal{A}\times_3\boldsymbol{S}$ into Eq. \eqref{eq: Tensor Formulation}, the HSR problem is reformulated as:
\begin{equation}
	\begin{aligned}
			\text{Find}\,\,\mathcal{A},\,\boldsymbol{S},\quad s.t.,\,\,
		&\mathcal{X}=\mathcal{A}\times_1\boldsymbol{P}_1\times_2\boldsymbol{P}_2\times_3\boldsymbol{S},\\
		&\mathcal{Y}=\mathcal{A}\times_3\boldsymbol{P}_3\boldsymbol{S}.
	\end{aligned}
	\nonumber
\end{equation}
\subsection{Multi-level-prior Tensor Representation}
As it is observed that multi-level-priors are equipped within the HSSI in the BTD paradigm, we now establish a constraint to compactly represent these multi-level-priors. While t-CTV jointly enforces low-rankness and smoothness via a nuclear norm constraint on gradient tensors, it computes the norm uniformly across all modes. This fixed-mode pattern lacks the flexibility to capture the distinct structural priors along different dimensions of HSSI data. To address this limitation, we propose NMS-t-CTV, which adaptively imposes mode-specific low-rank constraints on gradient tensors for more accurate prior characterization.
\begin{definition}[NMS-t-CTV]
	For the spatial tensor $\mathcal{A}$ defined in Eq. \eqref{eq: BTD}, its NMS-t-CTV, denoted as $\left\lVert\mathcal{A}\right\lVert_{\overset{\sim}{\ast},\mathtt{\psi}}$, is defined as:
	\begin{equation}
		\left\lVert\mathcal{A}\right\lVert_{\overset{\sim}{\ast},\mathtt{\psi}}=\frac{1}{2}\sum_{n=1}^{2}\,\left\lVert\mathcal{A}\times_n\boldsymbol{D}_{I_n}\right\lVert_{\overset{3-n}{\ast},\psi}
		\nonumber
	\end{equation}
	\label{Def: NMS-t-CTV}
\end{definition}
In the definition above, the mode along which the LTSVDR is employed with respect to each gradient tensor is designed. Specifically, for the mode-1 gradient tensor, its mode-2 LTSVDR is considered while the mode-1 LTSVDR is imposed on the mode-2 gradient tensor. This facilitates the cross-representation of multi-level-priors in a compact way. Besides, we employ the NTPNN in replacement of the convex TNN to achieve tighter approximation of the tensor rank. By doing so, the proposed CMlpTR model for HSR is as follows:
\begin{equation}
	\begin{aligned}
		\min_{\mathcal{A},\boldsymbol{S}}&\,\,\frac{1}{2}\sum_{n=1}^{2}\,\left\lVert\mathcal{A}\times_n\boldsymbol{D}_{I_n}\right\lVert_{\overset{3-n}{\ast},\psi},\\
		s.t.,\, &\mathcal{X}=\mathcal{A}\times_1\boldsymbol{P}_1\times_2\boldsymbol{P}_2\times_3\boldsymbol{S},\\
		&\mathcal{Y}=\mathcal{A}\times_3\boldsymbol{P}_3\boldsymbol{S}.\\
	\end{aligned}
	\label{eq: CMlpTR}
\end{equation}

Model \eqref{eq: CMlpTR} is then designed to compactly co-represent the aforementioned multi-level-priors with only two-block variables and two constraints. To help justifying its ability, we deliver Prop. \ref{Prop: Mlp} below.
\begin{proposition}
	Regarding Eq. \eqref{eq: BTD}, if $\boldsymbol{S}\in\mathbb{R}^{I_3\times R}$ is semi-unitary, \textit{i.e.}, $\boldsymbol{S}^\ast\boldsymbol{S}=\boldsymbol{I}_R$, there follows: 
	\begin{align}
		rank_{\overset{3-n}{t}}\left(\mathcal{Z}\right)-1\leq rank_{\overset{3-n}{t}}\left(\mathcal{A}\times_n\boldsymbol{D}_{I_n}\right)\leq rank_{\overset{3-n}{t}}\left(\mathcal{Z}\right)\label{eq: Mlp1},\notag\\\quad n=1,2.
	\end{align}
	If we further have that $\mathcal{A}$ is bounded and $\psi(\cdot)$ is endowed with the properties that:
	
	\textbf{P1:} $\psi(\cdot)$ is concave and non-decreasing, with $\psi(0)=0$,
	
	\textbf{P2:} $\psi'(\cdot)$ is convex and non-increasing, with $\underset{x\rightarrow0^+}{\lim}\psi'(x)<\infty$,
	
	\noindent{then the NMS-t-CTV is compatible with the TV norms, \textit{i.e.}, $\exists\,a_1,a_2,a_3,a_4>0,$ such that} 
	\begin{equation}
		\begin{aligned}
			a_1\left\lVert{\mathcal{A}}\right\lVert_{TV}\leq\left\lVert\mathcal{A}\right\lVert_{\overset{\sim}{\ast},\mathtt{\psi}}\leq a_2\left\lVert{\mathcal{A}}\right\lVert_{TV},\\
			a_3\left\lVert{\mathcal{A}}\right\lVert_{ATV}\leq\left\lVert\mathcal{A}\right\lVert_{\overset{\sim}{\ast},\mathtt{\psi}}\leq a_4\left\lVert{\mathcal{A}}\right\lVert_{ATV}.
		\end{aligned}
		\label{eq: Mlp2}
	\end{equation}
	\label{Prop: Mlp}
\end{proposition}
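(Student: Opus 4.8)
The plan is to establish the two displayed claims separately, reducing each to slicewise matrix facts in the Fourier domain underlying the t-SVD of Definition \ref{Def: t-SVD}. For the rank estimate I would split it into two independent observations tied together by the permutation $\mathtt{P}_{3-n}$, which sends the spatial mode $3-n$ to the tube direction, the spectral mode (carrying $\boldsymbol{S}$) to a lateral mode, and the differencing mode $n$ to a frontal mode. First, I would show the semi-unitary factor leaves the relevant tubal rank unchanged: since mode products along distinct modes commute, $\mathtt{P}_{3-n}(\mathcal{Z})$ is $\mathtt{P}_{3-n}(\mathcal{A})$ multiplied by $\boldsymbol{S}$ along the lateral mode, so after the mode-$3$ Fourier transform each frontal slice of the permuted $\mathcal{Z}$ is the corresponding slice of the permuted $\mathcal{A}$ right-multiplied by $\boldsymbol{S}^\ast$; as $\boldsymbol{S}^\ast$ has full row rank $R$ with right inverse $\boldsymbol{S}$ (because $\boldsymbol{S}^\ast\boldsymbol{S}=\boldsymbol{I}_R$), this preserves every slice rank, hence $rank_{\overset{3-n}{t}}(\mathcal{Z})=rank_{\overset{3-n}{t}}(\mathcal{A})$. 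Second, I would show differencing perturbs this rank by at most one: in the same picture each frontal slice of $\mathcal{A}\times_n\boldsymbol{D}_{I_n}$ is $\boldsymbol{D}_{I_n}$ times the corresponding slice of $\mathcal{A}$, and since $\ker\boldsymbol{D}_{I_n}$ is the one-dimensional span of the all-ones vector, the slice rank can only drop and by at most $1$. Taking the maximum over slices---together with the fact that a maximal-rank slice loses at most one---gives $rank_{\overset{3-n}{t}}(\mathcal{A})-1\le rank_{\overset{3-n}{t}}(\mathcal{A}\times_n\boldsymbol{D}_{I_n})\le rank_{\overset{3-n}{t}}(\mathcal{A})$, and substituting the first observation yields the stated rank bound for $n=1,2$.

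For the norm equivalence \eqref{eq: Mlp2}, the decisive ingredient is the boundedness of $\mathcal{A}$. Because $\boldsymbol{D}_{I_n}$ is a bounded operator, each gradient tensor has Frobenius norm below a fixed constant, so all of its t-SVD singular values lie in a compact interval $[0,\sigma_{\max}]$. On this interval I would trap $\psi$ between two linear functions: by the concavity and $\psi(0)=0$ of P1, the graph lies above its chord through the origin, giving $\psi(\sigma)\ge(\psi(\sigma_{\max})/\sigma_{\max})\,\sigma$, while the same concavity together with the finite one-sided derivative in P2 places it below its tangent at the origin, giving $\psi(\sigma)\le\psi'(0^+)\,\sigma$. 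Summing over the singular values then shows that each mode-$(3-n)$ NTPNN is equivalent, up to positive constants, to the convex TNN of the same gradient tensor. It remains to compare, for each gradient tensor, its TNN with its Frobenius and entrywise $\ell_1$ norms; these are genuine norms on a common finite-dimensional space, and both the Frobenius and $\ell_1$ norms are permutation-invariant, so they are mutually equivalent with constants depending only on the slice dimensions. Assembling these equivalences for $n=1,2$ and invoking the elementary two-sided bound $\sqrt{a^2+b^2}\le a+b\le\sqrt{2}\,\sqrt{a^2+b^2}$ to reconcile the $\ell_2$-type combination defining $\left\Vert\cdot\right\Vert_{TV}$ with the additive combination defining $\left\Vert\cdot\right\Vert_{ATV}$ produces the four constants $a_1,\dots,a_4$.

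I expect the permutation bookkeeping in the rank argument to be routine once the Fourier-slice picture is fixed. The main obstacle is the lower bound in \eqref{eq: Mlp2}: unlike the upper bounds it cannot rest on norm equivalence, since the non-convex $\psi$ is not homogeneous, and the chord estimate $\psi(\sigma)\ge c\,\sigma$ holds only on the bounded interval furnished by the boundedness of $\mathcal{A}$. The argument must therefore carry $\sigma_{\max}$ explicitly, verify it is finite and that $\psi(\sigma_{\max})>0$ for any non-trivial surrogate, and track how this constant---alongside the slice dimensions entering the TNN--Frobenius comparison---propagates into $a_1$ and $a_3$.
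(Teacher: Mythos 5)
Your proposal is correct and follows essentially the same route as the paper's proof: the rank bound rests on the identical Fourier-slice picture, rank preservation under the semi-unitary factor, and the nullity-one (Sylvester) estimate for $\boldsymbol{D}_{I_n}$, while the norm compatibility uses the same chord and tangent linearizations of $\psi$ on the compact singular-value interval supplied by the boundedness of $\mathcal{A}$, followed by finite-dimensional norm comparisons. The only cosmetic differences are that the paper substitutes $\mathcal{A}\times_n\boldsymbol{D}_{I_n}=\mathcal{Z}\times_n\boldsymbol{D}_{I_n}\times_3\boldsymbol{S}^\ast$ before bounding slice ranks rather than separating your two observations, and it derives explicit constants (nuclear--Frobenius--$\ell_1$ inequalities plus Parseval) where you invoke abstract equivalence of norms, which suffices since the statement only asserts existence of $a_1,\dots,a_4$.
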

\begin{proof}
	Deferred to Sec. I in Appendix.
\end{proof}

As a result, the co-representation ability of our CMlpTR can be explained as follows.

\noindent{(1) Since $\boldsymbol{S}$ encodes the spectral subspace of HSI/HSSI, the spectral low-rankness of $\mathcal{Z}$ can be modeled by controlling the dimensionality of $\boldsymbol{S}$.}

\noindent{(2) According to Remark 1 in \cite{10078018}, the multi-dimensional mode-\{1,2\} LTR of $\mathcal{A}$ is well captured.}

\noindent{(3) Based on Eq. \eqref{eq: Mlp1}, our CMlpTR can express the spatial low-rankness of the original $\mathcal{Z}$, which together with (1) realizes the co-representation of the multi-dimensional LTR within $\mathcal{Z}$.}

\noindent{(4) Eq. \eqref{eq: Mlp2} asserts that the spatial TV prior of HSI/HSSI, reflected on the spatial tensor $\mathcal{A}$, is well characterized. Moreover, since we have $\left\lVert{\mathcal{A}}\right\lVert_{ATV}$=$\sum_{r=1}^{R}\left\lVert{\mathcal{A}_{;,:,r}}\right\lVert_{ATV}$, the TV prior on the 2D spatial maps of the BTD level is also expressed.}

Besides, it is known from Prop. \ref{Prop: Mlp} that to take (1)-(4) into effect, some additional conditions should be satisfied by $\boldsymbol{S}$ and $\psi(\cdot)$. Thus, we specify the non-convex surrogate as $\psi(x)=\frac{\log(\gamma x+1)}{\log(\gamma+1)}$ \cite{friedman2012fast} and add the semi-unitary constraint on $\boldsymbol{S}$ to obtain:
\begin{equation}
	\begin{aligned}
		\min_{\mathcal{A},\boldsymbol{S}}&\,\,\frac{1}{2}\sum_{n=1}^{2}\,\left\lVert\mathcal{A}\times_n\boldsymbol{D}_{I_n}\right\lVert_{\overset{3-n}{\ast},\psi},\\
		s.t.,\, &\mathcal{X}=\mathcal{A}\times_1\boldsymbol{P}_1\times_2\boldsymbol{P}_2\times_3\boldsymbol{S},\\
		&\mathcal{Y}=\mathcal{A}\times_3\boldsymbol{P}_3\boldsymbol{S},\\
		&\boldsymbol{S}^\ast\boldsymbol{S}=\boldsymbol{I}_R.\\
	\end{aligned}
	\label{eq: CMlpTR completed}
\end{equation}
\subsection{Optimization Algorithm}
To find a solution to Problem \eqref{eq: CMlpTR completed}, it is noticed that auxiliary variables are unavoidable in solving the $\mathcal{A}$-subproblems, leading to the expansion of the problem scale. In \cite{10904006}, the authors demonstrate that the subspace and coefficient can be determined in a sequential manner without hurting the model performance. Thus, we extract the semi-unitary spectral subspace $\boldsymbol{S}$ simply from HSI $\mathcal{X}$ as:
$$\left[\boldsymbol{S},\sim,\sim\right]=\mathsf{svds}\left(\mathcal{X}_{[3]},R\right).$$
Then, the problem can be reduced as:
\begin{equation}
	\begin{aligned}
		\min_{\mathcal{A}}&\,\,\frac{1}{2}\sum_{n=1}^{2}\,\left\lVert\mathcal{A}\times_n\boldsymbol{D}_{I_n}\right\lVert_{\overset{3-n}{\ast},\psi},\\
		s.t.,\, &\mathcal{X}=\mathcal{A}\times_1\boldsymbol{P}_1\times_2\boldsymbol{P}_2\times_3\boldsymbol{S},\\
		&\mathcal{Y}=\mathcal{A}\times_3\boldsymbol{P}_3\boldsymbol{S},
	\end{aligned}
	\nonumber
	\label{eq: CMlpTR reduced}
\end{equation}
to solve it, we introduce some auxiliary variables, \textit{i.e.}, $\mathcal{G}_1$ and $\mathcal{G}_2$, and we obtain:
\begin{equation}
	\begin{aligned}
		&\min_{\mathcal{A},\,\left\{\mathcal{G}_n\right\}_{n=1,2}}\,\,\frac{1}{2}\sum_{n=1}^{2}\,\left\lVert\mathcal{G}_n\right\lVert_{\overset{3-n}{\ast},\psi},\\
		s.t.,\, &\mathcal{X}=\mathcal{A}\times_1\boldsymbol{P}_1\times_2\boldsymbol{P}_2\times_3\boldsymbol{S},\\
		&\mathcal{Y}=\mathcal{A}\times_3\boldsymbol{P}_3\boldsymbol{S},\\
		&\mathcal{G}_n=\mathcal{A}\times_n\boldsymbol{D}_{I_n}, n=1,2.
	\end{aligned}
	\label{eq: CMlpTR prepared}
\end{equation}
Usually, problem like \eqref{eq: CMlpTR prepared} is easily solved by ADMM \cite{Wright-Ma-2022}. However, the $\mathcal{A}$-subproblem suffers from the high-dimensionality of $\mathcal{A}$, compelling us to make adjustment to its ADMM-induced updating steps. The resulting algorithm can be viewed as a composition of LADMM. Below, we elaborate on it and provide its convergence analysis in Thm. \ref{Thm: Convergence}.

Let us start defining the augmented Lagrangian function of \eqref{eq: CMlpTR prepared} as:
\begin{align}
		&\quad L(\mathcal{A},\,\left\{\mathcal{G}_n\right\}_{n=1,2},\,\mathcal{M}_x,\,\mathcal{M}_y,\,\left\{\mathcal{M}_n\right\}_{n=1,2})\notag\\&=\frac{1}{2}\sum_{n=1}^{2}\,\left\lVert\mathcal{G}_n\right\lVert_{\overset{3-n}{\ast},\psi}+\frac{\rho}{2}\Bigg(\Big\lVert\mathcal{X}+\frac{\mathcal{M}_x}{\rho}-\mathcal{A}\times_1\boldsymbol{P}_1\times_2\notag\\&\quad\boldsymbol{P}_2\times_3\boldsymbol{S}\Big\lVert_F^2+\Big\lVert\mathcal{Y}+\frac{\mathcal{M}_y}{\rho}-\mathcal{A}\times_3\boldsymbol{P}_3\boldsymbol{S}\Big\lVert_F^2\notag\\&\quad+\sum_{n=1}^{2}\Big\lVert\mathcal{G}_n+\frac{\mathcal{M}_n}{\rho}-\mathcal{A}\times_n\boldsymbol{D}_{I_n}\Big\lVert_F^2\Bigg)\notag\\&\quad-\frac{1}{2\rho}\Bigg(\Big\lVert\mathcal{M}_x\Big\lVert_F^2+\Big\lVert\mathcal{M}_y\Big\lVert_F^2+\sum_{n=1}^{2}\Big\lVert\mathcal{M}_n\Big\lVert_F^2\Bigg)
	\label{eq: Lag}
\end{align}
where $\left\{\mathcal{M}_x,\,\mathcal{M}_y,\,\left\{\mathcal{M}_n\right\}_{n=1,2}\right\}$ are the Lagrangian multipliers and $\rho$ is the penalty parameter.
\begin{itemize}
	\item $\mathcal{A}$-subproblem: Fixing $\left\{\mathcal{G}_n\right\}_{n=1,2}$, $\mathcal{A}$ is updated via solving
	\begin{equation}
		\begin{aligned}
			\min_{\mathcal{A}}\,L_1\left(\mathcal{A}\right)&\triangleq\Big\lVert\mathcal{X}+\frac{\mathcal{M}_x}{\rho}-\mathcal{A}\times_1\boldsymbol{P}_1\times_2\boldsymbol{P}_2\times_3\boldsymbol{S}\Big\lVert_F^2\\&\quad+\Big\lVert\mathcal{Y}+\frac{\mathcal{M}_y}{\rho}-\mathcal{A}\times_3\boldsymbol{P}_3\boldsymbol{S}\Big\lVert_F^2\\&\quad+\sum_{n=1}^{2}\Big\lVert\mathcal{G}_n+\frac{\mathcal{M}_n}{\rho}-\mathcal{A}\times_n\boldsymbol{D}_{I_n}\Big\lVert_F^2.
		\end{aligned}
		\label{eq: A-sub}
	\end{equation}
	Typically, Eq. \eqref{eq: A-sub} can be easily solved by setting its gradient to zero. However, the high-dimensionality of $\mathcal{A}$ is a curse for doing so. Thus, inspired by LADMM, we approximate its solution by a simple one-step gradient descend, \textit{i.e.}
	\begin{equation}
	    \mathcal{A}\leftarrow\mathcal{A}-\frac{\bigtriangledown L_1\left(\mathcal{A}\right)}{\tau}
	    \label{eq: A-update}
	\end{equation}
	in which the gradient is calculated as
	\begin{equation}
		\begin{aligned}
			&\quad\bigtriangledown  L_1\left(\mathcal{A}\right)\\&=2\Bigg(\mathcal{A}\times_1\boldsymbol{P}_1^*\boldsymbol{P}_1\times_2\boldsymbol{P}_2^*\boldsymbol{P}_2+\mathcal{A}\times_3\boldsymbol{S}^*\boldsymbol{P}_3^*\boldsymbol{P}_3\boldsymbol{S}\\&\quad+\sum_{n=1}^{2}\mathcal{A}\times_n\boldsymbol{D}_{I_n}^*\boldsymbol{D}_{I_n}-\left(\mathcal{X}+\frac{\mathcal{M}_x}{\rho}\right)\times_1\boldsymbol{P}_1^*\times_2\boldsymbol{P}_2^*\\&\quad\times_3\boldsymbol{S}^*-\left(\mathcal{Y}+\frac{\mathcal{M}_y}{\rho}\right)\times_3\boldsymbol{S}^*\boldsymbol{P}_3^*-\\&\quad\sum_{n=1}^{2}\left(\mathcal{G}_n+\frac{\mathcal{M}_n}{\rho}\right)\times_n\boldsymbol{D}_{I_n}^*\Bigg)
		\end{aligned}
		\label{eq: A-Gradient}
	\end{equation}
	and the updating stride $\tau$ should satisfy the Lipschitz continuity condition that
	\begin{equation}
		\Big\Vert \bigtriangledown L_1\left(\mathcal{A}\right)- \bigtriangledown L_1\left(\mathcal{A}'\right)\Big\Vert_F\leq \tau\Big\lVert \mathcal{A}- \mathcal{A}'\Big\Vert_F
		\label{eq: Lipschitz}
	\end{equation}
	for any $\mathcal{A},\mathcal{A}'$. To this end, we set
	\begin{equation}
		\begin{aligned}
			\tau=2\Big(\left\|\boldsymbol{P}_1\right\|^2\left\|\boldsymbol{P}_2\right\|^2+\left\|\boldsymbol{P}_3\boldsymbol{S}\right\|^2+\left\|\boldsymbol{P}_1\right\|^2+\left\|\boldsymbol{P}_2\right\|^2\Big).
		\end{aligned}
		\label{eq: Lipschitz Variable}
	\end{equation}
	\item $\mathcal{G}_n$-subproblem, $n=1,2$: Fixing $\mathcal{A}$, $\mathcal{G}_n$ is updated by solving
	\begin{equation}
		\begin{aligned}
			\min_{\mathcal{G}_n}\,L_{n+1}\left(\mathcal{G}_n\right)&\triangleq\left\lVert\mathcal{G}_n\right\lVert_{\overset{3-n}{\ast},\psi}\\&+\rho\Big\lVert\mathcal{G}_n+\frac{\mathcal{M}_n}{\rho}-\mathcal{A}\times_n\boldsymbol{D}_{I_n}\Big\lVert_F^2
		\end{aligned}
		\label{eq: G-sub}
	\end{equation}
	which is equivalent to
	\begin{equation}
		\begin{aligned}
			&\min_{\mathcal{G}_n}\,L_{n+1}\left(\mathcal{G}_n\right)\triangleq\left\lVert\mathtt{P}_{3-n}\left(\mathcal{G}_n\right)\right\lVert_{\overset{n}{\ast},\psi}\\&+\rho\Big\lVert\mathtt{P}_{3-n}\left(\mathcal{G}_n\right)+\mathtt{P}_{3-n}\left(\frac{\mathcal{M}_n}{\rho}-\mathcal{A}\times_n\boldsymbol{D}_{I_n}\right)\Big\lVert_F^2.
		\end{aligned}
		\nonumber
	\end{equation}
	By Thm. 2 in \cite{11005667}, we firstly obtain the t-SVD of $\mathtt{P}_{3-n}\left(\mathcal{A}\times_n\boldsymbol{D}_{I_n}-\frac{\mathcal{M}_n}{\rho}\right)$ as $\mathcal{U}\star\mathcal{S}\star\mathcal{V}^\ast$, then the solution to Eq. \eqref{eq: G-sub} is $\mathtt{P}_{3-n}^{-1}\left(\mathcal{U}\star\mathcal{T}\star\mathcal{V}^\ast\right)$ where $\mathcal{\tau}$ is an f-diagonal tensor with
	\begin{equation}
		\begin{aligned}
			&\overline{\mathcal{T}}[{j,j,i}]=\arg\min_{x\geq 0}\,\psi(x)+\rho(x-\overline{\mathcal{S}}[{j,j,i}])^2\\
			&i=1,2,\cdots,I_{3-n},\,j=1,2,\cdots,\min\left\{I_n,R\right\}
		\end{aligned}
		\nonumber
	\end{equation}
	which is solved via the generalized accelerated iterative \cite{9916142}, and the remaining elements being zero.
	\item Auxiliary variables: Following the convention of the ADMM framework, we update the multipliers as
	\begin{equation}
		\begin{aligned}
			\mathcal{M}_x&\leftarrow\mathcal{M}_x+\rho(\mathcal{X}-\mathcal{A}\times_1\boldsymbol{P}_1\times_2\boldsymbol{P}_2\times_3\boldsymbol{S}),\\
			\mathcal{M}_y&\leftarrow\mathcal{M}_y+\rho(\mathcal{Y}-\mathcal{A}\times_3\boldsymbol{P}_3\boldsymbol{S}),\\
			\mathcal{M}_n&\leftarrow\mathcal{M}_n+\rho(\mathcal{G}_n-\mathcal{A}\times_n\boldsymbol{D}_{I_n}),\,n=1,2.\\
		\end{aligned}
	   \label{eq: Multipliers Update}
	\end{equation}
	To help forcing convergence, we update the penalty parameter as
	$\rho\leftarrow\nu\rho$ where $\nu>1$.
\end{itemize}
\subsection{Complexity and Convergence Analysis}
The overall optimization algorithm is summarized in Alg. \ref{alg: CmlpTR}.
\begin{algorithm}[htbp]
	\caption{CMlpTR Optimization Algorithm}
	\label{alg: CmlpTR}
	\begin{algorithmic}[1]
		\REQUIRE The observed HSI $\mathcal{X}$, the MSI $\mathcal{Y}$, the spatial-degradation matrices
		$\boldsymbol{P}_1$ and $\boldsymbol{P}_2$, the spectral-degradation matrix
		$\boldsymbol{P}_3$, the subspace dimensionality $R$, the surrogate parameter $\gamma$.
		\ENSURE Estimated HSSI 
		$\widehat{\mathcal{Z}}$.
		\STATE Initialize penalty parameters $\rho>0$, $\nu>1$, the convergence tolerance $\epsilon>0$, the difference matrices $\boldsymbol{D}_{I_1}$ and $\boldsymbol{D}_{I_2}$.
		\STATE Set $\mathcal{A},\mathcal{G}_1,\mathcal{G}_2,\mathcal{M}_1,\mathcal{M}_2,\mathcal{M}_x$ and $\mathcal{M}_y$
		to $\mathbf{0}$.
		\STATE $\left[\boldsymbol{S},\sim,\sim\right]=\mathsf{svds}\left(\mathcal{X}_{[3]},R\right)$.
		\WHILE{$\max\Bigg\{\Big\Vert\mathcal{X}-\mathcal{A}\times_1\boldsymbol{P}_1\times_2\boldsymbol{P}_2\times_3\boldsymbol{S}\Big\Vert_F,\Big\Vert\mathcal{Y}-\mathcal{A}\times_3\boldsymbol{P}_3\boldsymbol{S}\Big\Vert_F,\left\{\Big\Vert\mathcal{G}_n-\mathcal{A}\times_n\boldsymbol{D}_{I_n}\Big\Vert_F\right\}_{n=1}^2\Bigg\}>\epsilon$}
		\STATE Update $\mathcal{A}$ via Eq. \eqref{eq: A-update}.
		\FOR{$n=1:2$}
		\STATE Update $\mathcal{G}_n$ via Eq. \eqref{eq: G-sub}.
		\ENDFOR
		\STATE Update $\mathcal{M}_1,\mathcal{M}_2,\mathcal{M}_x$ and $\mathcal{M}_y$ via Eq. \eqref{eq: Multipliers Update}.
		\STATE $\rho\leftarrow \nu\rho$.
		\ENDWHILE
		\STATE $\widehat{\mathcal{Z}}\leftarrow\mathcal{A}\times_3\boldsymbol{S}.$
	\end{algorithmic}
\end{algorithm}
In the updating of the spatial tensor $\mathcal{A}$, the main complexity lies in the calculation of the gradient, \textit{i.e.}, Eq. \eqref{eq: A-Gradient}, whose complexity is
\begin{align*}
	C_1\triangleq\mathcal{O}\Big(2*(I_1^2I_2R+I_1I_2^2R)+R^2I_1I_2+i_1i_2I_1I_3\\+i_2I_1I_2I_3+I_1I_2I_3R+i_3I_1I_2R+(I_1-1)I_1I_2R\\+I_1(I_2-1)I_2R\Big)\\=\mathcal{O}\Big(i_1i_2I_1^2I_2^2I_3R^2\Big).
\end{align*}
To update the auxiliary variables $\mathcal{G}_1$ and $\mathcal{G}_2$, their t-SVDs drive the complexity, which in total is
\begin{align*}
	C_2\triangleq\mathcal{O}\Big((I_1(I_2-1)+I_2(I_1-1))R^2\Big)\\=\mathcal{O}\Big(I_1I_2R^2\Big).
\end{align*}
The last part of the complexity comes from the tensor-matrix product in the updating of multipliers, \textit{i.e.}, Eq. \eqref{eq: Multipliers Update}:
\begin{align*}
	C_3\triangleq\mathcal{O}\Big(I_1I_2R(i_1+i_2+i_3+I_1-1+I_2-1+I_3)\Big)\\=\mathcal{O}\Big(i_1i_2i_3I_1^2I_2^2I_3R^2\Big).
\end{align*}
Therefore, the overall complexity of Alg. \ref{alg: CmlpTR} is
\begin{align*}
	C_1+C_2+C_3=\mathcal{O}\Big(i_1i_2i_3I_1^2I_2^2I_3R^2\Big).
\end{align*}

\begin{figure}[b]
	\centering
	\setlength{\tabcolsep}{0.2mm}
	\begin{tabular}{m{0.33\linewidth}m{0.33\linewidth}m{0.33\linewidth}}
		\includegraphics[width=\linewidth]{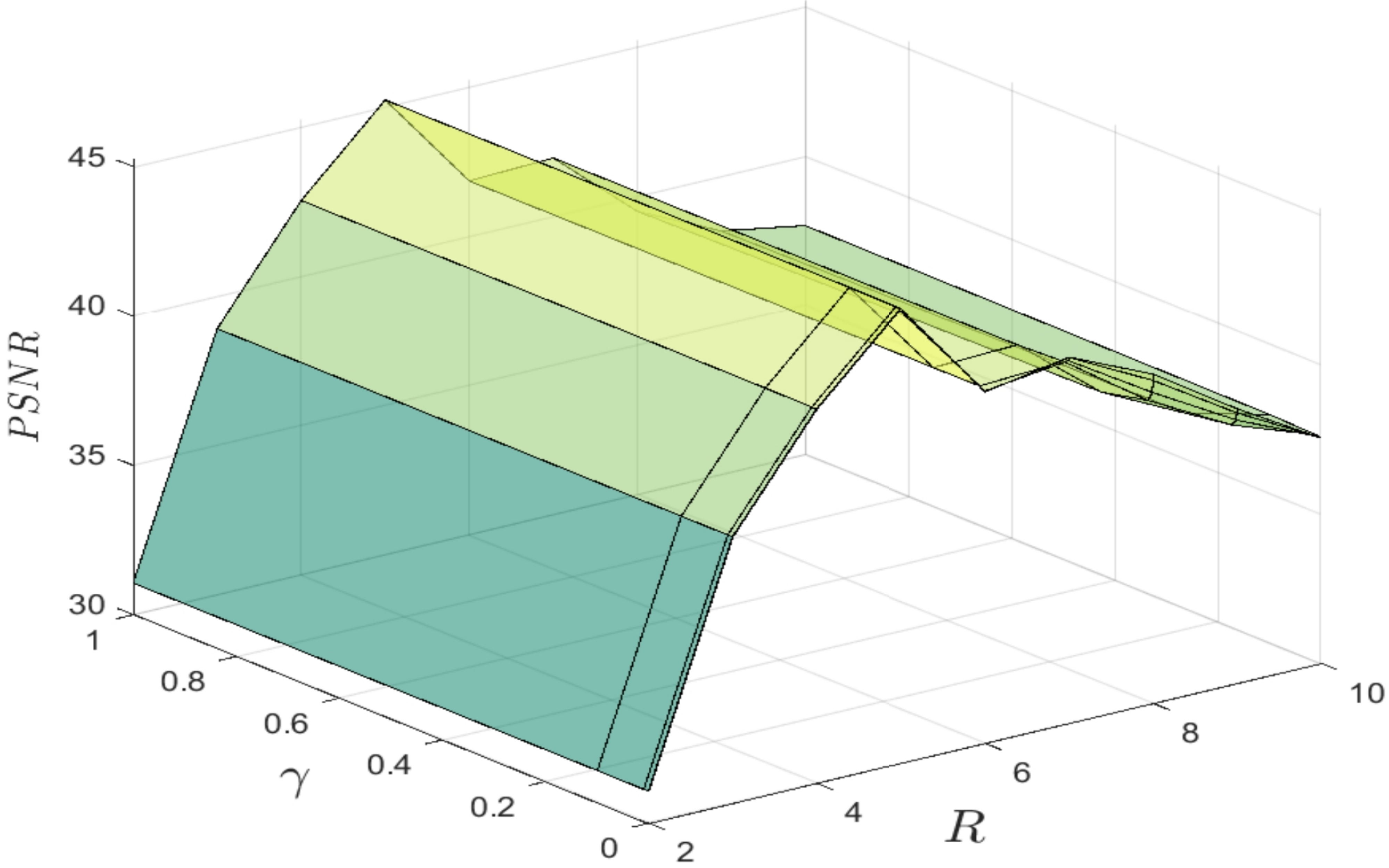}&
		\includegraphics[width=\linewidth]{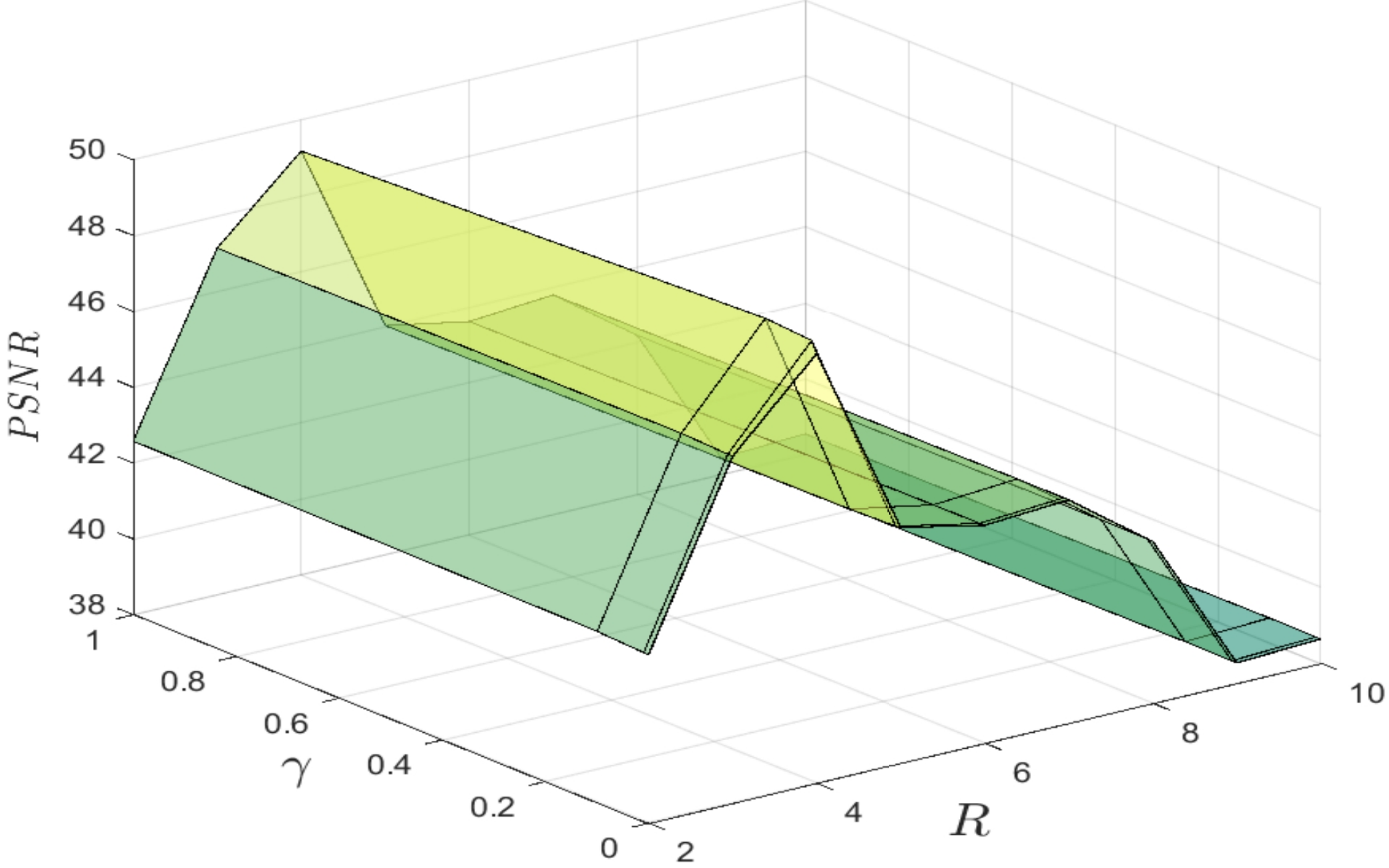}  &
		\includegraphics[width=\linewidth]{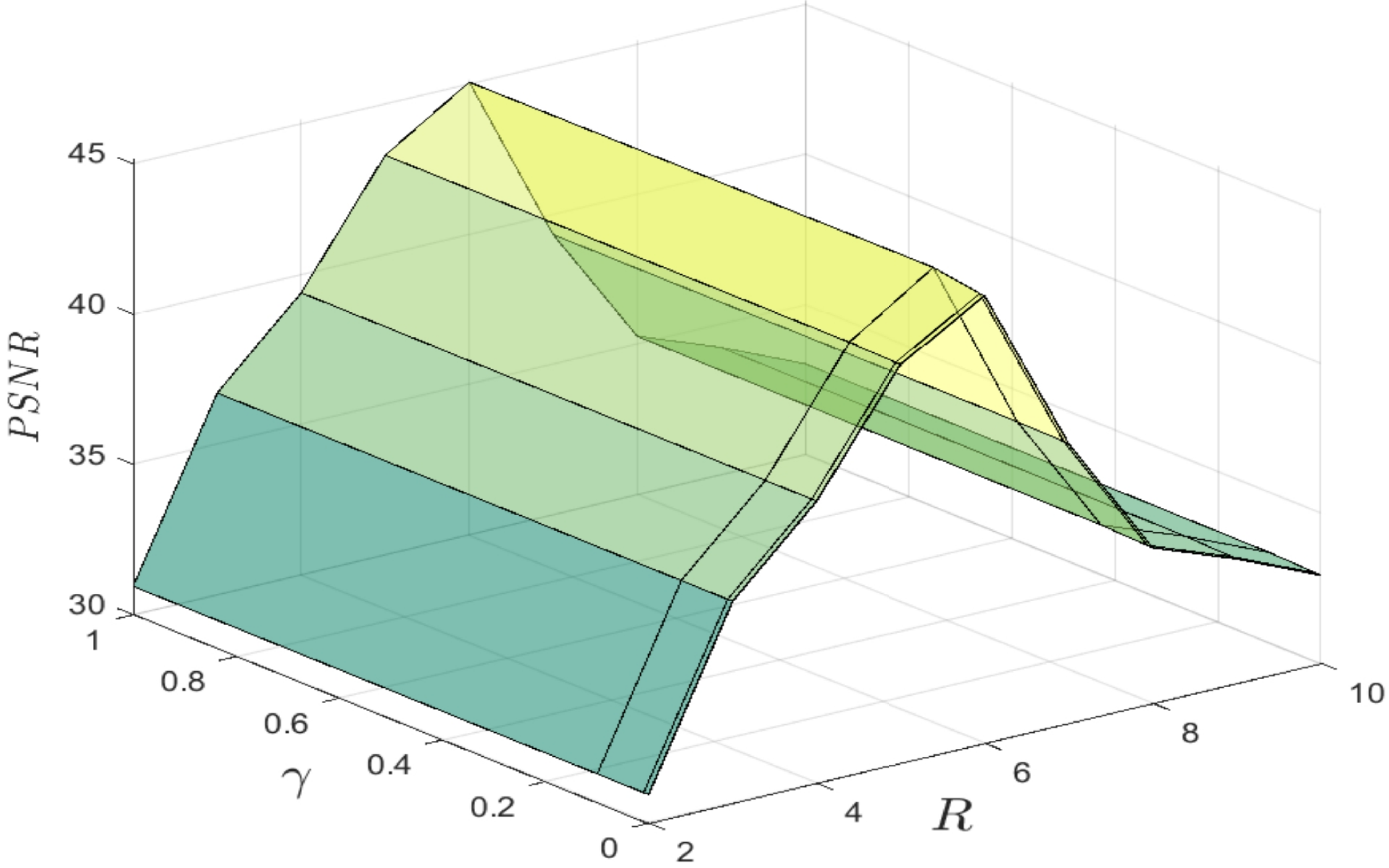}   \\
		\multicolumn{1}{c}{(a)}
		&\multicolumn{1}{c}{(b)}&\multicolumn{1}{c}{(c)}
	\end{tabular}
	\caption{\label{fig: Parameter Tuning} {Hyperparameter sensitivity reflected by the PSNR results for non-blind fusion on (a) URBAN; (b) Houston; (c) WDC.}}
\end{figure}
Alg. \ref{alg: CmlpTR} does not fall into any category of an existing convergence-guaranteed optimization paradigm. Its rough structure resembles the ADMM framework while the solution to some subproblems is customized following the LADMM. Besides, the employment of non-convex surrogate brings more challenges to its convergence issue. Hence, we provide a rigorous convergence analysis below.
\begin{theorem}
	Denoting the sequence generated by Alg. \ref{alg: CmlpTR} as: 
	\begin{equation}
		\begin{aligned}
			\mathfrak{S}_t=\left\{{\cal A}_t,\left\{\mathcal{G}_{n,t}\right\}_{n=1,2},\mathcal{M}_{x,t},\mathcal{M}_{y,t},\left\{\mathcal{M}_{n,t}\right\}_{n=1,2}\right\}_{t\in\mathbb{N}},
		\end{aligned}
		\nonumber
	\end{equation}
	given the boundedness of $\left\{\mathcal{M}_{x,t},\mathcal{M}_{y,t}\right\}$, it follows that:
	
	(1) $\left\{{\cal A}_t,\left\{\mathcal{G}_{n,t}\right\}_{n=1,2}\right\}_{t\in\mathbb{N}}$ is Cauchy sequence,
	
	(2) $\mathfrak{S}_t$ is bounded,
	
	(3) Any accumulation point of $\mathfrak{S}_t$, denoted as
	\begin{equation}
		\begin{aligned}
			\mathfrak{S}_*=\left\{{\cal A}_*,\left\{\mathcal{G}_{n,*}\right\}_{n=1,2},\mathcal{M}_{x,*},\mathcal{M}_{y,*},\left\{\mathcal{M}_{n,*}\right\}_{n=1,2}\right\},
		\end{aligned}
		\nonumber
	\end{equation}
	is a KKT point, \textit{i.e.,} it  satisfies
		\begin{equation}
			\left\{  
			\begin{aligned}  
				&\mathcal{M}_{n,*}\in-\frac{1}{2}\partial\left\lVert\mathcal{G}_{n,\ast}\right\lVert_{\overset{3-n}{\ast},\psi},\,n=1,2,   \\  
				&\bigtriangledown L_1\left(\mathcal{A}_\ast\right)=0,\\
				&\mathcal{X}=\mathcal{A}_*\times_1\boldsymbol{P}_1\times_2\boldsymbol{P}_2\times_3\boldsymbol{S},\\
				&\mathcal{Y}=\mathcal{A}_*\times_3\boldsymbol{P}_3\boldsymbol{S},\\
				&\mathcal{G}_{n,*}=\mathcal{A}_*\times_n\boldsymbol{D}_{I_n}, n=1,2.   
			\end{aligned}  
			\right.	
			\nonumber		
		\end{equation} 
\label{Thm: Convergence}
\end{theorem}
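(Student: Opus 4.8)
The plan is to exploit two structural features of Algorithm \ref{alg: CmlpTR}: the first-order optimality of the proximal $\mathcal{G}_n$-subproblems, and the geometric divergence $\rho_t=\nu^{t}\rho_0\to\infty$ of the penalty parameter. I would organize the argument in the order (i) boundedness of the constraint multipliers $\mathcal{M}_{n,t}$, (ii) the Cauchy property of $\{\mathcal{A}_t,\mathcal{G}_{n,t}\}$ (claim 1), (iii) boundedness of the whole sequence (claim 2), and (iv) KKT-optimality of any accumulation point (claim 3). The hypothesis that $\{\mathcal{M}_{x,t},\mathcal{M}_{y,t}\}$ is bounded is used throughout, while boundedness of $\mathcal{M}_{n,t}$ is derived.

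Step (i) is the crux, and the place where properties P1--P2 of $\psi$ enter. Writing the limiting-subdifferential stationarity of subproblem \eqref{eq: G-sub} and substituting the multiplier update \eqref{eq: Multipliers Update} in the form $\rho_t(\mathcal{G}_{n,t+1}-\mathcal{A}_{t+1}\times_n\boldsymbol{D}_{I_n})=\mathcal{M}_{n,t+1}-\mathcal{M}_{n,t}$, the penalty terms telescope and one obtains the exact relation $\mathcal{M}_{n,t+1}\in-\frac{1}{2}\partial\lVert\mathcal{G}_{n,t+1}\rVert_{\overset{3-n}{\ast},\psi}$ for every $t$, which is already the first KKT inclusion at the limit. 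Because $\psi'$ is non-increasing with $\lim_{x\to0^+}\psi'(x)<\infty$ (P2), the generalized singular-value thresholding defining this subdifferential has spectral magnitude at most $\psi'(0^+)$, so $\lVert\mathcal{M}_{n,t}\rVert_F$ is bounded by a purely dimensional constant. Combined with the assumed boundedness of $\{\mathcal{M}_{x,t},\mathcal{M}_{y,t}\}$, all multipliers are then uniformly bounded.

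Given bounded multipliers, step (ii) follows almost mechanically from $\rho_t\to\infty$. Rearranging each multiplier update shows every primal residual is $O(1/\rho_{t-1})$; for instance $\mathcal{A}_t\times_1\boldsymbol{P}_1\times_2\boldsymbol{P}_2\times_3\boldsymbol{S}-\mathcal{X}=-(\mathcal{M}_{x,t}-\mathcal{M}_{x,t-1})/\rho_{t-1}$, and analogously for the $\mathcal{Y}$- and $\mathcal{G}_n$-residuals. Feeding these into \eqref{eq: A-Gradient} (whose $\mathcal{M}/\rho_t$ terms are themselves $O(1/\rho_t)$) gives $\lVert\bigtriangledown L_1(\mathcal{A}_t)\rVert_F=O(1/\rho_{t-1})$, hence by update \eqref{eq: A-update} $\lVert\mathcal{A}_{t+1}-\mathcal{A}_t\rVert_F=O(\nu^{-t})$; the same rate for $\lVert\mathcal{G}_{n,t+1}-\mathcal{G}_{n,t}\rVert_F$ follows from $\mathcal{G}_{n,t+1}=\mathcal{A}_{t+1}\times_n\boldsymbol{D}_{I_n}+(\mathcal{M}_{n,t+1}-\mathcal{M}_{n,t})/\rho_t$ and the bound on $\mathcal{A}_{t+1}-\mathcal{A}_t$. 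Since $\sum_t\nu^{-t}<\infty$, the successive differences are summable, so $\{\mathcal{A}_t,\mathcal{G}_{n,t}\}$ is Cauchy, proving (1); being Cauchy in a finite-dimensional space it is bounded, which with step (i) proves (2).

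For step (iv)/(3) I would pass to the limit $\mathfrak{S}_*$ along the now-convergent sequence. The $O(1/\rho_{t-1})$ residual bounds force $\mathcal{X}=\mathcal{A}_*\times_1\boldsymbol{P}_1\times_2\boldsymbol{P}_2\times_3\boldsymbol{S}$, $\mathcal{Y}=\mathcal{A}_*\times_3\boldsymbol{P}_3\boldsymbol{S}$, and $\mathcal{G}_{n,*}=\mathcal{A}_*\times_n\boldsymbol{D}_{I_n}$ by continuity; since $\mathcal{A}_{t+1}-\mathcal{A}_t\to0$ gives $\bigtriangledown L_1(\mathcal{A}_t)\to0$, and the $\mathcal{M}_{\cdot,t}/\rho_t$ terms in \eqref{eq: A-Gradient} vanish, continuity of the gradient in its arguments yields $\bigtriangledown L_1(\mathcal{A}_*)=0$; finally, passing to the limit in $\mathcal{M}_{n,t+1}\in-\frac{1}{2}\partial\lVert\mathcal{G}_{n,t+1}\rVert_{\overset{3-n}{\ast},\psi}$ via the outer semicontinuity (closed graph) of the limiting subdifferential of the continuous NTPNN gives $\mathcal{M}_{n,*}\in-\frac{1}{2}\partial\lVert\mathcal{G}_{n,*}\rVert_{\overset{3-n}{\ast},\psi}$, completing the KKT system. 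I expect the main obstacle to be step (i): rigorously deriving the closed-form subdifferential inclusion for the non-convex NTPNN through the generalized singular-value thresholding, and certifying that P1--P2 make that subdifferential uniformly norm-bounded; once penalty divergence is combined with bounded multipliers, the remaining steps are comparatively routine.
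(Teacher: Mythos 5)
Your proposal is correct and follows the same overall architecture as the paper's proof: first bound $\mathcal{M}_{n,t}$ via the inclusion $\mathcal{M}_{n,t+1}\in-\frac{1}{2}\partial\lVert\mathcal{G}_{n,t+1}\rVert_{\overset{3-n}{\ast},\psi}$ (obtained from $\mathcal{G}_n$-stationarity combined with the multiplier update, with \textbf{P1}--\textbf{P2} making the subdifferential uniformly bounded, exactly as the paper does by invoking Lemma A1 of \cite{9340243}); then use the geometric growth $\rho_t=\nu^t\rho_0$ to get summable successive differences; then pass to the limit for the KKT system. The one place you genuinely deviate is the Cauchy property of $\{\mathcal{A}_t\}$. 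The paper substitutes the multiplier updates into $\tau\left(\mathcal{A}_{t+1}-\mathcal{A}_t\right)=-\bigtriangledown L_1\left(\mathcal{A}_t\right)$ so as to re-express everything through the \emph{new} multipliers, collects the resulting $\left(\mathcal{A}_{t+1}-\mathcal{A}_t\right)$ terms into the linear map $\mathtt{H}(\mathcal{A})=\tau\mathcal{A}-2\left(\mathcal{A}\times_1\boldsymbol{P}_1^*\boldsymbol{P}_1\times_2\boldsymbol{P}_2^*\boldsymbol{P}_2+\mathcal{A}\times_3\boldsymbol{S}^*\boldsymbol{P}_3^*\boldsymbol{P}_3\boldsymbol{S}\right)$, and must then argue that $\mathtt{H}$ is invertible (which follows from the choice of $\tau$ in Eq. \eqref{eq: Lipschitz Variable}) before concluding $\lVert\mathcal{A}_{t+1}-\mathcal{A}_t\rVert_F=O(1/\rho_t)$. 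You instead write $\bigtriangledown L_1\left(\mathcal{A}_t\right)$ directly in residual form and note that each primal residual at step $t$ equals a multiplier difference divided by $\rho_{t-1}$, giving $\lVert\bigtriangledown L_1\left(\mathcal{A}_t\right)\rVert_F=O(\nu^{-t})$ with no auxiliary operator at all; the regrouping is exact because $\boldsymbol{S}^*\boldsymbol{S}=\boldsymbol{I}_R$, so this is a more elementary route that sidesteps the invertibility argument entirely while reaching the same geometric decay. Your remaining steps — the $\mathcal{G}_{n,t}$ Cauchy property from the multiplier update, boundedness, feasibility at the limit from $O(1/\rho_{t-1})$ residuals, vanishing of the $\mathcal{M}/\rho_t$ terms for $\bigtriangledown L_1\left(\mathcal{A}_\ast\right)=0$, and outer semicontinuity of the subdifferential for the limiting inclusion — match the paper's reasoning point for point.
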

\begin{proof}
	Deferred to Sec. II in Appendix.
\end{proof}
\section{Experiments}
\label{sec:Experiments}
In this section, we empirically validate the proposed method's effectiveness for the HSR task. Both non-blind and blind HSR are tested, among which the degradation matrices for blind HSR are estimated via \cite{7000523}. Unsupervised state-of-the-art methods, including model-based Hysure \cite{7000523}, LTTR \cite{8603806}, LRTA \cite{9328229}, ASLA \cite{10149108}, GTNN \cite{10522984} and data-driven SURE \cite{9924190}, ZSL \cite{10137388} are involved for comparison. All model-based methods are implemented in MATLAB
R2020b on Intel\textsuperscript{\textregistered}
Core\textsuperscript{TM} i5-1135G7 CPU @ 2.40 GHz with 16-GB RAM, while the data-driven ones are implemented by Python 3.11.5 with Pytorch 1.10.2 on an NVIDIA RTX 3090 GPU with 24-GB RAM. The performance is measured in terms of peak signal-to-noise ratio (PSNR),
{relative dimensionless global error synthesis ({ERGAS})}, spectral angle mapper (SAM), and structural similarity metric (SSIM) {\cite{WANG2025103166}}. The higher the PSNR and SSIM, the better the performance. The lower the {ERGAS} and SAM, the higher the performance. Ideal values are +$\infty$, 0, 0, and 1 for PSNR, {ERGAS}, SAM, and SSIM, respectively.
\begin{figure}[htbp!]
	\centering
	\setlength{\tabcolsep}{0.2mm}
	\begin{tabular}{m{0.2\linewidth}m{0.2\linewidth}m{0.2\linewidth}m{0.2\linewidth}m{0.2\linewidth}}
		\includegraphics[width=\linewidth]{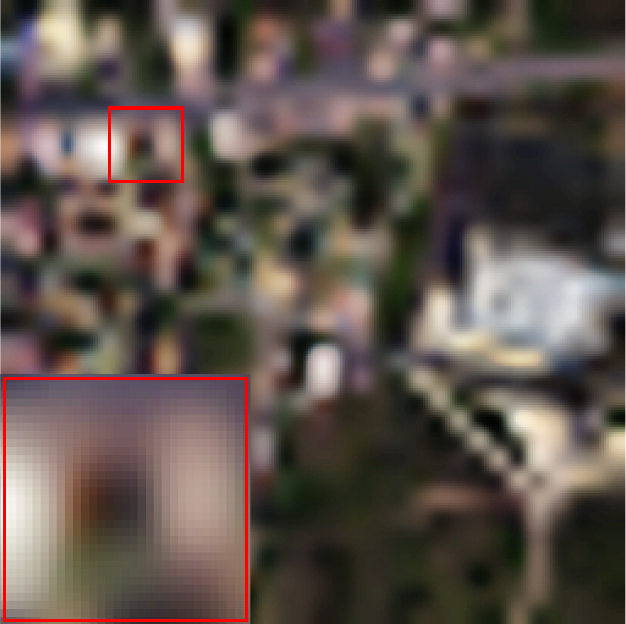}&
		\includegraphics[width=\linewidth]{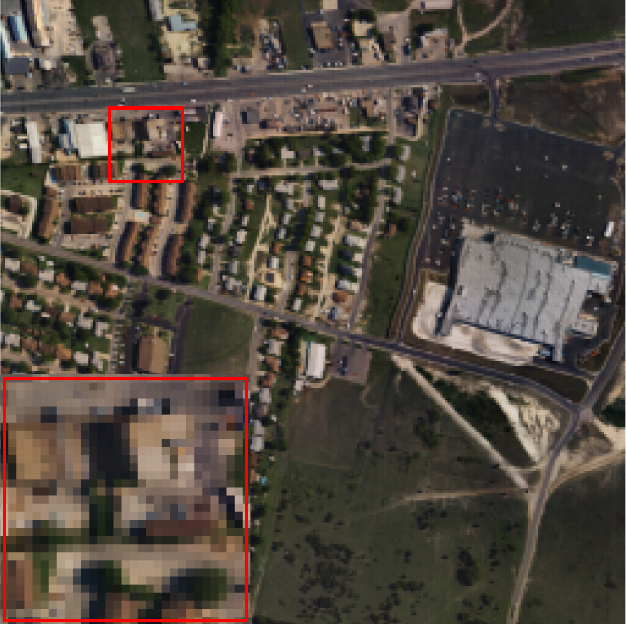}  &
		\includegraphics[width=\linewidth]{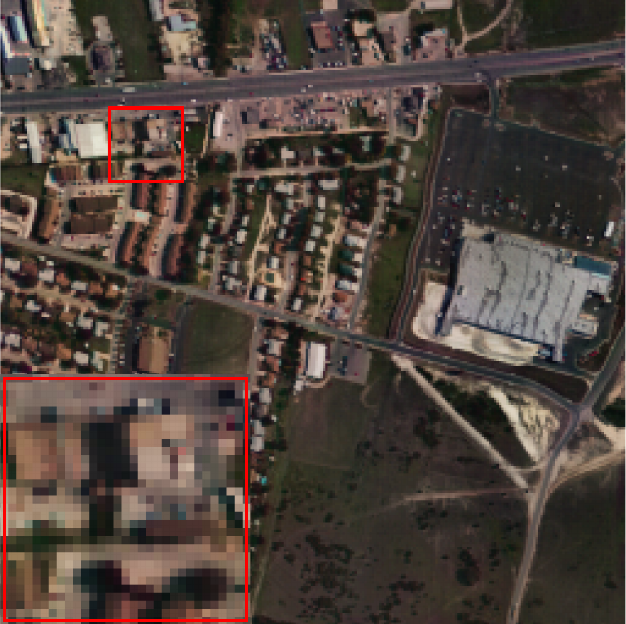}    &
		\includegraphics[width=\linewidth]{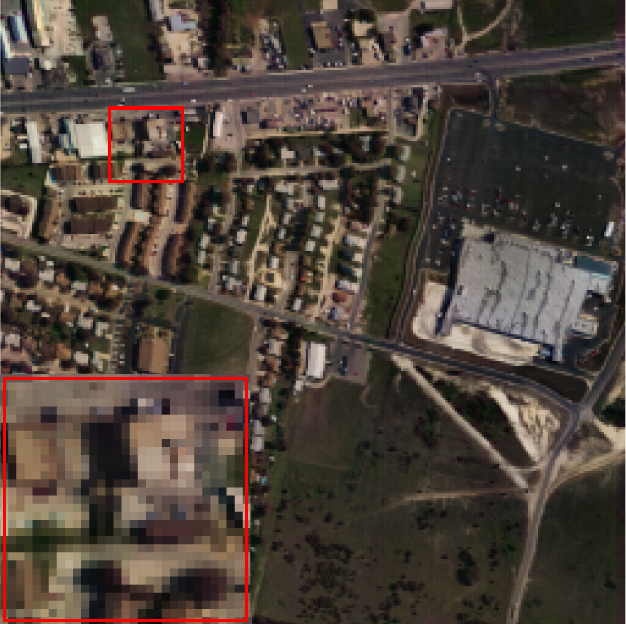}   &
		\includegraphics[width=\linewidth]{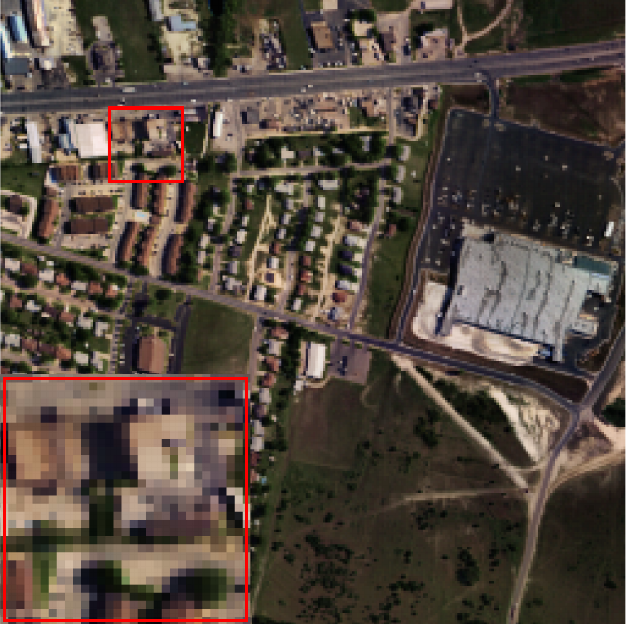}  \\
		\includegraphics[width=\linewidth]{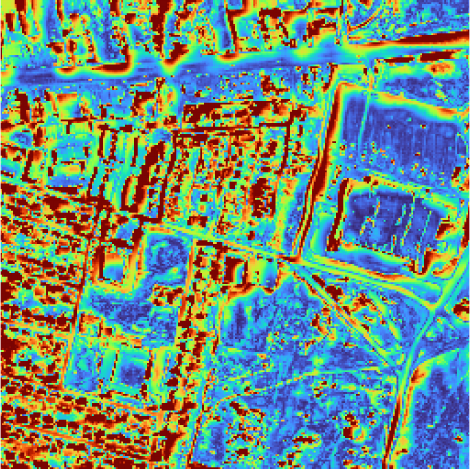}&
		\includegraphics[width=\linewidth]{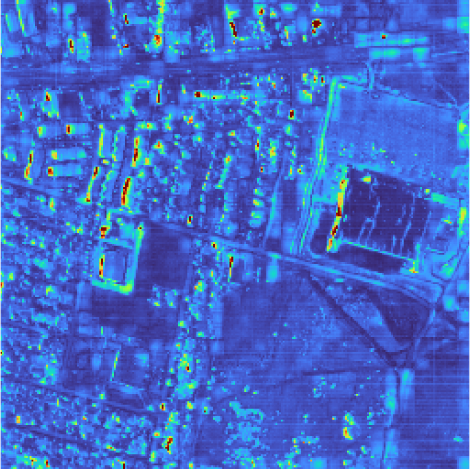}  &
		\includegraphics[width=\linewidth]{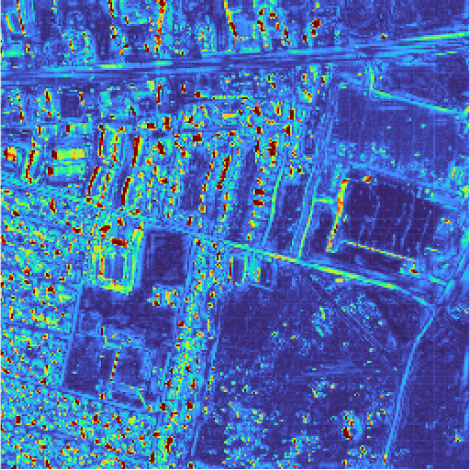}    &
		\includegraphics[width=\linewidth]{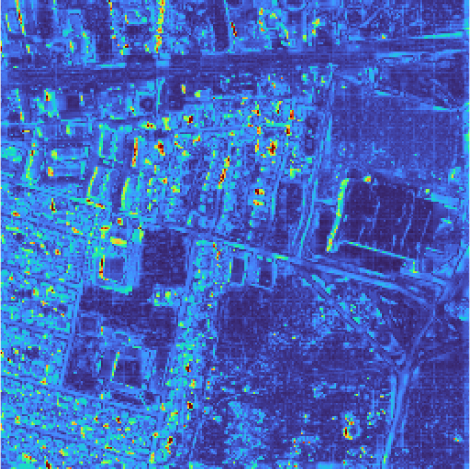}   &
		\includegraphics[width=\linewidth]{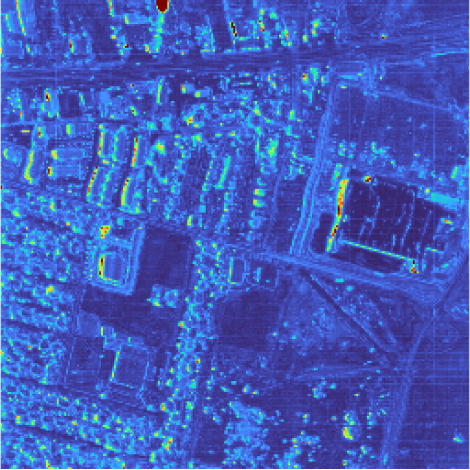} \\
		\multicolumn{1}{c}{\footnotesize{Bicubic}}
		&\multicolumn{1}{c}{\footnotesize{Hysure}}
		& \multicolumn{1}{c}{\footnotesize{LTTR}}
		& \multicolumn{1}{c}{\footnotesize{LRTA}}
		& \multicolumn{1}{c}{\footnotesize{SURE}}\\
		\includegraphics[width=\linewidth]{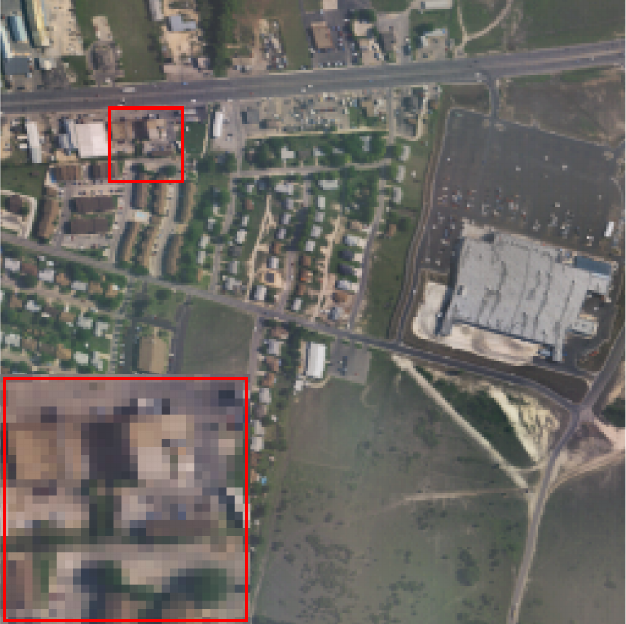}  &
		\includegraphics[width=\linewidth]{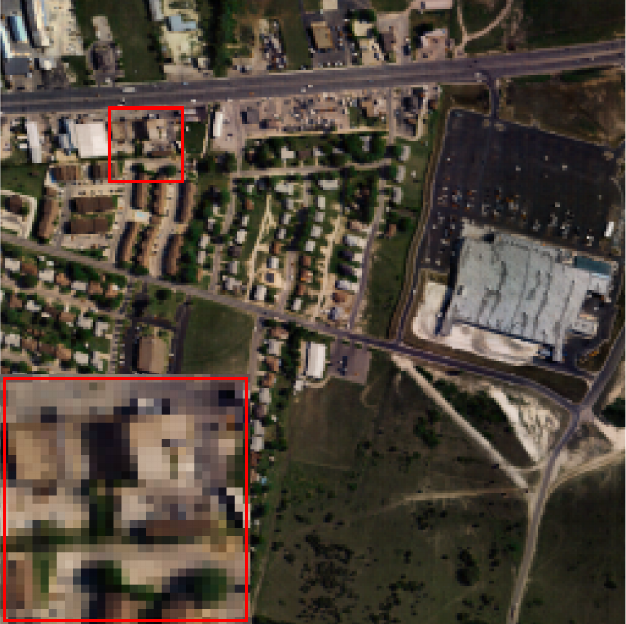}  &
		\includegraphics[width=\linewidth]{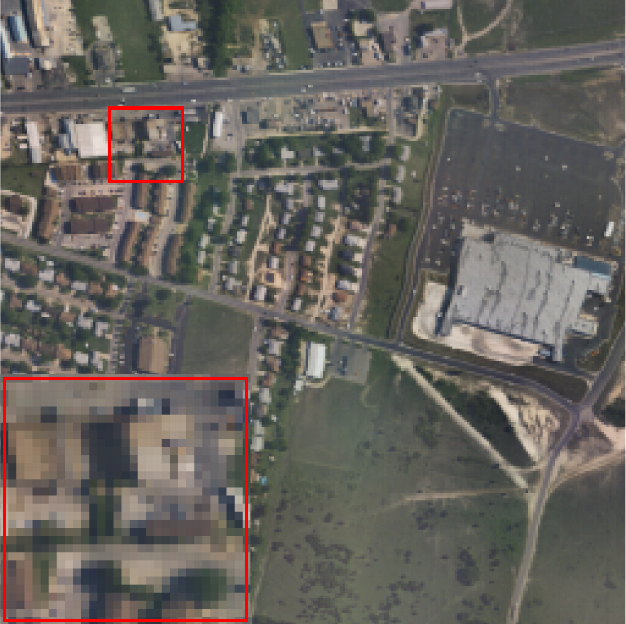}    &
		\includegraphics[width=\linewidth]{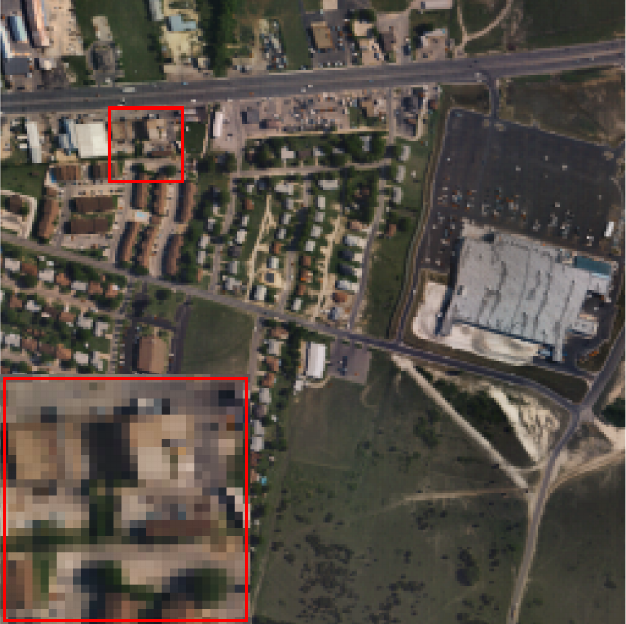}   &
		\includegraphics[width=\linewidth]{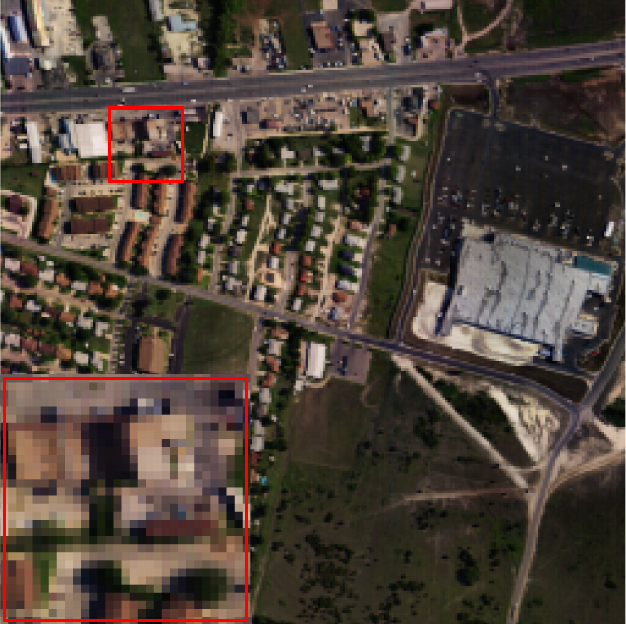}  \\
		\includegraphics[width=\linewidth]{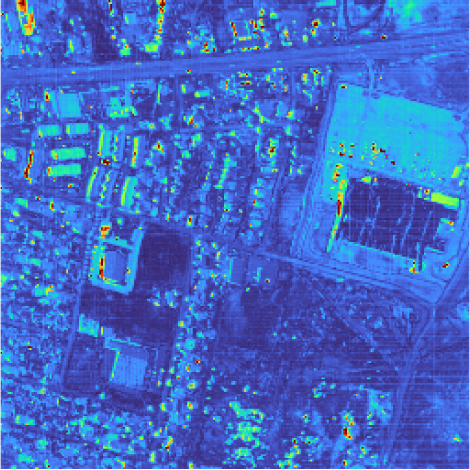}  &
		\includegraphics[width=\linewidth]{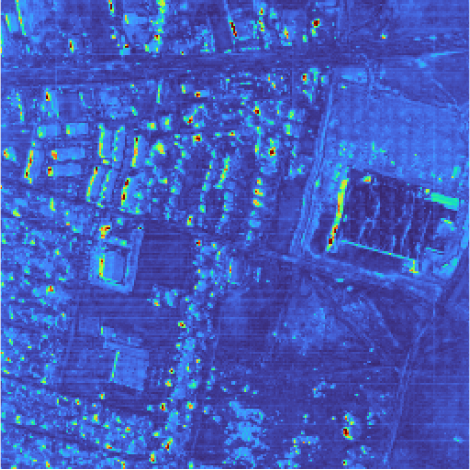}  &
		\includegraphics[width=\linewidth]{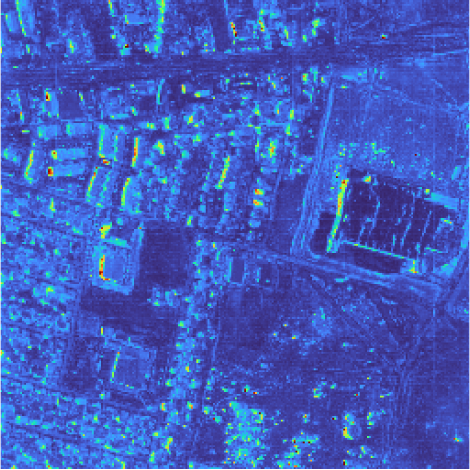}    &
		\includegraphics[width=\linewidth]{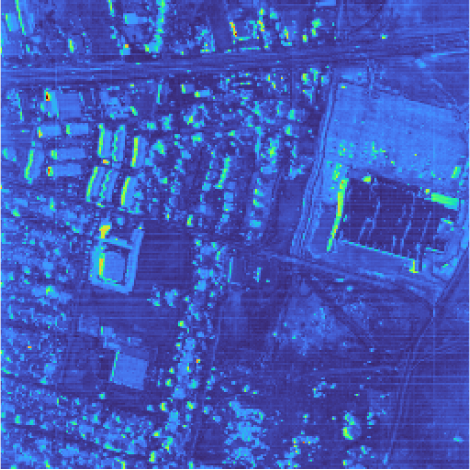}   &
		\includegraphics[width=\linewidth]{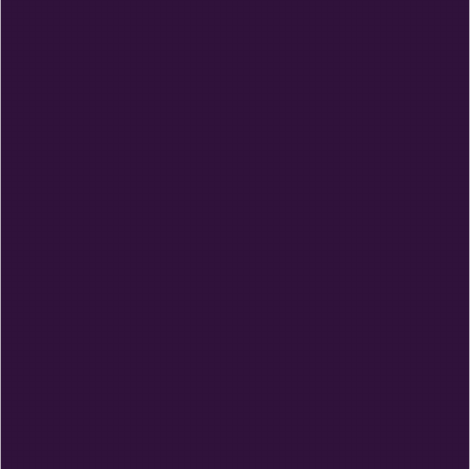} \\
		\multicolumn{1}{c}{\footnotesize{ASLA}}
		&\multicolumn{1}{c}{\footnotesize{ZSL}}
		& \multicolumn{1}{c}{\footnotesize{GTNN}}
		& \multicolumn{1}{c}{\footnotesize{CMlpTR}}
		& \multicolumn{1}{c}{\footnotesize{GT}}\\
		\multicolumn{5}{c}{\includegraphics[width=0.5\linewidth]{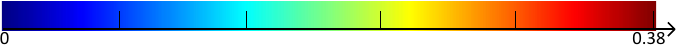}} 
	\end{tabular}
	\caption{\label{fig:URBAN visualization 1} Non-blind fusion results and error maps on the URBAN dataset. {Pseudo-color is composed of bands 40, 30 and 10.} Error maps are calculated by the pixel-wise SAM.}
\end{figure}
\subsection{Datasets Description}
Three hyperspectral datasets, \textit{i.e.}, URBAN\footnote{\url{https://www.erdc.usace.army.mil/Media/Fact-Sheets/Fact-Sheet-Article-View/Article/610433/hypercube/\#}}, Houston\footnote{\url{https://hyperspectral.ee.uh.edu/?page\_id=459}}, and Washington DC (WDC)\footnote{\url{https://github.com/liangjiandeng/HyperPanCollection}} are employed. The URBAN dataset comprises a $210$-band HSI with $307\times 307$ pixels at $2$-meter spatial resolution, covering the spectral range of $400$-$2500$nm at $10$nm intervals. After removing bands $1$-$4$, $76, 87, 101$-$111, 136$-$153$, and $198$-$210$ due to water vapor and atmospheric interference, we extracted a $256\times256\times162$ HSSI from the upper-left corner of the processed data.
\begin{figure}[htbp!]
	\centering
	\setlength{\tabcolsep}{0.2mm}
	\begin{tabular}{m{0.2\linewidth}m{0.2\linewidth}m{0.2\linewidth}m{0.2\linewidth}m{0.2\linewidth}}
		\includegraphics[width=\linewidth]{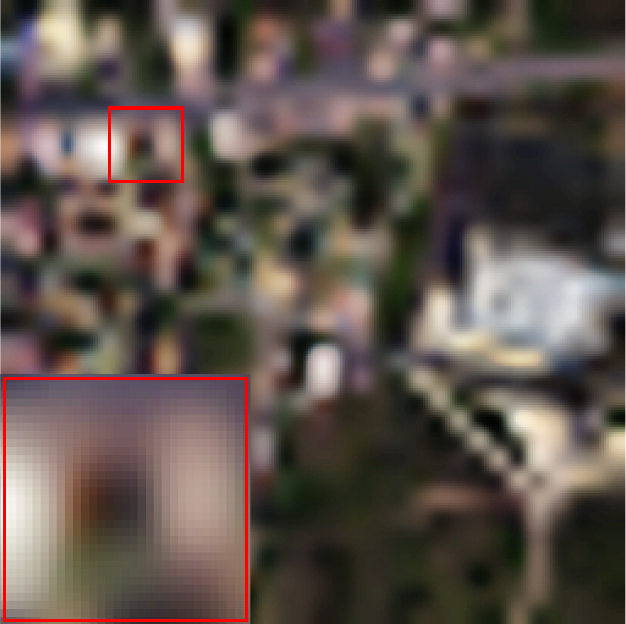}&
		\includegraphics[width=\linewidth]{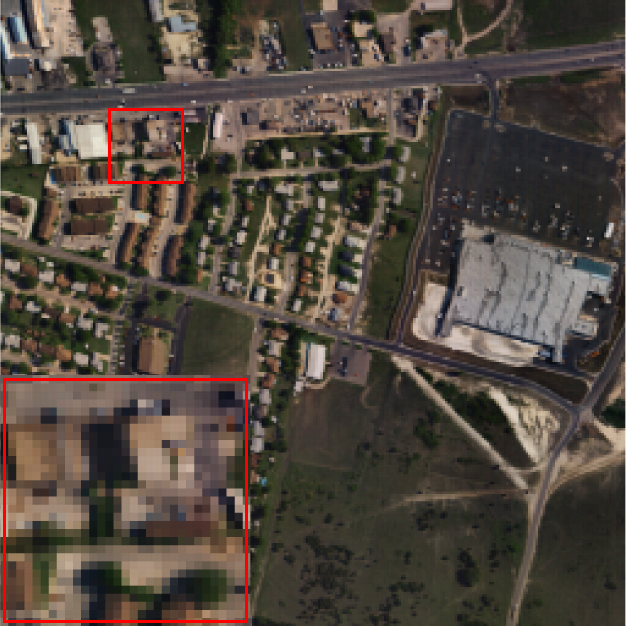}  &
		\includegraphics[width=\linewidth]{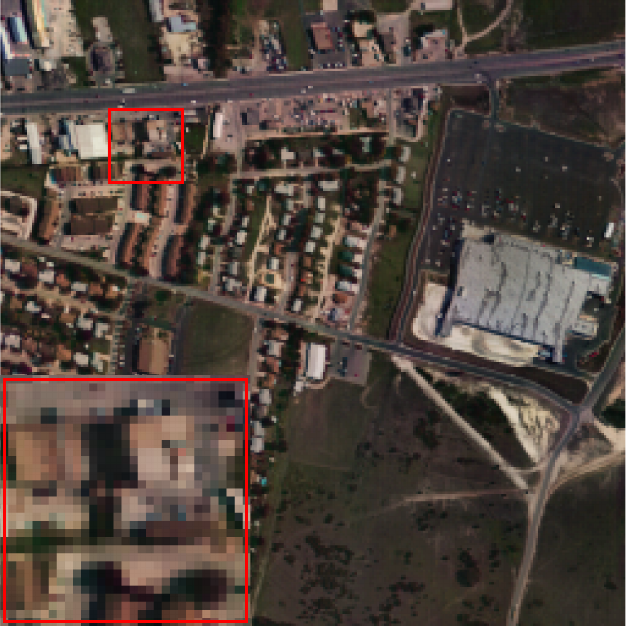}    &
		\includegraphics[width=\linewidth]{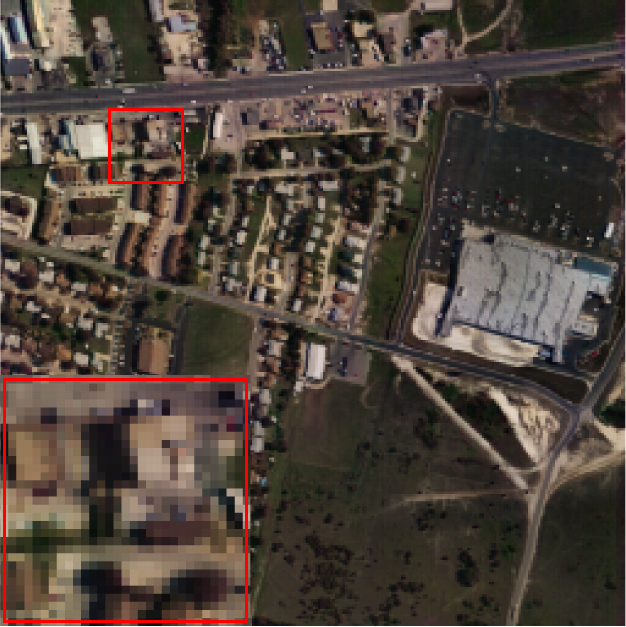}   &
		\includegraphics[width=\linewidth]{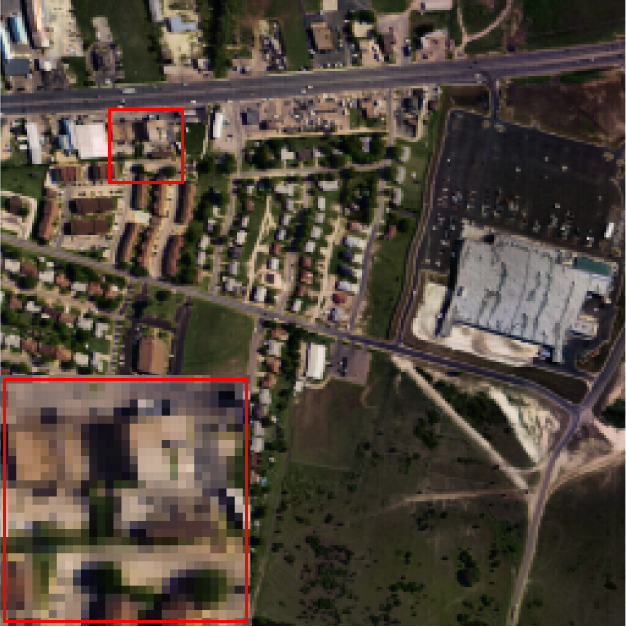}\\
		\includegraphics[width=\linewidth]{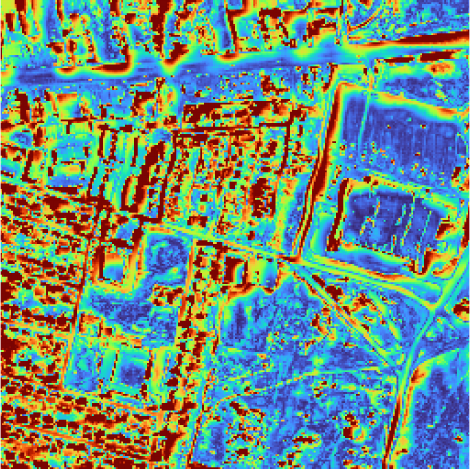}&
		\includegraphics[width=\linewidth]{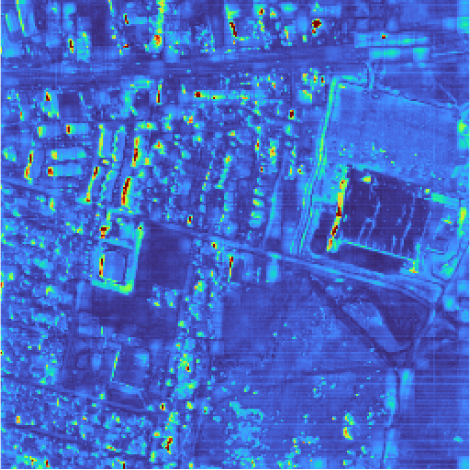}  &
		\includegraphics[width=\linewidth]{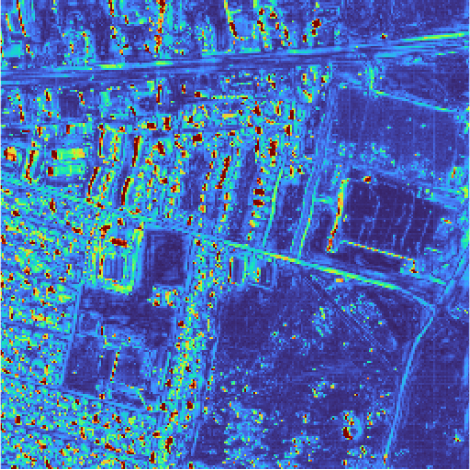}    &
		\includegraphics[width=\linewidth]{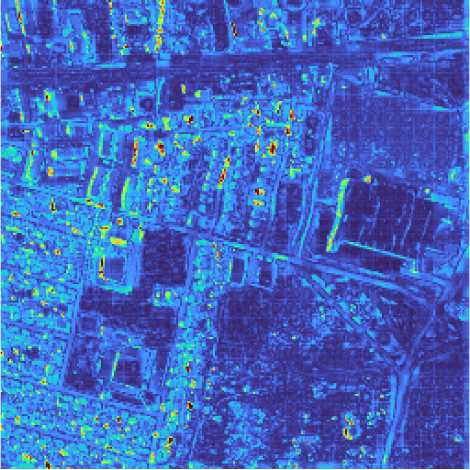}   &
		\includegraphics[width=\linewidth]{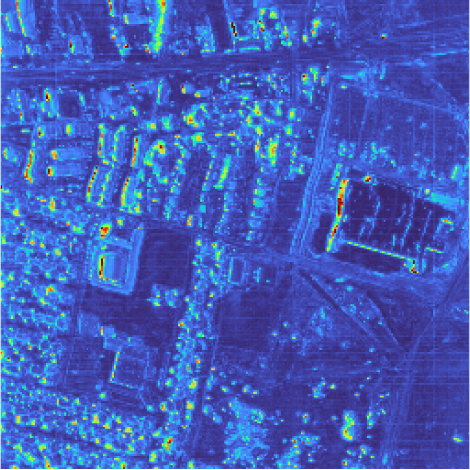} \\
		\multicolumn{1}{c}{\footnotesize{Bicubic}}
		&\multicolumn{1}{c}{\footnotesize{Hysure}}
		& \multicolumn{1}{c}{\footnotesize{LTTR}}
		& \multicolumn{1}{c}{\footnotesize{LRTA}}
		& \multicolumn{1}{c}{\footnotesize{SURE}}\\
		\includegraphics[width=\linewidth]{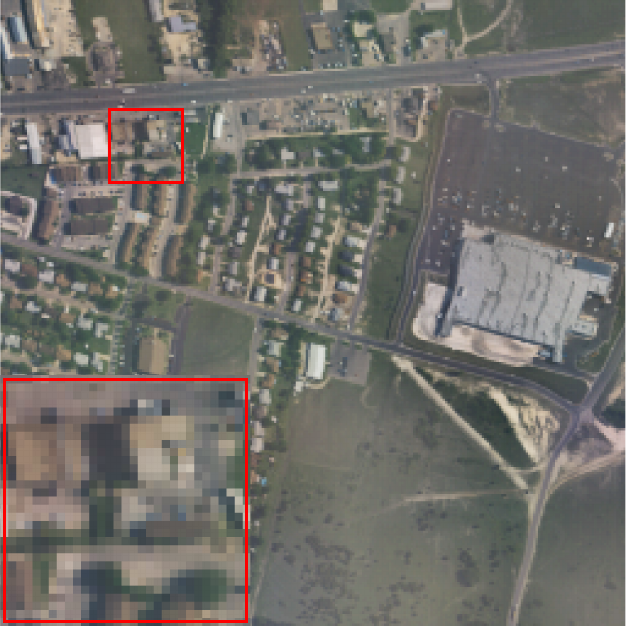}  &
		\includegraphics[width=\linewidth]{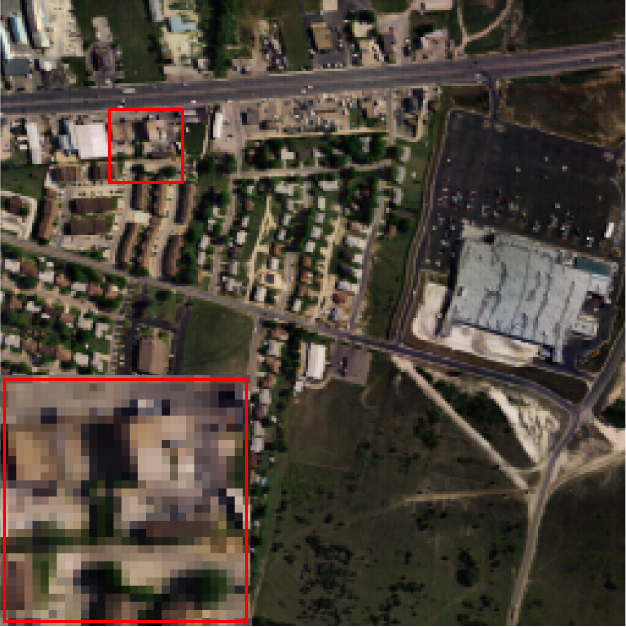}  &
		\includegraphics[width=\linewidth]{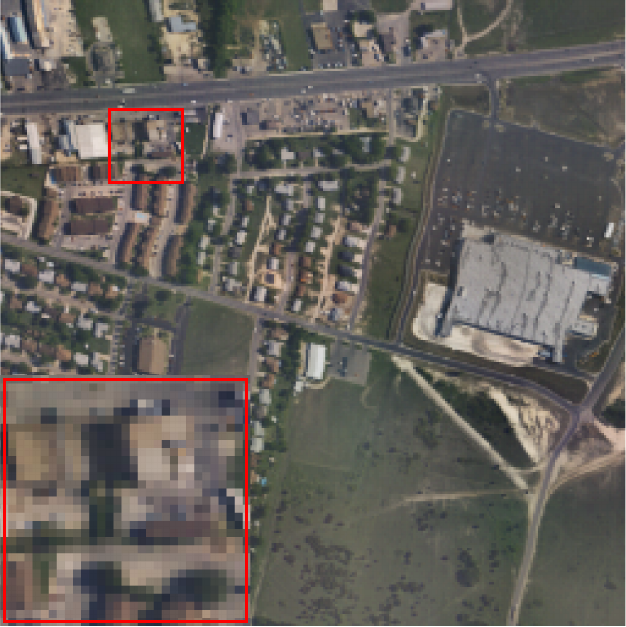}    &
		\includegraphics[width=\linewidth]{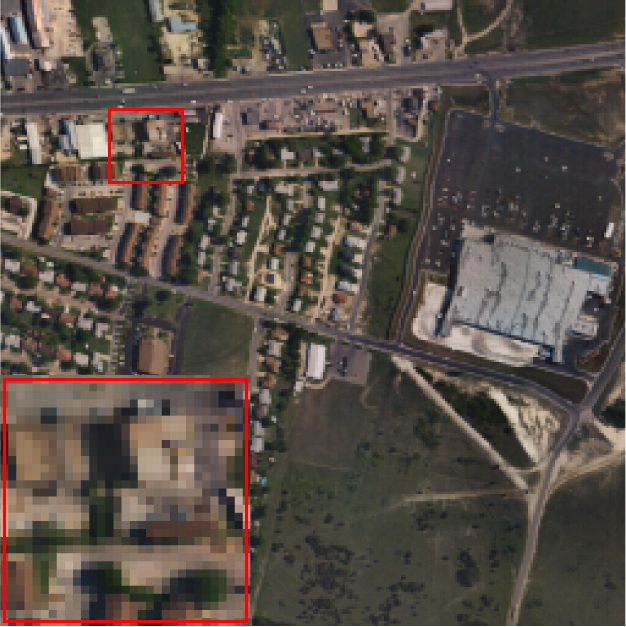}   &
		\includegraphics[width=\linewidth]{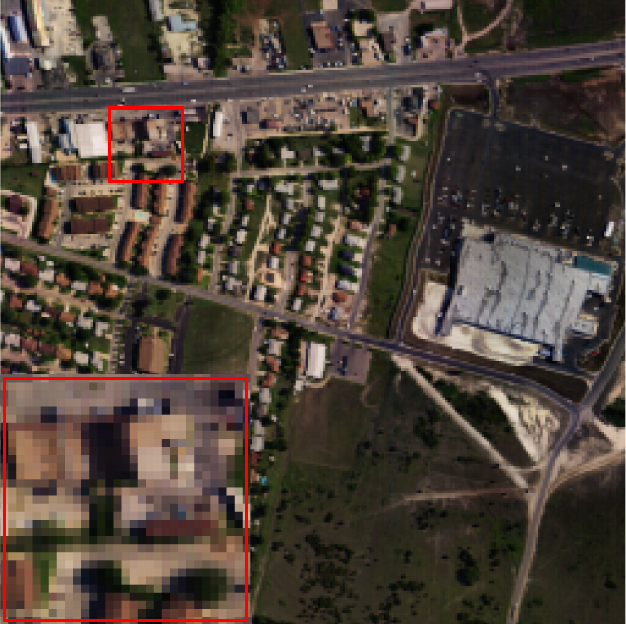}  \\\includegraphics[width=\linewidth]{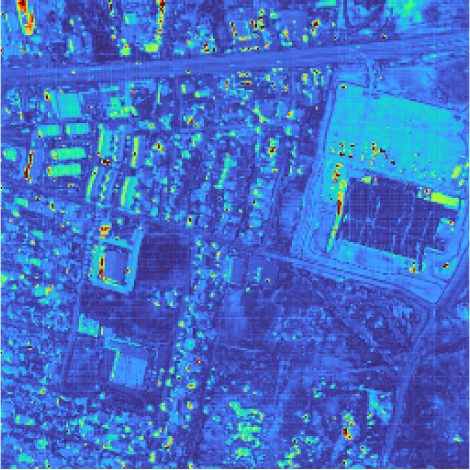}  &
		\includegraphics[width=\linewidth]{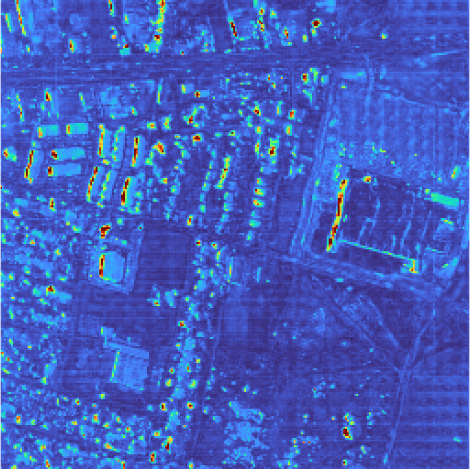}  &
		\includegraphics[width=\linewidth]{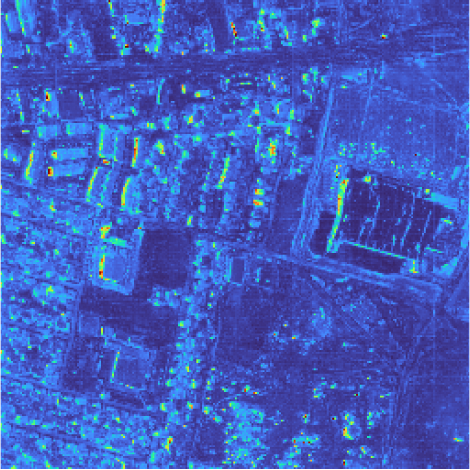}    &
		\includegraphics[width=\linewidth]{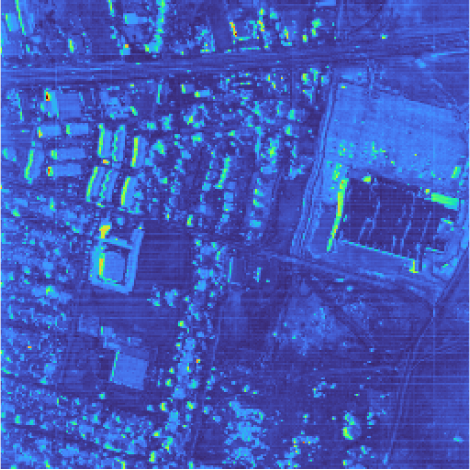}   &
		\includegraphics[width=\linewidth]{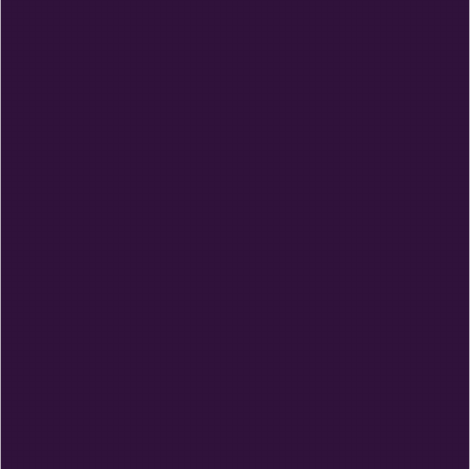} \\
		\multicolumn{1}{c}{\footnotesize{ASLA}}
		&\multicolumn{1}{c}{\footnotesize{ZSL}}
		& \multicolumn{1}{c}{\footnotesize{GTNN}}
		& \multicolumn{1}{c}{\footnotesize{CMlpTR}}
		& \multicolumn{1}{c}{\footnotesize{GT}}\\
		\multicolumn{5}{c}{\includegraphics[width=0.5\linewidth]{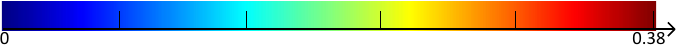}} 
	\end{tabular}
	\caption{\label{fig:URBAN visualization 2} Blind fusion results and error maps on the URBAN dataset. {Pseudo-color is composed of bands 40, 30 and 10.} Error maps are calculated by the pixel-wise SAM.}
\end{figure}

The Houston dataset consists of an HSI with $349\times1905$ spatial pixels and $144$ spectral bands at $2.5$-meter resolution, covering the $380$-$1050$nm spectral range. With zero-valued pixels being discarded, we generated a $322\times1903\times144$ image cube, from which we crop off a $256\times256\times144$ subscene to serve as the HSSI. Besides, since the original image appears dark, we performed Gamma-calibration on the generated HSSI $\mathcal{Z}$ to obtain $\mathcal{Z}^{0.7}$ as the ground-truth (GT) HSSI, where the power was calculated element-wisely.
\begin{table}[htbp]
	\centering
	\renewcommand{\arraystretch}{1}
	\tabcolsep=0.5mm
	\caption{Numerical performance on the URBAN dataset. Best results are in boldface.}
	\resizebox{\linewidth}{!}{\begin{tabular}{c|cccc|cccc}
			\toprule
			\hline
			\multirow{3}{*}{\textbf{Methods}}&\multicolumn{8}{c}{\textbf{Setup}} \\\cline{2-9}\quad& \multicolumn{4}{c|}{\textbf{Non-blind}} &\multicolumn{4}{c}{\textbf{Blind}}\\\cline{2-9} \quad
			&  PSNR$\uparrow$                 & {ERGAS}$\downarrow$    & SAM$\downarrow$    & SSIM$\uparrow$  &    PSNR$\uparrow$                 & {ERGAS}$\downarrow$    & SAM$\downarrow$    & SSIM$\uparrow$   \\\hline {Bicubic}            &  22.4646   &  5.8732   &   10.6064  &   0.4115 &  22.4646   &  5.8732   &   10.6064  &   0.4115 \\
			Hysure            &  42.9189   &  0.6890   &   2.9550  &   0.9920 &  42.7903   &  0.6778   &   2.9662  &   0.9920 \\ LTTR 
			&         43.9124          &  0.8584   &   3.7677  &  0.9790   &         43.7321          &  0.8516   &   3.7948  &  0.9789    \\
			LRTA             &  44.5726   & 0.7171    & 2.8548    &   0.9873    &  44.3073   & 0.7187    & 2.8630    &   0.9873 \\ SURE           &  42.9270   &  0.6169   &   2.4358  &  0.9915 &  42.7237   &  0.6326   &   2.5093  &  0.9914  \\       ASLA          &  44.6256   &  0.6703   &  2.8596   & 0.9917  &  44.3815   &  0.6803   &  2.7664   & 0.9920  \\        ZSL           &   43.1704  &  0.6226   &  2.4037   &  0.9927   &   42.9792  &  0.6431   &  2.5551   &  0.9921 \\        GTNN         &  44.6832  &  0.5864   &  2.3858   &  0.9914  &  44.5186   &   0.5963  &   2.3980  &  0.9914  \\     CMlpTR            &  \textbf{45.2351}   &   \textbf{0.5588}  &  \textbf{2.2375}   &  \textbf{0.9947}    &  \textbf{45.1130}   &   \textbf{0.5610}  &  \textbf{2.2425}   &  \textbf{0.9947}  \\
			\hline
			\bottomrule
	\end{tabular}}
	\label{tab:URBAN metrics}
\end{table}

The WDC dataset is acquired via the same sensor as URBAN. Hence, they have the same spatial and spectral resolutions and quality. Thus, we preprocessed the WDC dataset in the same way as URBAN to obtain a $256\times256\times162$ HSSI.

For each HSSI, we applied a $9\times 9$ Gaussian kernel with a standard deviation of $3.3973$ to blur it, followed by spatial downsampling by a factor of eight, resulting in an HSI of spatial size $32\times32$. For URBAN and WDC, the MSI of size $256\times256\times 6$ was generated by averaging HSSI bands within six spectral ranges ($450$-$520$nm, $520$–$600$nm, $630$–$690$nm, $760$–$900$nm, $1550$–$1750$nm, and $2080$–$2350$nm) to match the spectral response of the USGS/NASA Landsat 7 satellite \cite{10904006}. As for Houston, the corresponding $256\times256\times4$ MSI was generated via an IKONOS-like
reflectance spectral response filter \cite{10154463}.
\begin{figure}[htbp!]
	\centering
	\setlength{\tabcolsep}{0.2mm}
	\begin{tabular}{m{0.2\linewidth}m{0.2\linewidth}m{0.2\linewidth}m{0.2\linewidth}m{0.2\linewidth}}
		\includegraphics[width=\linewidth]{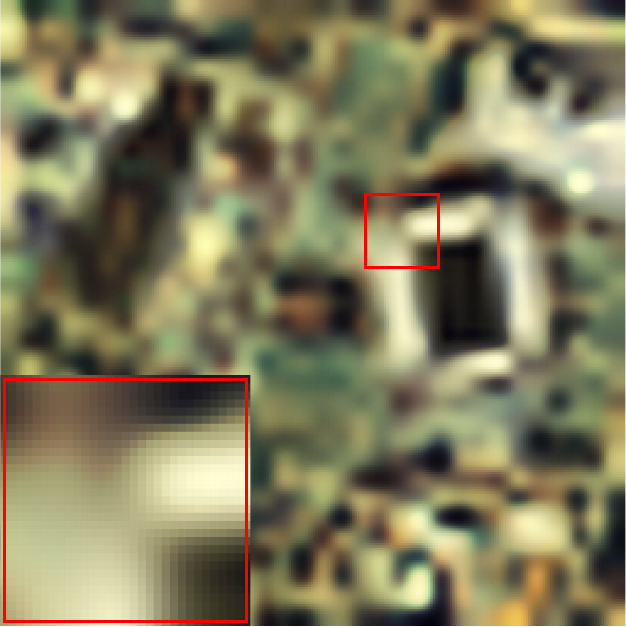}&
		\includegraphics[width=\linewidth]{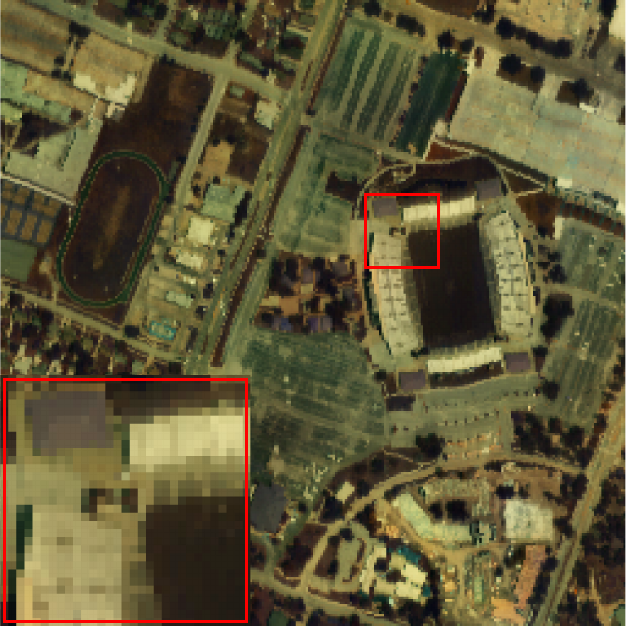}  &
		\includegraphics[width=\linewidth]{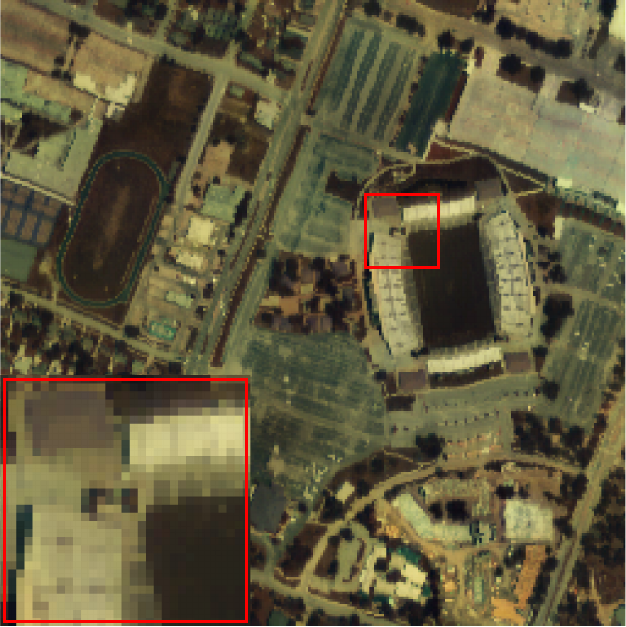}    &
		\includegraphics[width=\linewidth]{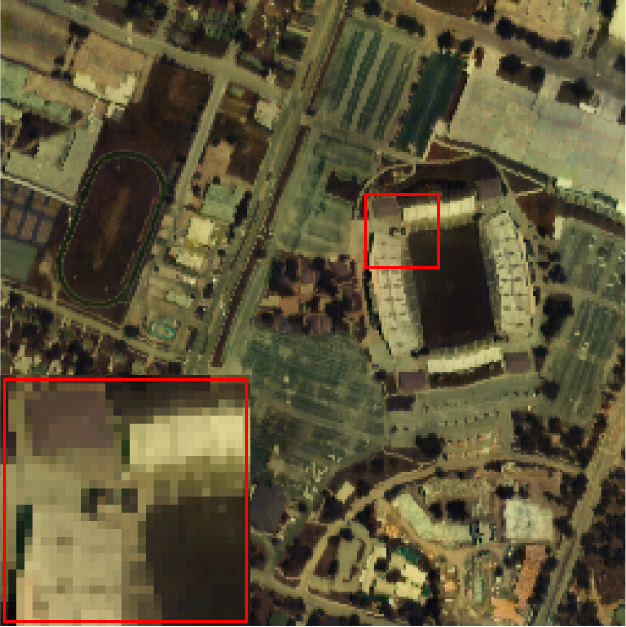}   &
		\includegraphics[width=\linewidth]{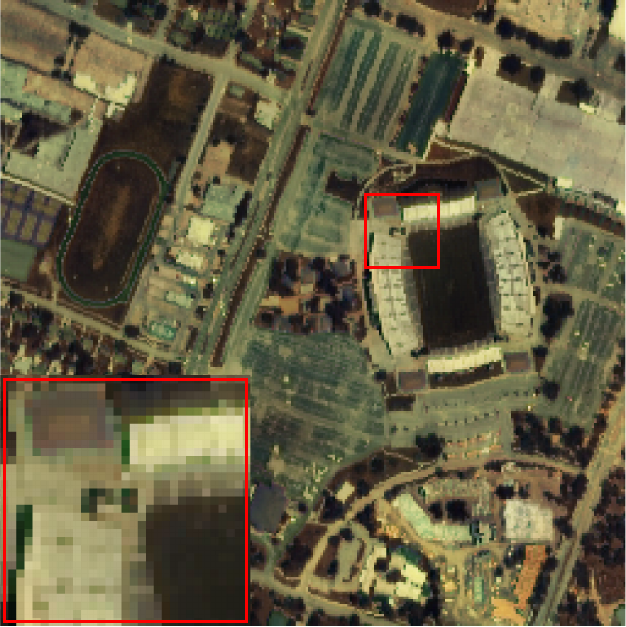}\\
		\includegraphics[width=\linewidth]{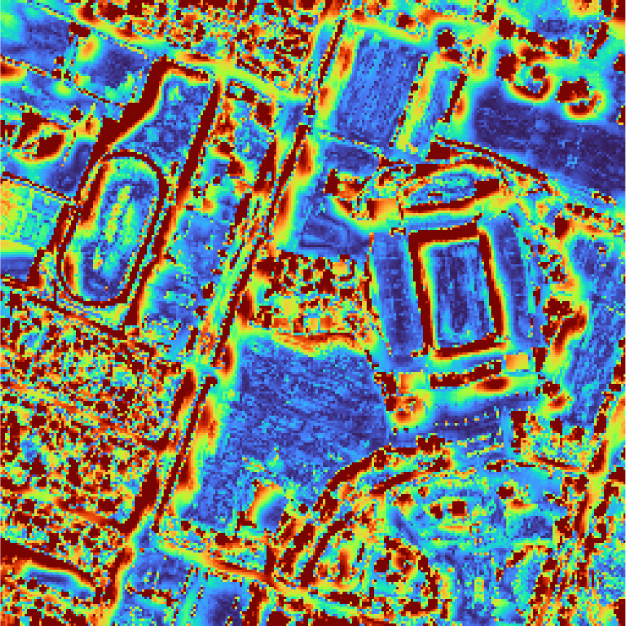}&
		\includegraphics[width=\linewidth]{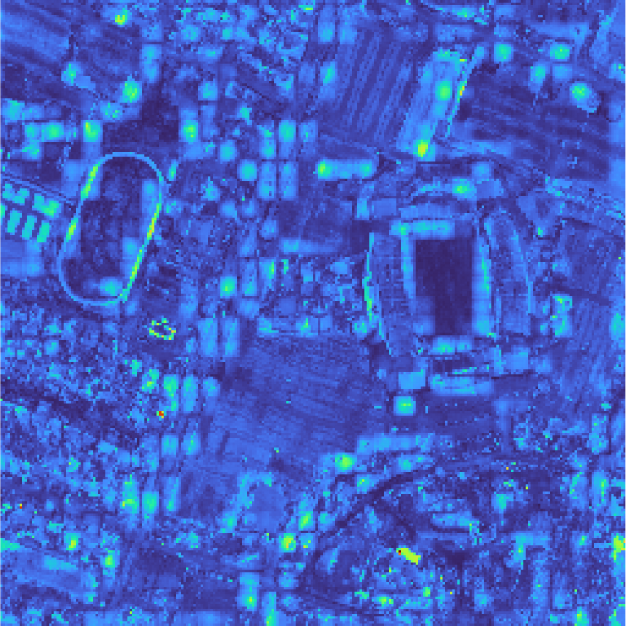}  &
		\includegraphics[width=\linewidth]{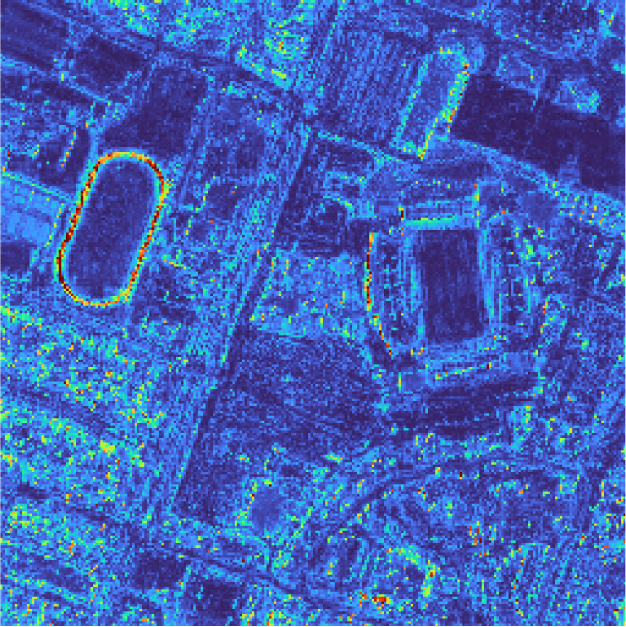}    &
		\includegraphics[width=\linewidth]{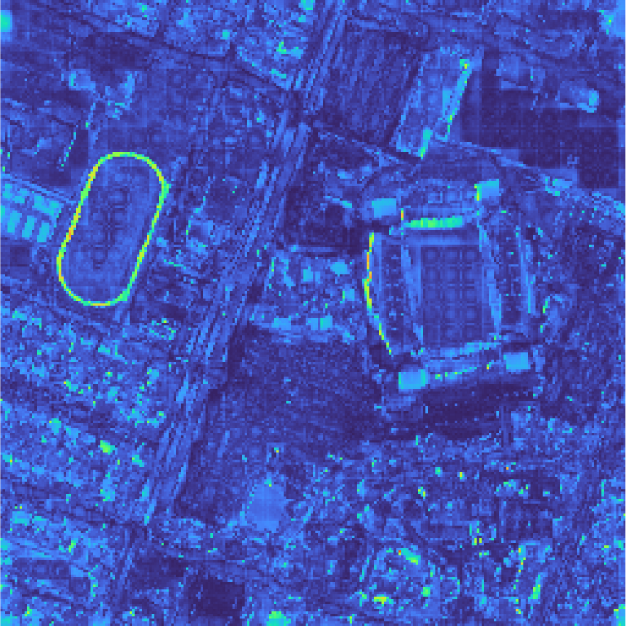}   &
		\includegraphics[width=\linewidth]{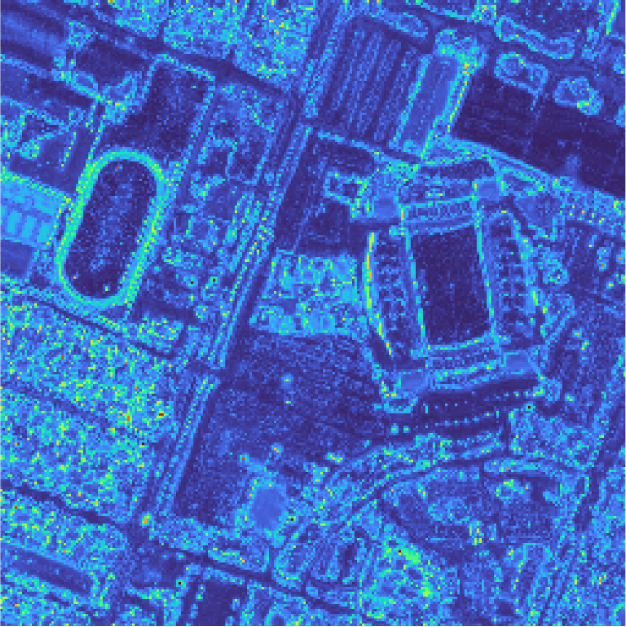}\\
		\multicolumn{1}{c}{\footnotesize{Bicubic}}
		&\multicolumn{1}{c}{\footnotesize{Hysure}}
		& \multicolumn{1}{c}{\footnotesize{LTTR}}
		& \multicolumn{1}{c}{\footnotesize{LRTA}}
		& \multicolumn{1}{c}{\footnotesize{SURE}}\\
		\includegraphics[width=\linewidth]{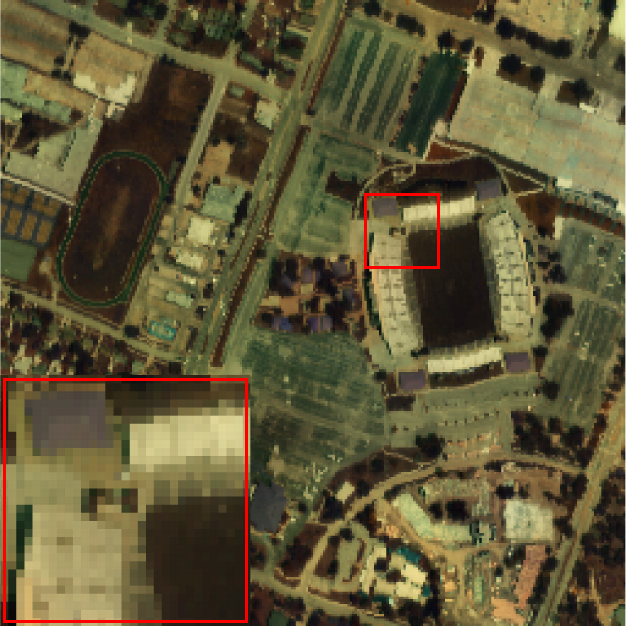}  &
		\includegraphics[width=\linewidth]{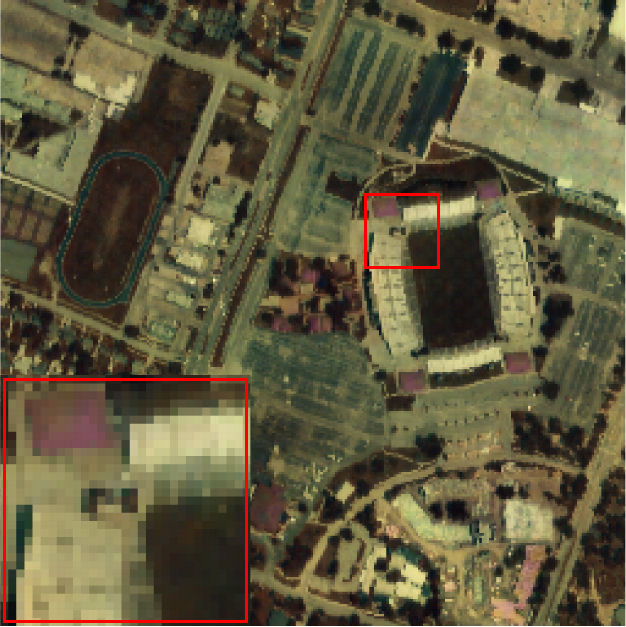}  &
		\includegraphics[width=\linewidth]{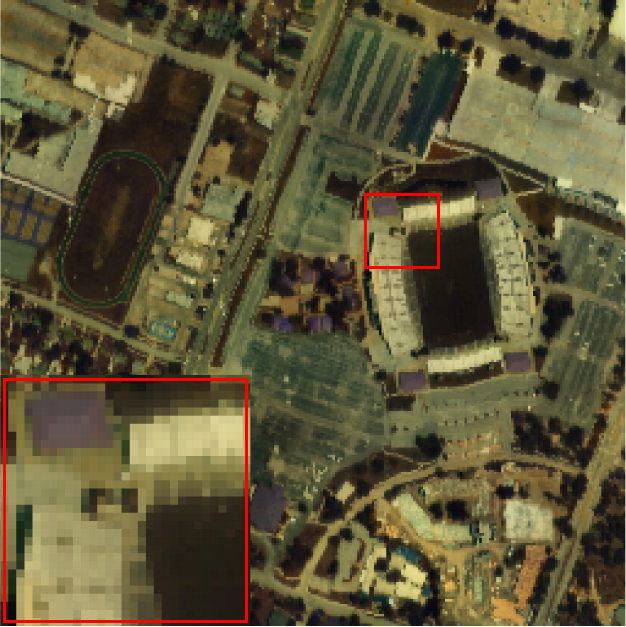}    &
		\includegraphics[width=\linewidth]{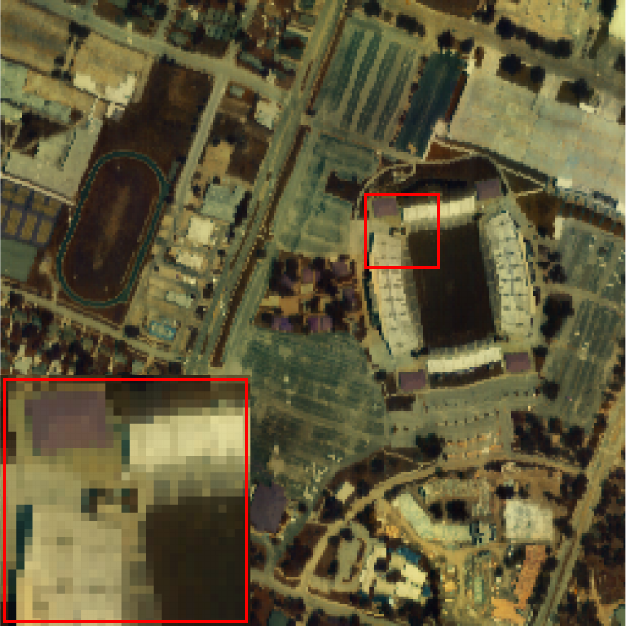}   &
		\includegraphics[width=\linewidth]{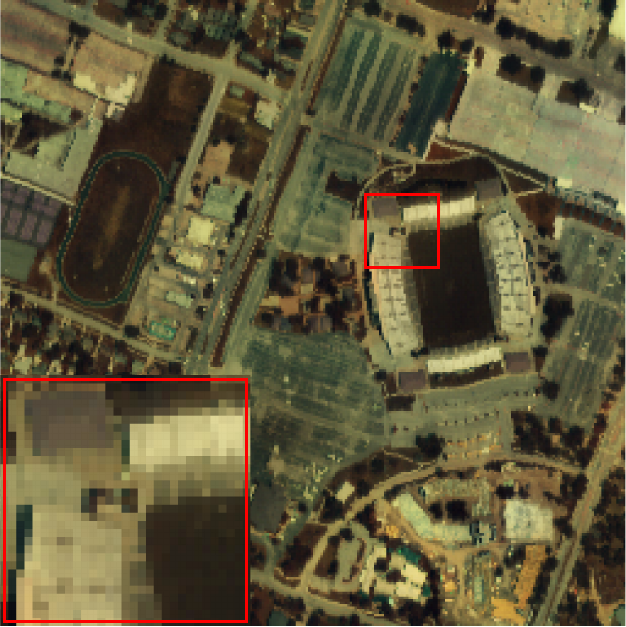}  \\
		\includegraphics[width=\linewidth]{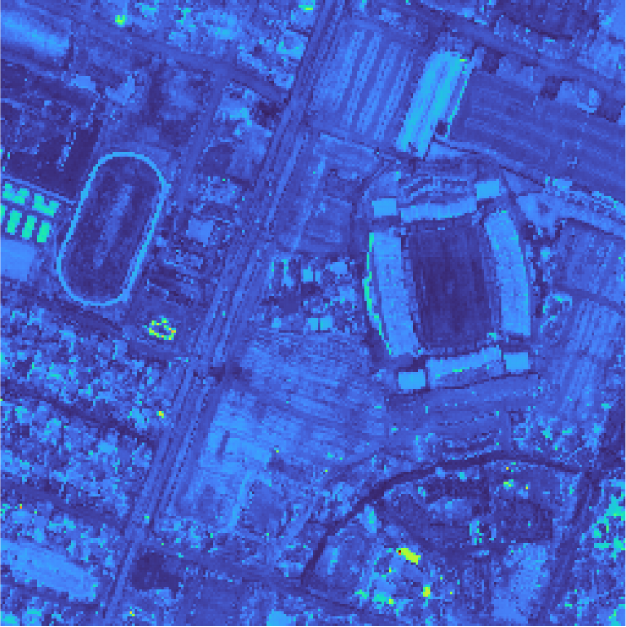}  &
		\includegraphics[width=\linewidth]{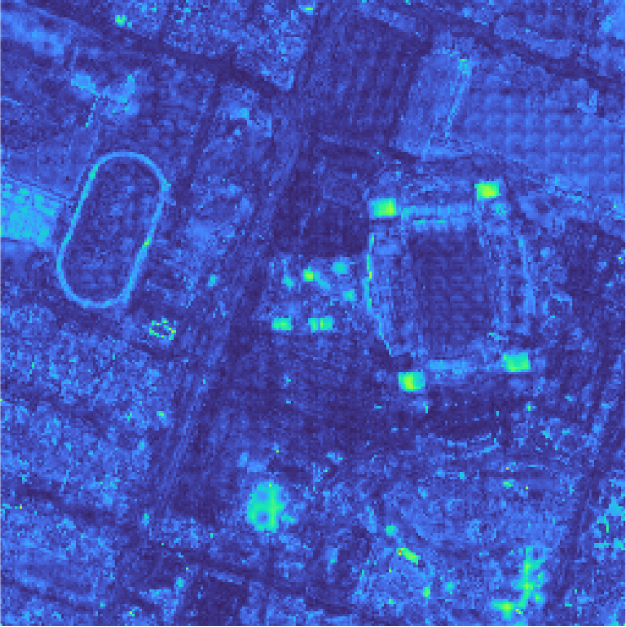}  &
		\includegraphics[width=\linewidth]{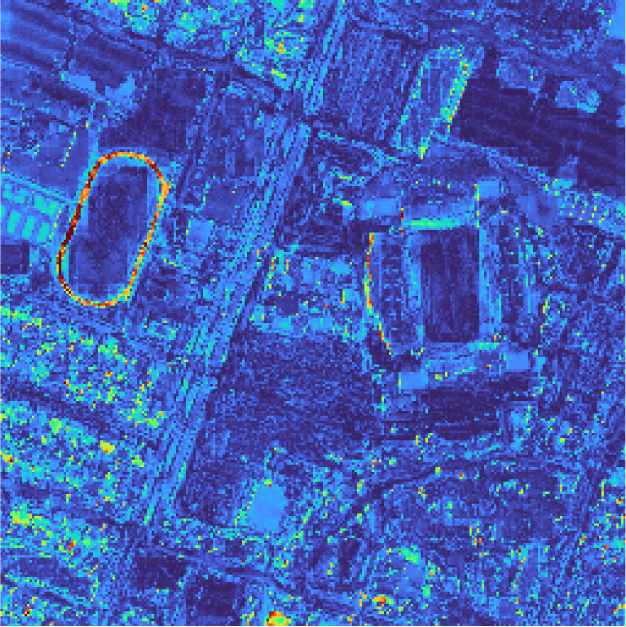}    &
		\includegraphics[width=\linewidth]{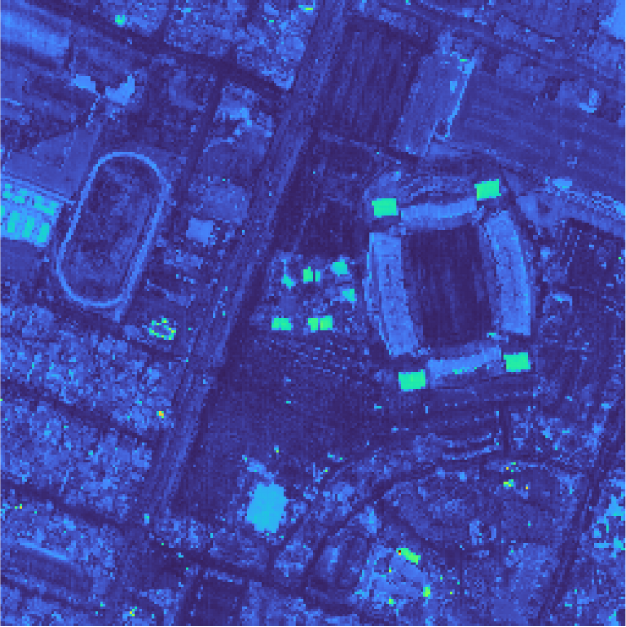}   &
		\includegraphics[width=\linewidth]{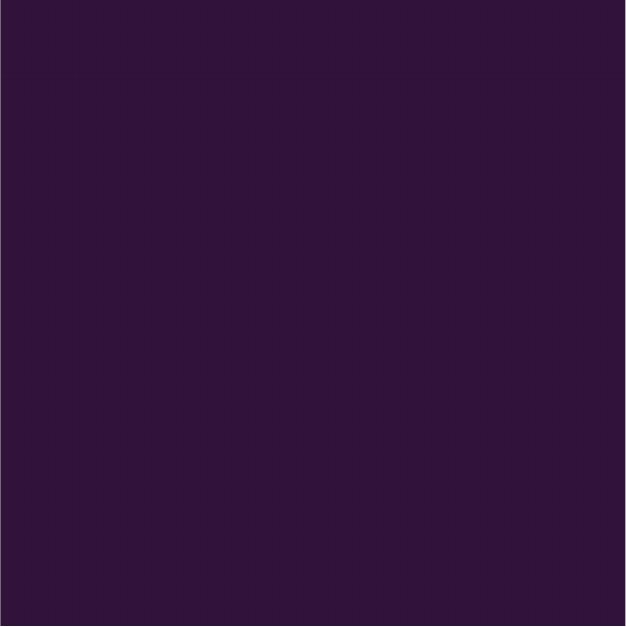} \\
		\multicolumn{1}{c}{\footnotesize{ASLA}}
		&\multicolumn{1}{c}{\footnotesize{ZSL}}
		& \multicolumn{1}{c}{\footnotesize{GTNN}}
		& \multicolumn{1}{c}{\footnotesize{CMlpTR}}
		& \multicolumn{1}{c}{\footnotesize{GT}}\\
		\multicolumn{5}{c}{\includegraphics[width=0.5\linewidth]{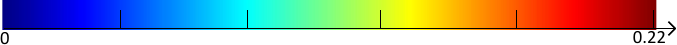}} 
	\end{tabular}
	\caption{\label{fig:Houston visualization 1} Non-blind fusion results and error maps on the Houston dataset. {Pseudo-color is composed of bands 45, 25 and 10.} Error maps are calculated by the pixel-wise SAM.}
\end{figure}
\begin{figure}[htbp!]
	\centering
	\setlength{\tabcolsep}{0.2mm}
	\begin{tabular}{m{0.2\linewidth}m{0.2\linewidth}m{0.2\linewidth}m{0.2\linewidth}m{0.2\linewidth}}
		\includegraphics[width=\linewidth]{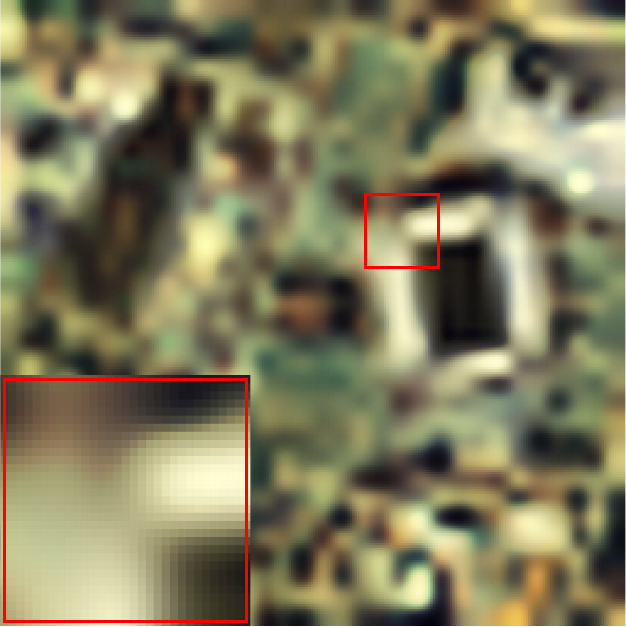}&
		\includegraphics[width=\linewidth]{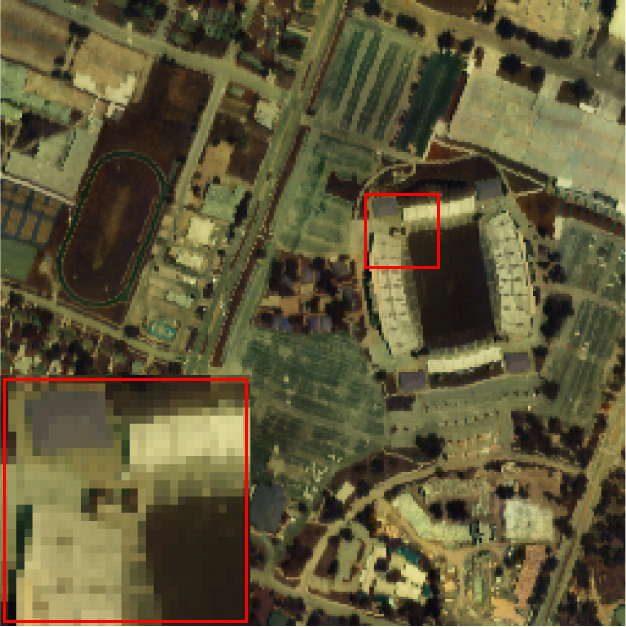}  &
		\includegraphics[width=\linewidth]{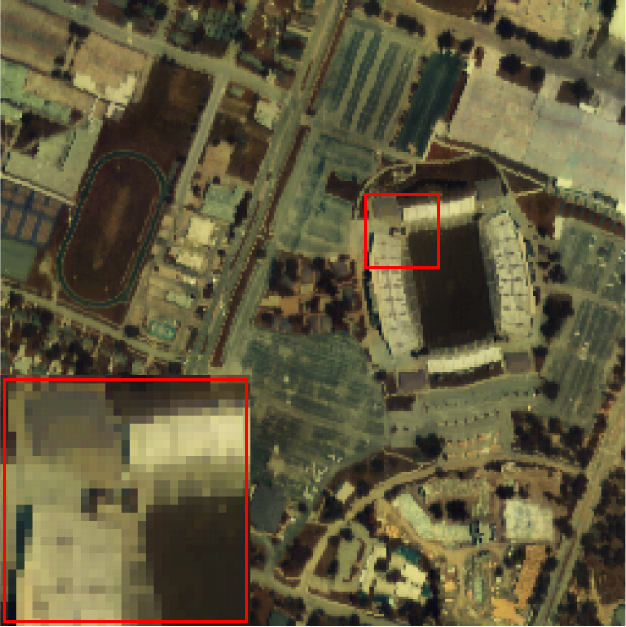}    &
		\includegraphics[width=\linewidth]{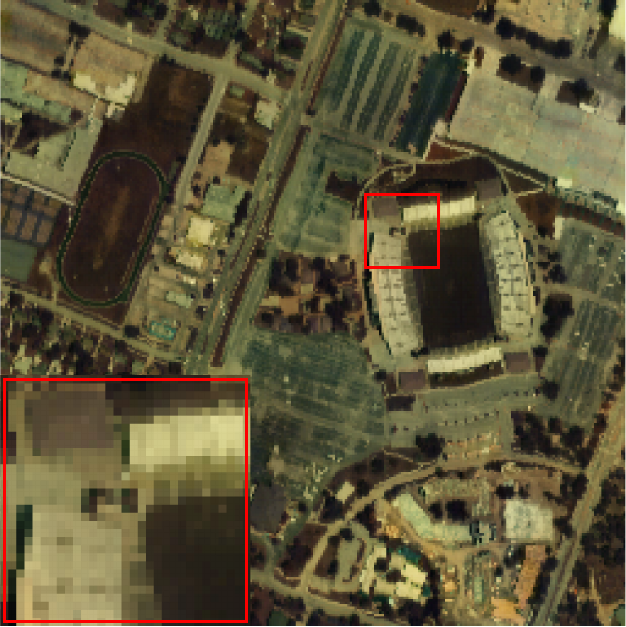}   &
		\includegraphics[width=\linewidth]{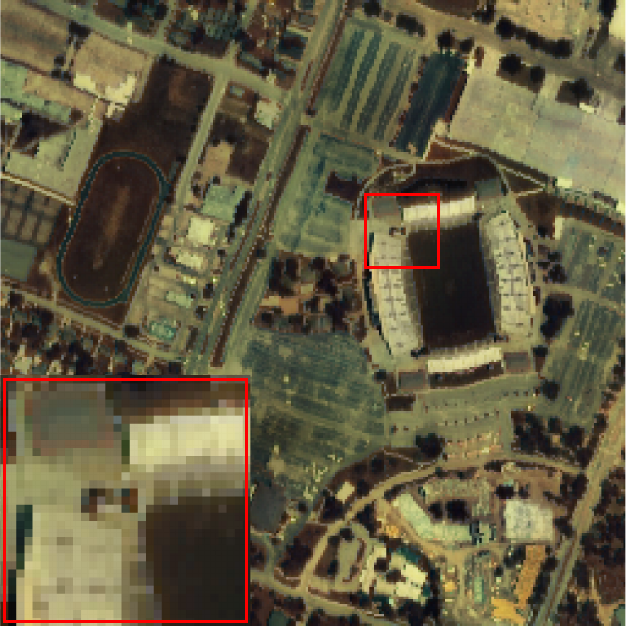}\\
		\includegraphics[width=\linewidth]{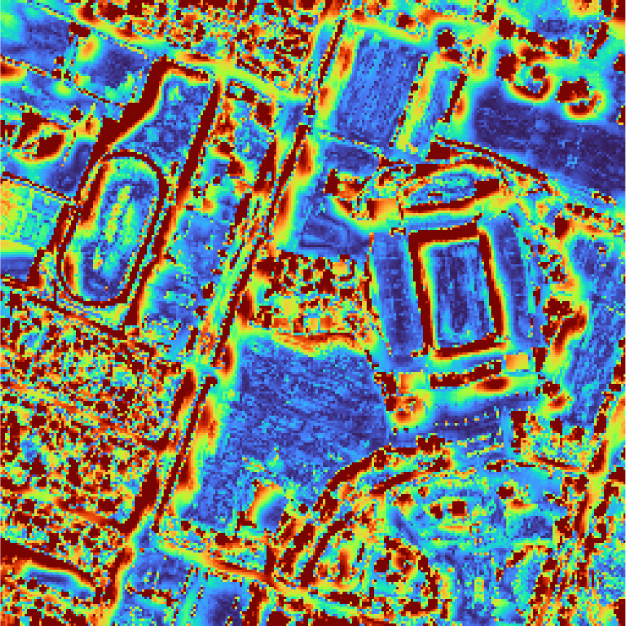}&
		\includegraphics[width=\linewidth]{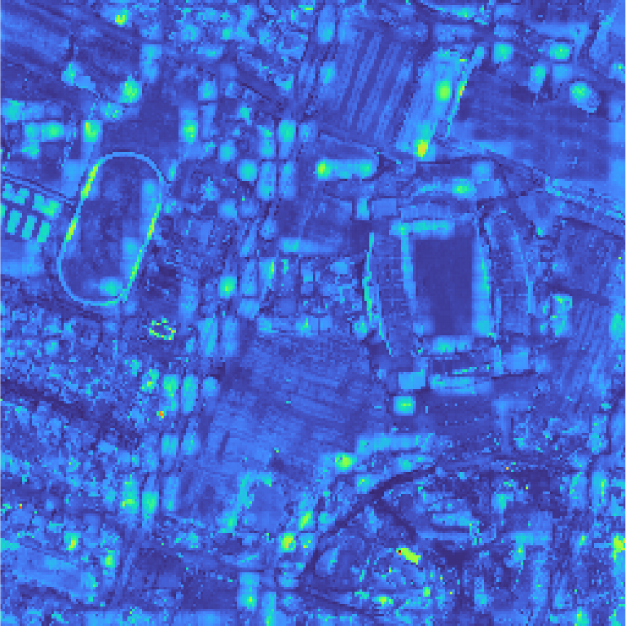}  &
		\includegraphics[width=\linewidth]{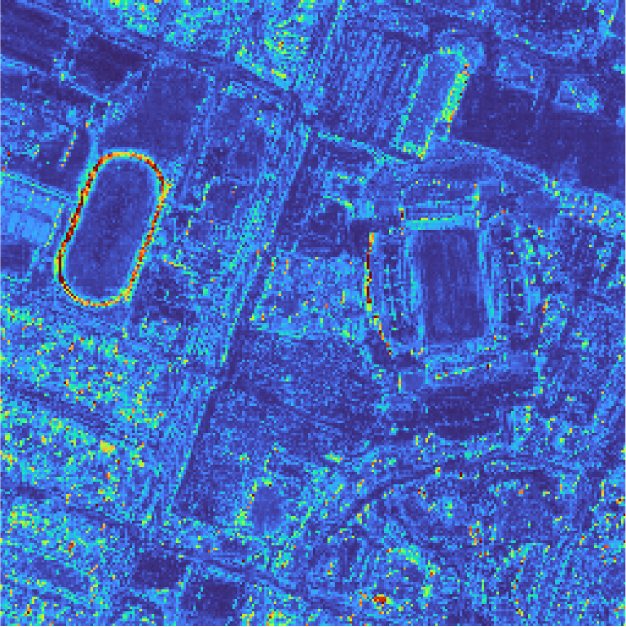}    &
		\includegraphics[width=\linewidth]{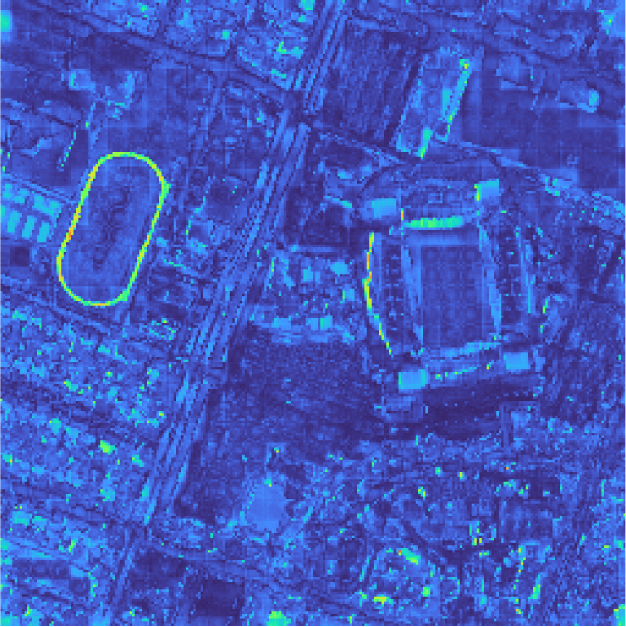}   &
		\includegraphics[width=\linewidth]{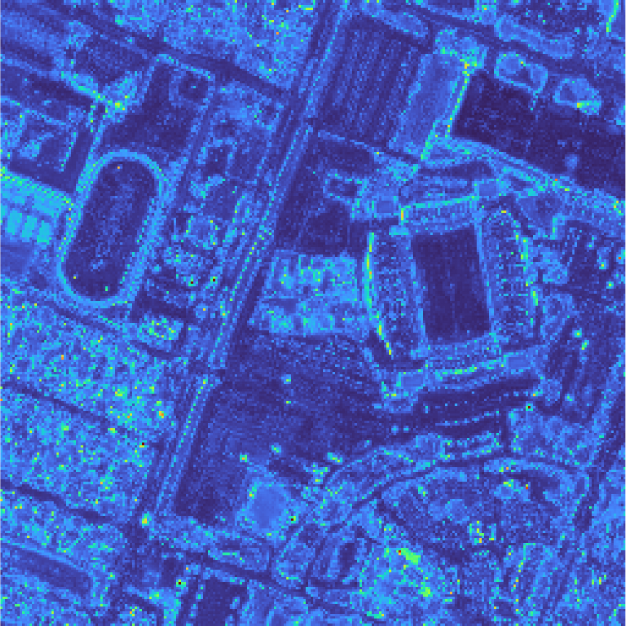}\\
		\multicolumn{1}{c}{\footnotesize{Bicubic}}
		&\multicolumn{1}{c}{\footnotesize{Hysure}}
		& \multicolumn{1}{c}{\footnotesize{LTTR}}
		& \multicolumn{1}{c}{\footnotesize{LRTA}}
		& \multicolumn{1}{c}{\footnotesize{SURE}}\\
		\includegraphics[width=\linewidth]{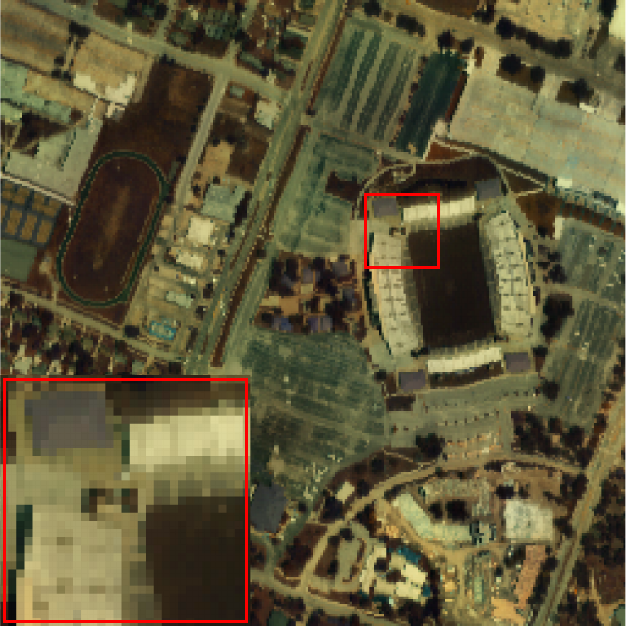}  &
		\includegraphics[width=\linewidth]{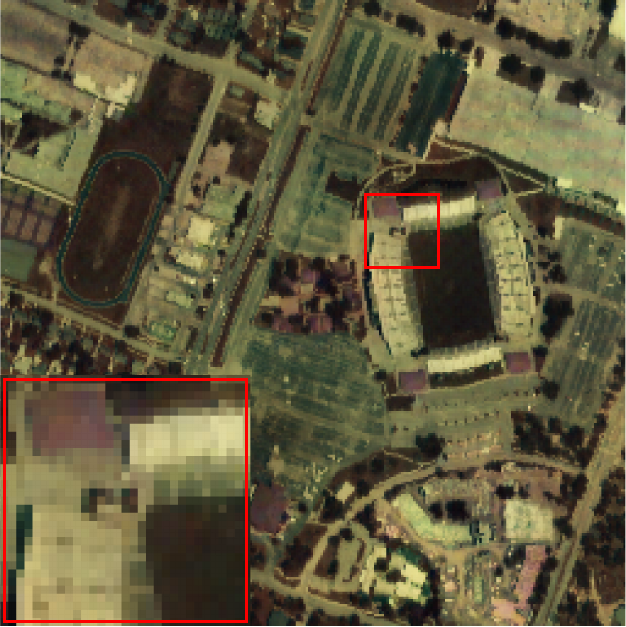}  &
		\includegraphics[width=\linewidth]{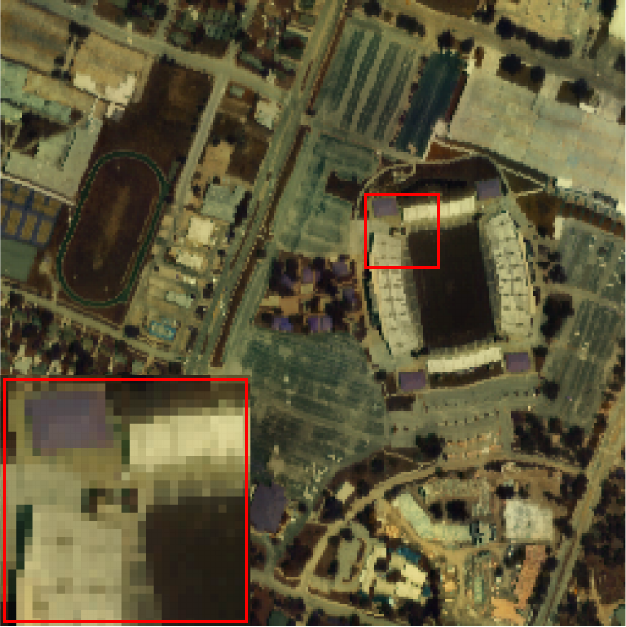}    &
		\includegraphics[width=\linewidth]{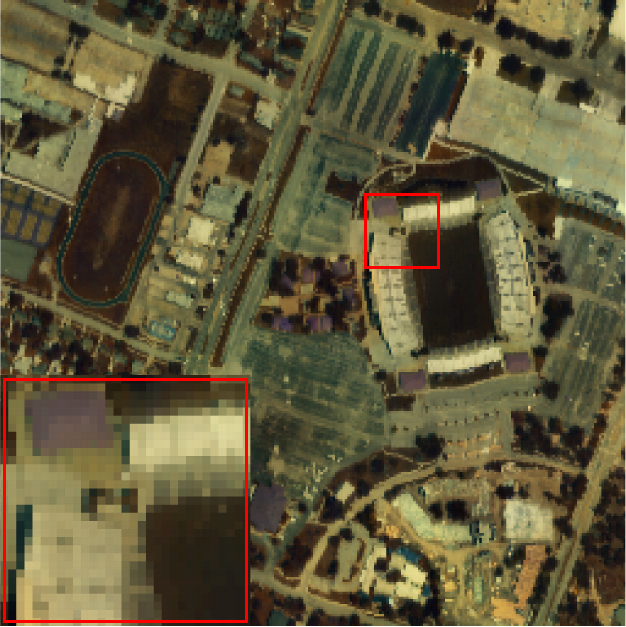}   &
		\includegraphics[width=\linewidth]{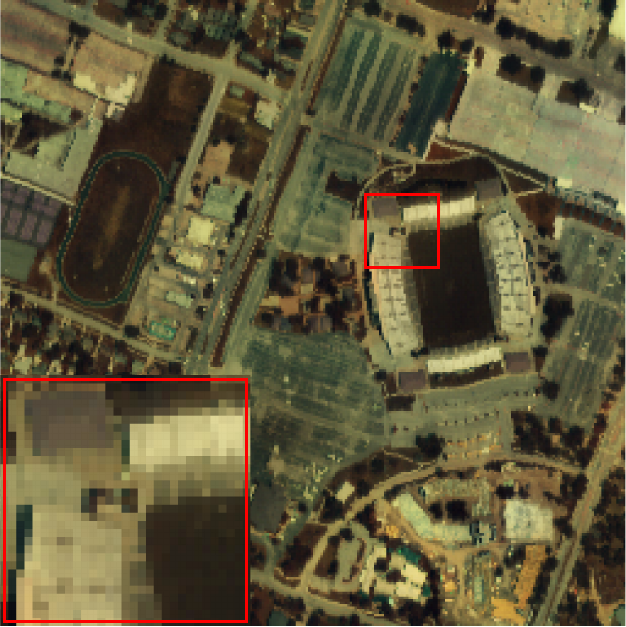}  \\
		\includegraphics[width=\linewidth]{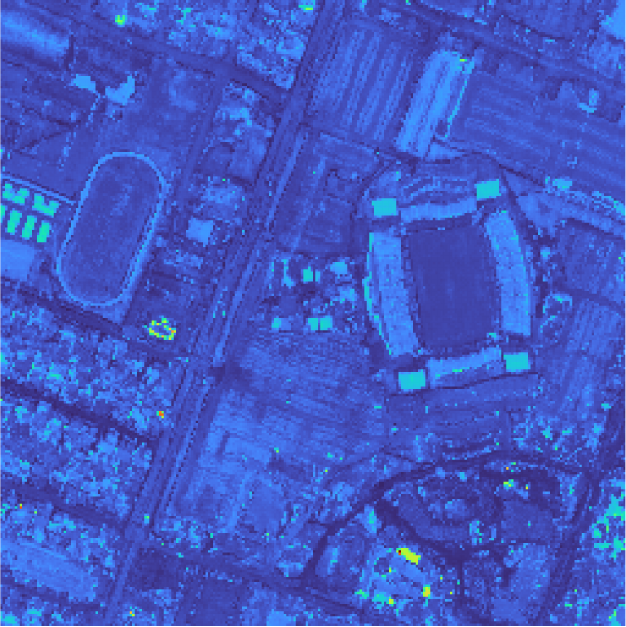}  &
		\includegraphics[width=\linewidth]{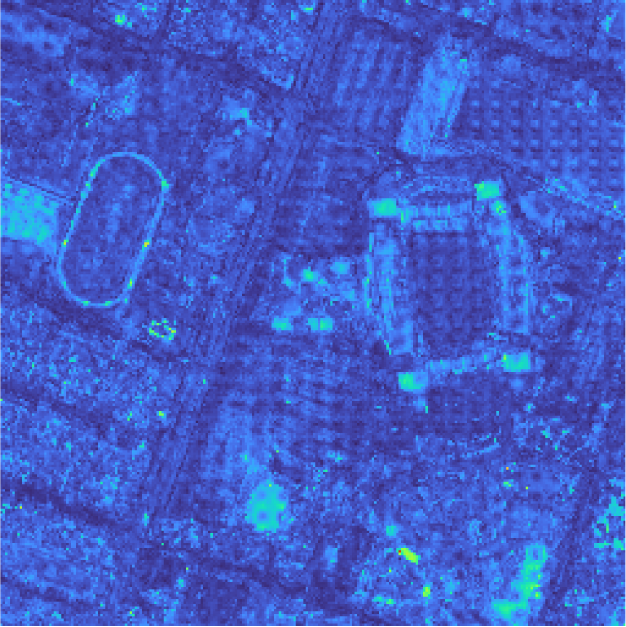}  &
		\includegraphics[width=\linewidth]{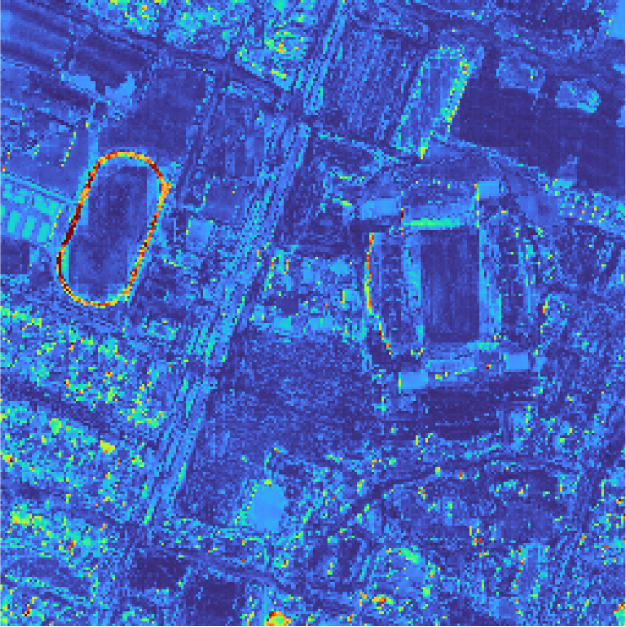}    &
		\includegraphics[width=\linewidth]{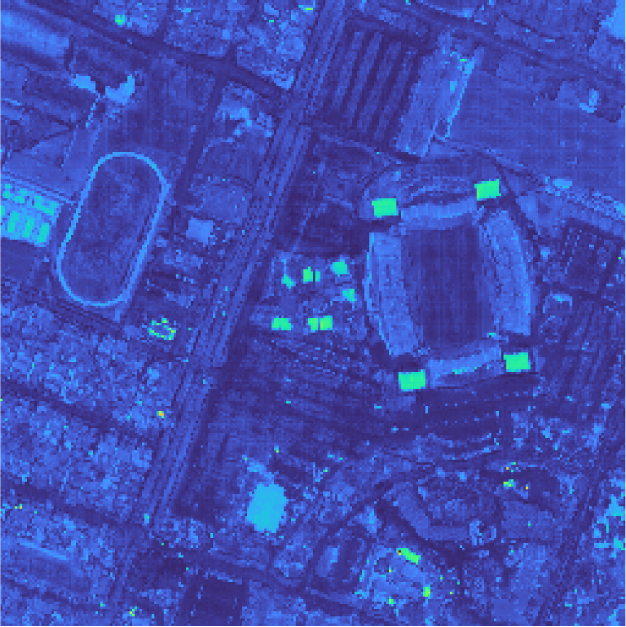}   &
		\includegraphics[width=\linewidth]{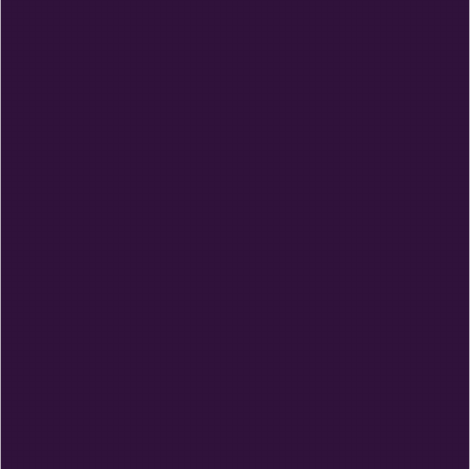} \\
		\multicolumn{1}{c}{\footnotesize{ASLA}}
		&\multicolumn{1}{c}{\footnotesize{ZSL}}
		& \multicolumn{1}{c}{\footnotesize{GTNN}}
		& \multicolumn{1}{c}{\footnotesize{CMlpTR}}
		& \multicolumn{1}{c}{\footnotesize{GT}}\\
		\multicolumn{5}{c}{\includegraphics[width=0.5\linewidth]{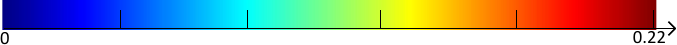}} 
	\end{tabular}
	\caption{\label{fig:Houston visualization 2} Blind fusion results and error maps on the Houston dataset. {Pseudo-color is composed of bands 45, 25 and 10.} Error maps are calculated by the pixel-wise SAM.}
\end{figure}
\subsection{Experimental Results}
\subsubsection{Hyperparameter Sensitivity}
The main hyperparameters include the dimensionality of the spectral subspace, \textit{i.e.}, $R$ that captures the spectral low-rankness of the latent HSSI, and the surrogate parameter $\gamma$ from $\psi(x)=\frac{\log(\gamma x+1)}{\log(\gamma+1)}$. To determine their values, we examine the PSNR sensitivity in non-blind fusion to both $R$ and $\gamma$. The results are visualized in Fig. \ref{fig: Parameter Tuning}. It is clear to see that the proposed model is much more sensitive to $R$ than to $\gamma$. From the distinct peak lines regarding $R$, it is easy to set $R$ to five, four and six for URBAN, Houston and WDC, respectively. As for $\gamma$, though the performance with respect to it remains relatively steady, there exists a black edge in each of the three surfaces at one common position where $\gamma=0.1$. The corresponding PSNR value is higher than its neighbors if $R$ is kept optimal. Thus, throughout these experiments, we set $\gamma$ to $0.1$.
\begin{table}[htbp]
	\centering
	\renewcommand{\arraystretch}{1}
	\tabcolsep=0.5mm
	\caption{Numerical performance on the Houston dataset. Best results are in boldface.}
	\resizebox{\linewidth}{!}{\begin{tabular}{c|cccc|cccc}
			\toprule
			\hline
			\multirow{3}{*}{\textbf{Methods}}&\multicolumn{8}{c}{\textbf{Setup}} \\\cline{2-9}\quad& \multicolumn{4}{c|}{\textbf{Non-blind}} &\multicolumn{4}{c}{\textbf{Blind}}\\\cline{2-9} \quad
			&  PSNR$\uparrow$                 & {ERGAS}$\downarrow$    & SAM$\downarrow$    & SSIM$\uparrow$  &    PSNR$\uparrow$                 & {ERGAS}$\downarrow$    & SAM$\downarrow$    & SSIM$\uparrow$   \\\hline
			{Bicubic}            &  27.9764   &  2.8896   &   6.5494  &   0.6230 &  27.9764   &  2.8896   &   6.5494  &   0.6230\\
			Hysure            &   48.2238   &  0.5736   &   1.4917  &   0.9927 &  46.3990   &  0.5866   &   1.5930  &   0.9922 \\ LTTR 
			&        48.1071          &  0.5894   &   1.6414  &  0.9793   &          46.8301          &  0.5943   &   1.7098  &  0.9793   \\
			LRTA             & 46.7599   & 0.4785    & 1.1938    &   0.9922    & 46.5129   & 0.4790    & 1.3003    &   0.9919  \\ SURE           & 44.3703   &  0.8480   &   1.6479  &  0.9769   &  44.8562   &  0.7868   &   1.4291  &  0.9829   \\       ASLA          &  48.7790   &  0.8554   &  1.5664   & 0.9933 &   47.6438   &  0.7465   &  1.3068   & 0.9943    \\        ZSL           &   46.1368  &  0.5463   &  1.1997   &  0.9911  &   45.5776  &  0.5667   &  1.3697   &  0.9912 \\        GTNN         &   48.2029  &  0.5980   &  1.5338   &  0.9851 &  46.7844   &   0.6262  &   1.6013  &  0.9848  \\     CMlpTR            &  \textbf{49.7612}   &   \textbf{0.4334}  &  \textbf{0.9367}   &  \textbf{0.9953}   & \textbf{48.4184}   &   \textbf{0.4591}  &  \textbf{1.0784}   &  \textbf{0.9946}   \\
			\hline
			\bottomrule
	\end{tabular}}
	\label{tab:Houston metrics}
\end{table}
\begin{figure}[htbp!]
	\centering
	\setlength{\tabcolsep}{0.2mm}
	\begin{tabular}{m{0.2\linewidth}m{0.2\linewidth}m{0.2\linewidth}m{0.2\linewidth}m{0.2\linewidth}}
		\includegraphics[width=\linewidth]{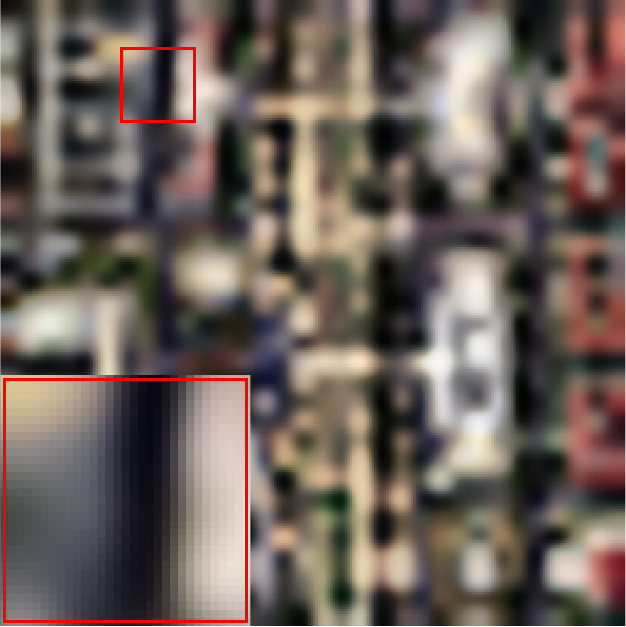}&
		\includegraphics[width=\linewidth]{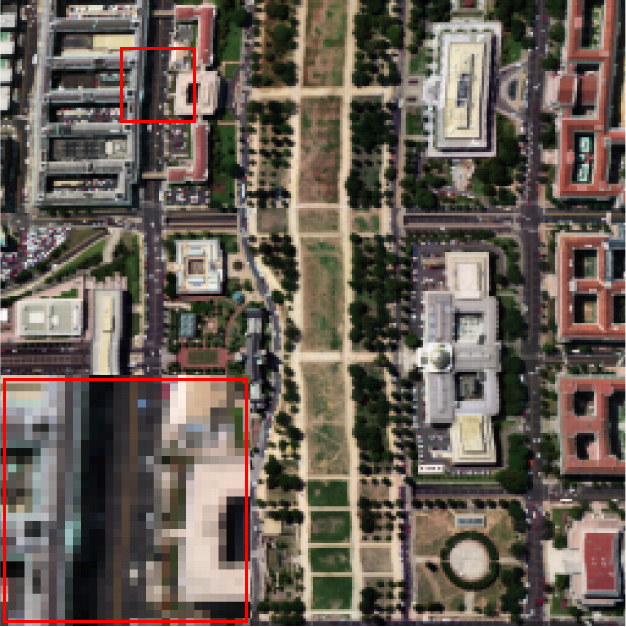}  &
		\includegraphics[width=\linewidth]{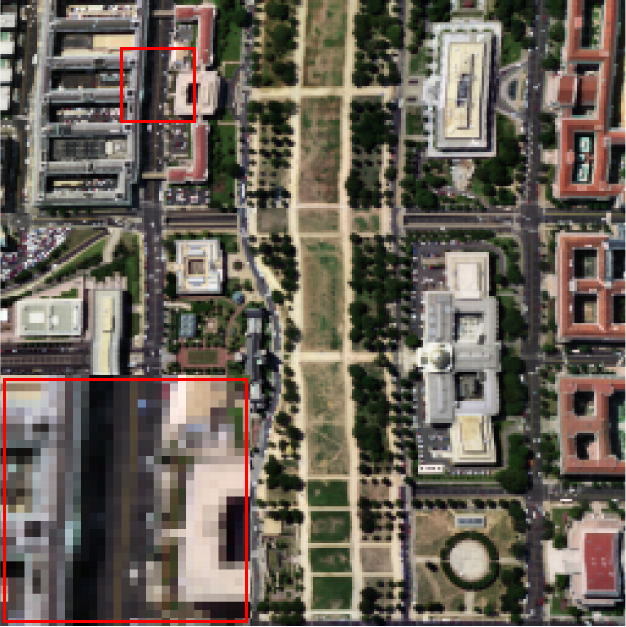}    &
		\includegraphics[width=\linewidth]{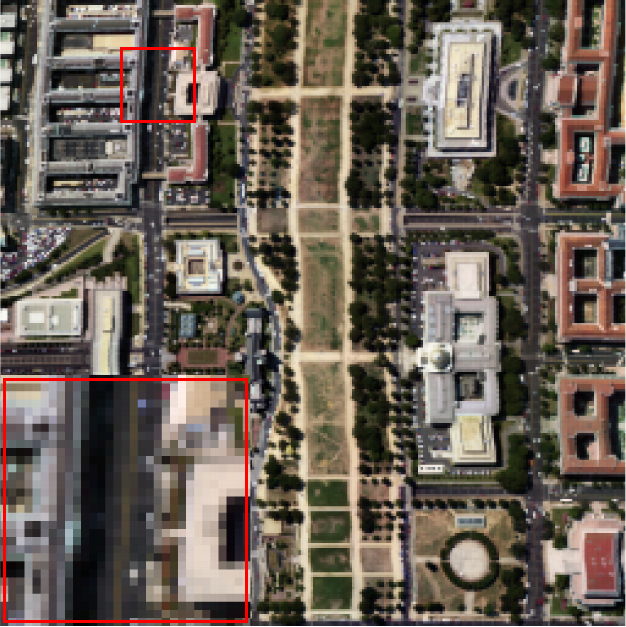}   &
		\includegraphics[width=\linewidth]{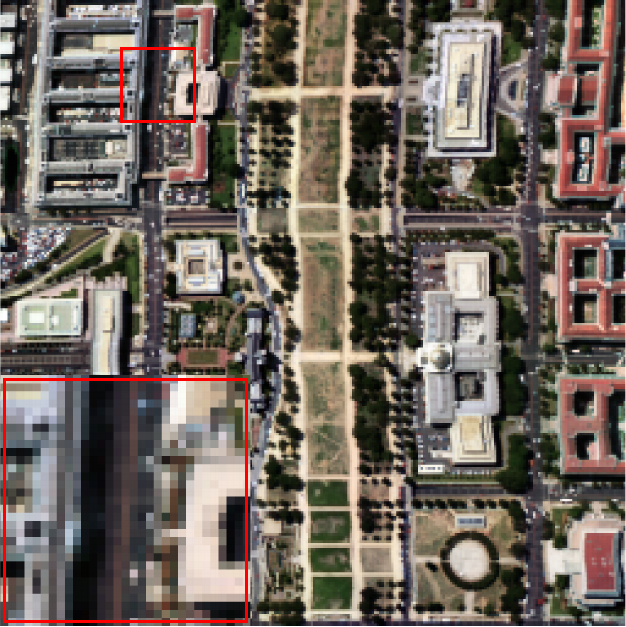}\\
		\includegraphics[width=\linewidth]{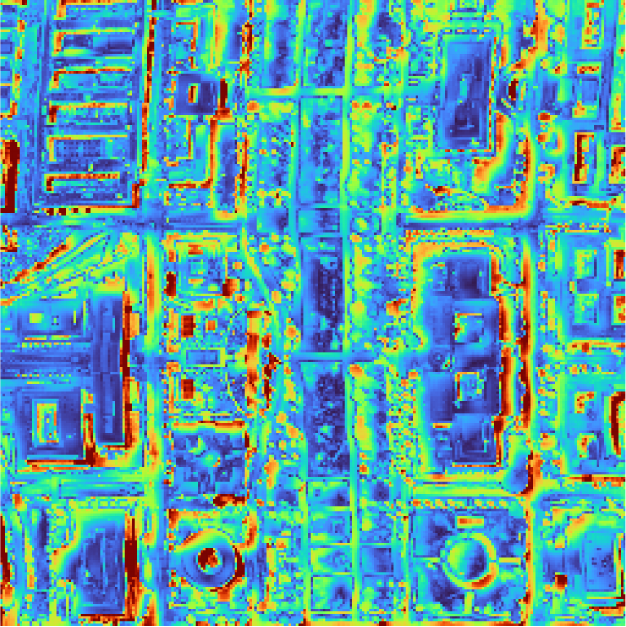}&
		\includegraphics[width=\linewidth]{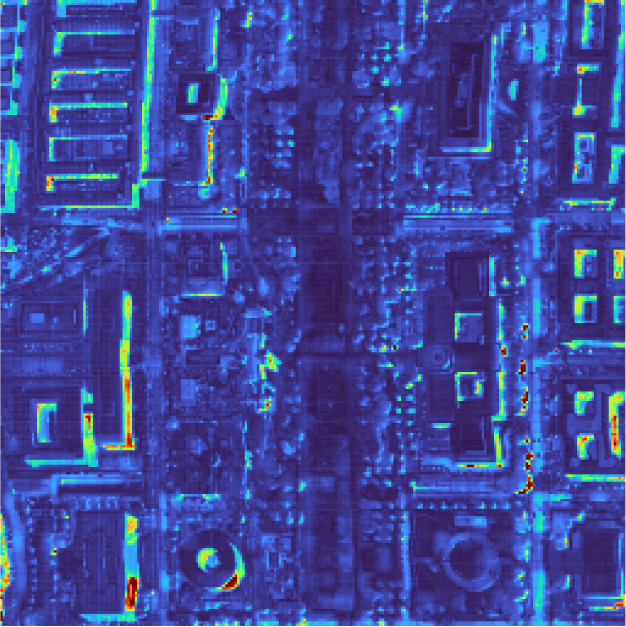}  &
		\includegraphics[width=\linewidth]{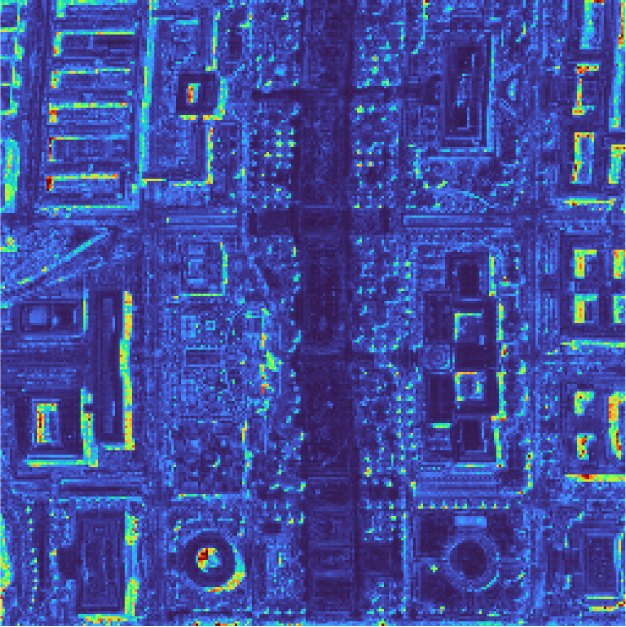}    &
		\includegraphics[width=\linewidth]{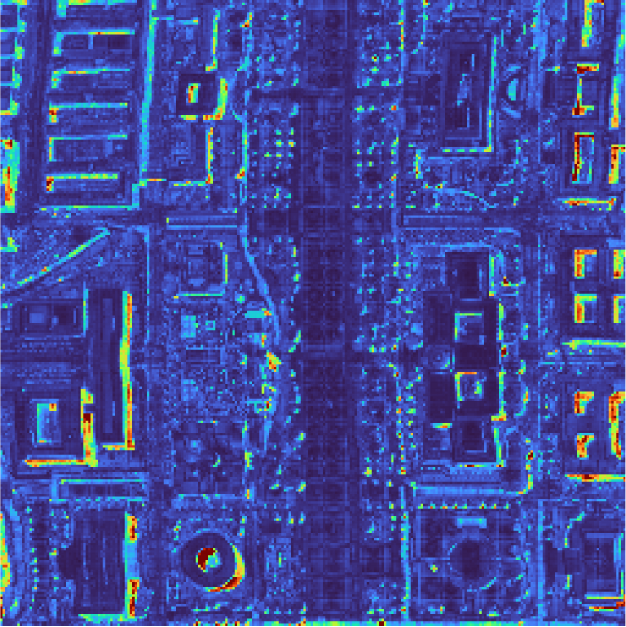}   &
		\includegraphics[width=\linewidth]{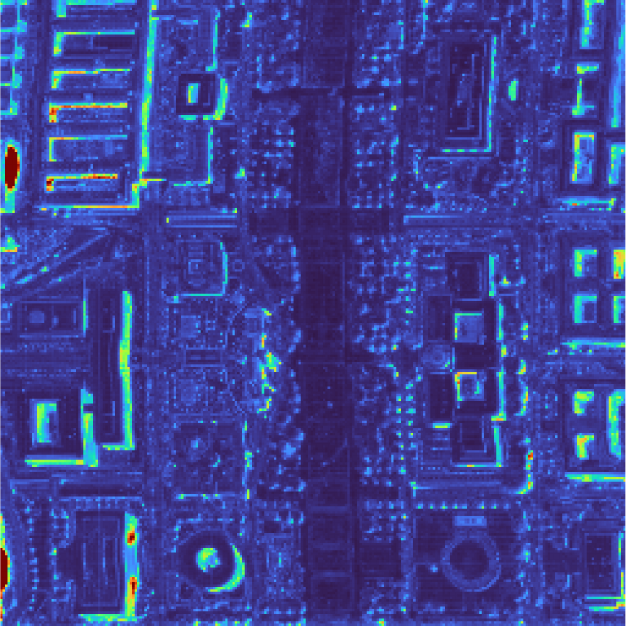}\\
		\multicolumn{1}{c}{\footnotesize{Bicubic}}
		&\multicolumn{1}{c}{\footnotesize{Hysure}}
		& \multicolumn{1}{c}{\footnotesize{LTTR}}
		& \multicolumn{1}{c}{\footnotesize{LRTA}}
		& \multicolumn{1}{c}{\footnotesize{SURE}}\\
		\includegraphics[width=\linewidth]{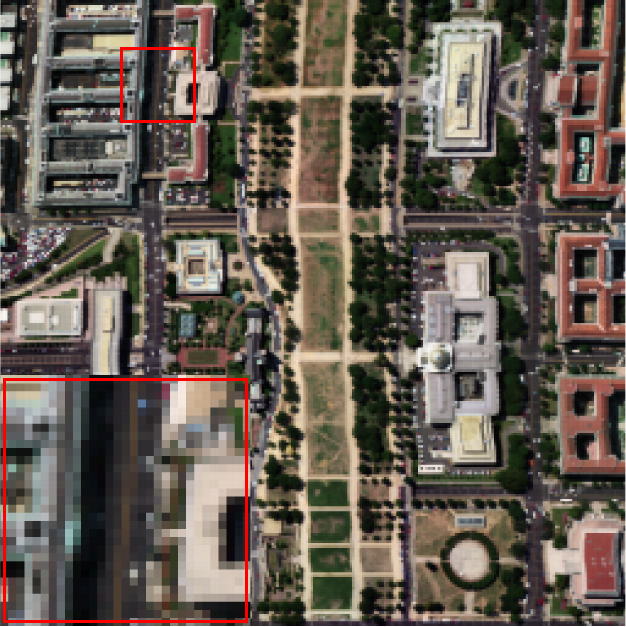}  &
		\includegraphics[width=\linewidth]{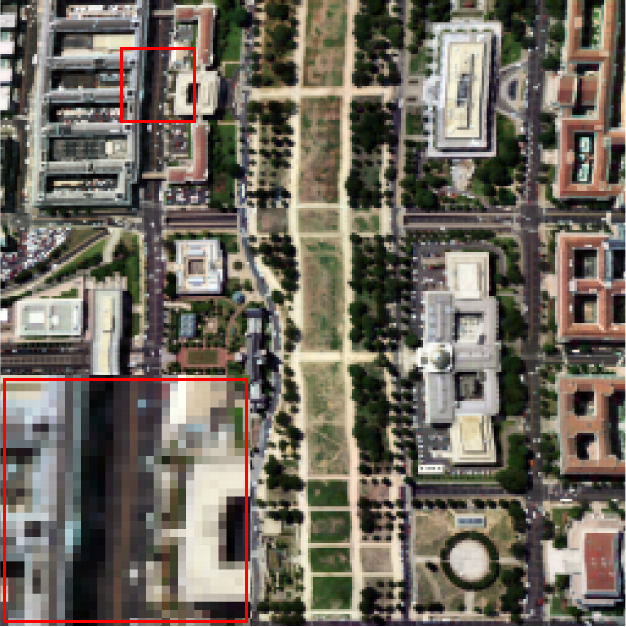}  &
		\includegraphics[width=\linewidth]{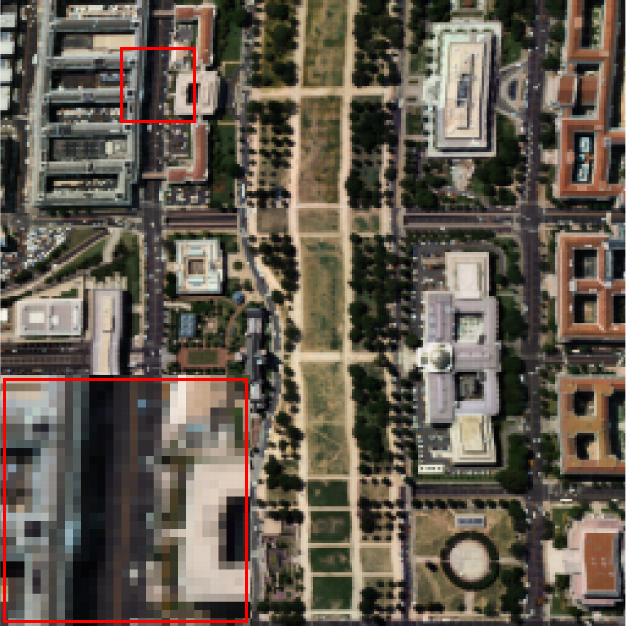}    &
		\includegraphics[width=\linewidth]{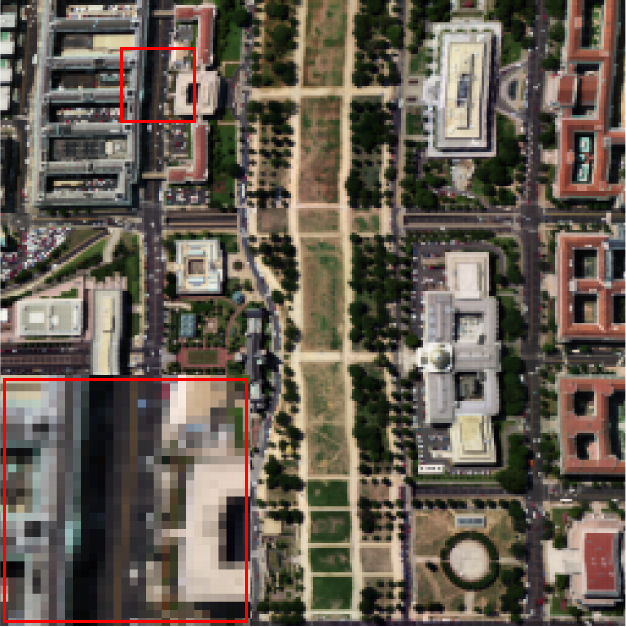}   &
		\includegraphics[width=\linewidth]{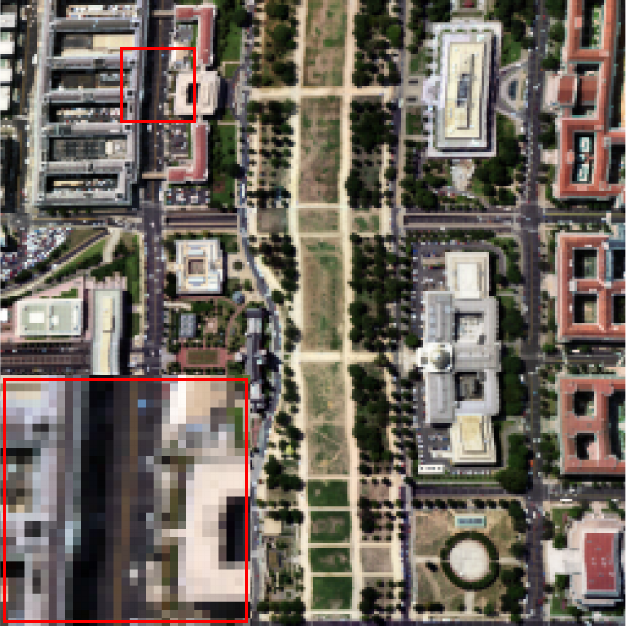}  \\
		\includegraphics[width=\linewidth]{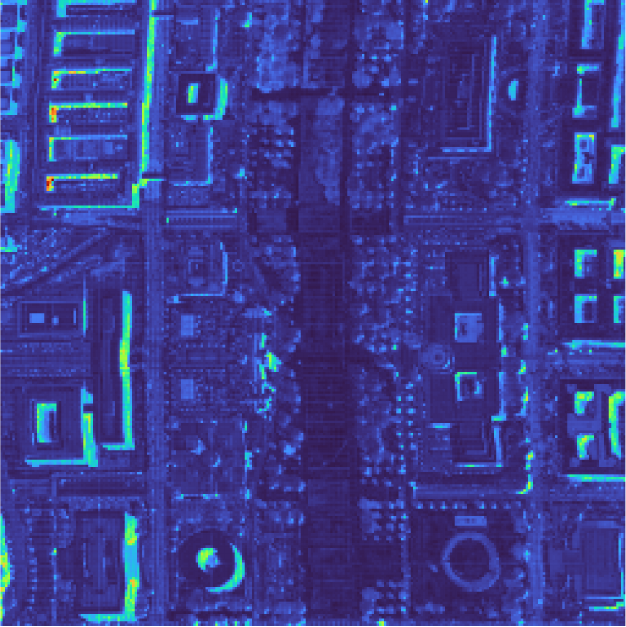}  &
		\includegraphics[width=\linewidth]{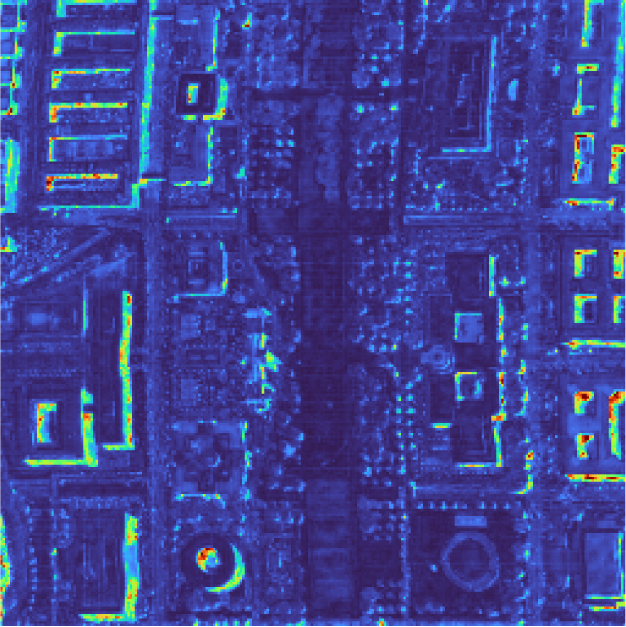}  &
		\includegraphics[width=\linewidth]{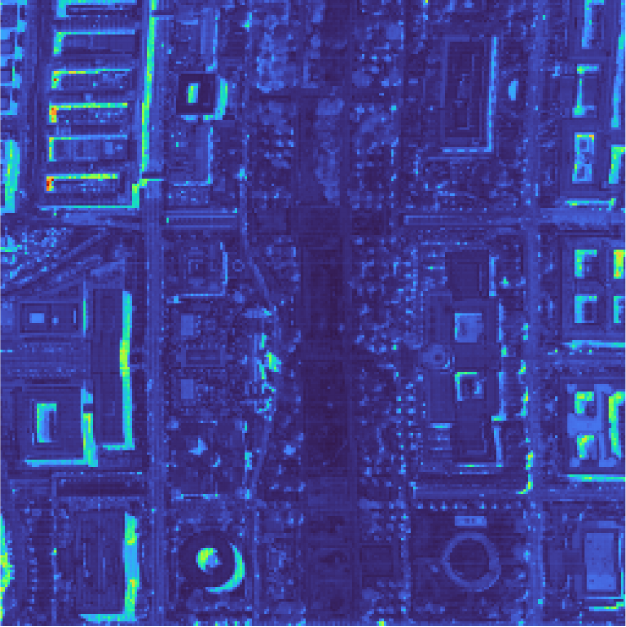}    &
		\includegraphics[width=\linewidth]{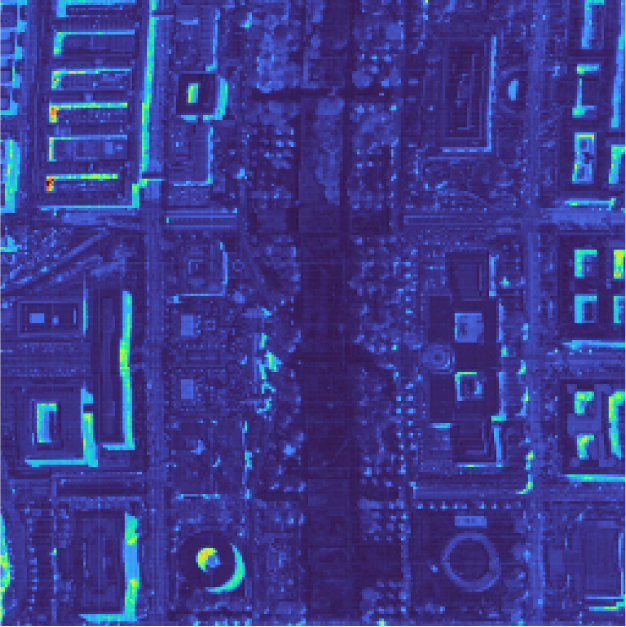}   &
		\includegraphics[width=\linewidth]{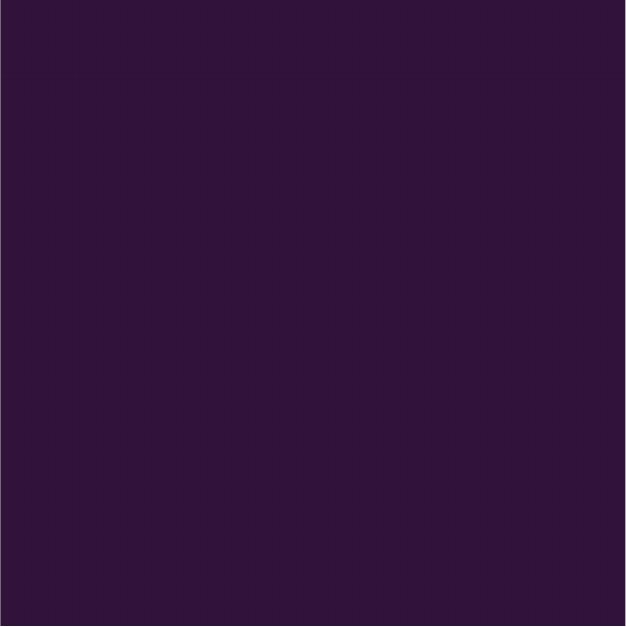} \\
		\multicolumn{1}{c}{\footnotesize{ASLA}}
		&\multicolumn{1}{c}{\footnotesize{ZSL}}
		& \multicolumn{1}{c}{\footnotesize{GTNN}}
		& \multicolumn{1}{c}{\footnotesize{CMlpTR}}
		& \multicolumn{1}{c}{\footnotesize{GT}}\\
		\multicolumn{5}{c}{\includegraphics[width=0.5\linewidth]{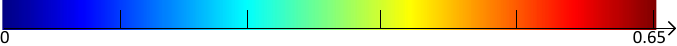}} 
	\end{tabular}
	\caption{\label{fig:WDC visualization 1} Non-blind fusion results and error maps on the WDC dataset. {Pseudo-color is composed of bands 40, 30 and 20.} Error maps are calculated by the pixel-wise SAM.}
\end{figure}
\subsubsection{HSR Results on URBAN Dataset}
The visual results are displayed in Figs. \ref{fig:URBAN visualization 1} and \ref{fig:URBAN visualization 2}. In the non-blind results, all methods end up with fine spatial details, only ASLA and GTNN exhibit slight color deviations and appear hazy. Besides, SURE, ZSL, GTNN and the proposed CMlpTR generate the most darkest error maps, among which our CMlpTR is the best at suppressing higher error values. Similar conclusions can be drawn from the blind results in Fig. \ref{fig:URBAN visualization 2}, except that all error maps have become brighter due to the estimation inaccuracies in the degradation matrices. The conclusions above are further strengthened by the numerical results in Tab. \ref{tab:URBAN metrics}, in which the proposed CMlpTR yielded the best values on all indexes, while SURE, ZSL and GTNN are the most competitive having a look at the SAM metric.
\subsubsection{HSR Results on Houston Dataset}
The results on the Houston dataset are depicted in Figs. \ref{fig:Houston visualization 1}-\ref{fig:Houston visualization 2} and numerically reported in Tab. \ref{tab:Houston metrics}. By the overall brightness of the error maps, it appears that ASLA, ZSL and the proposed CMlpTR deliver the closest reconstruction. However, ZSL suffers from block-like artifacts and ASLA is not as good as our CMlpTR in background suppression. Even though TV constraints are considered in ASLA, the multi-constraint paradigm leaves the problem of mutual interference. Besides, hinted by the numerical results in Tab. \ref{tab:Houston metrics}, it is noticed that LRTA is better at reconstructing the bright regions in our CMlpTR's error map, leading to its competitiveness observing the SAM metric, even though LRTA shows random high-error pixels due to the neglect of TV priors.
\begin{figure}[htbp!]
	\centering
	\setlength{\tabcolsep}{0.2mm}
	\begin{tabular}{m{0.2\linewidth}m{0.2\linewidth}m{0.2\linewidth}m{0.2\linewidth}m{0.2\linewidth}}
		\includegraphics[width=\linewidth]{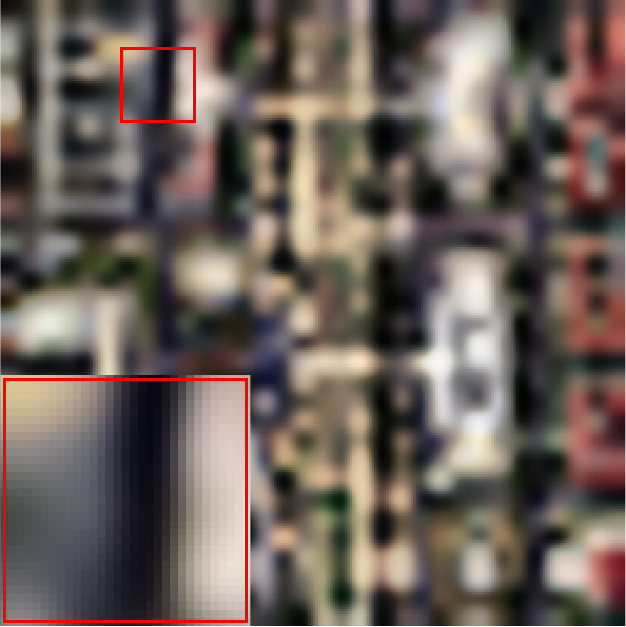}&
		\includegraphics[width=\linewidth]{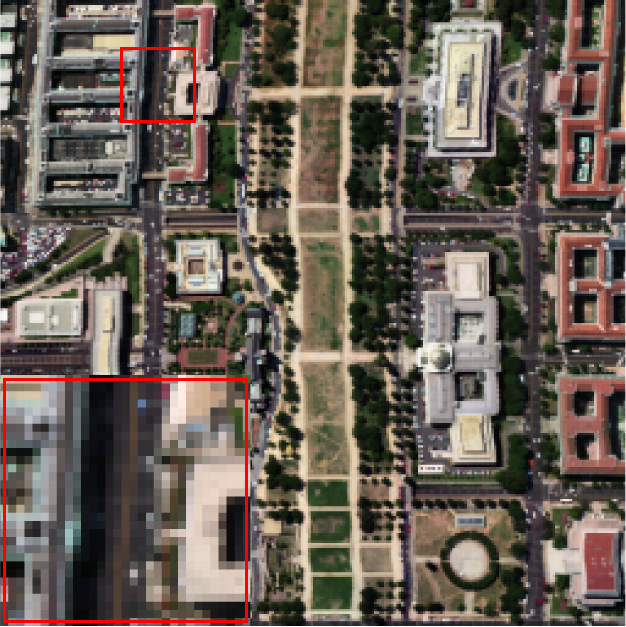}  &
		\includegraphics[width=\linewidth]{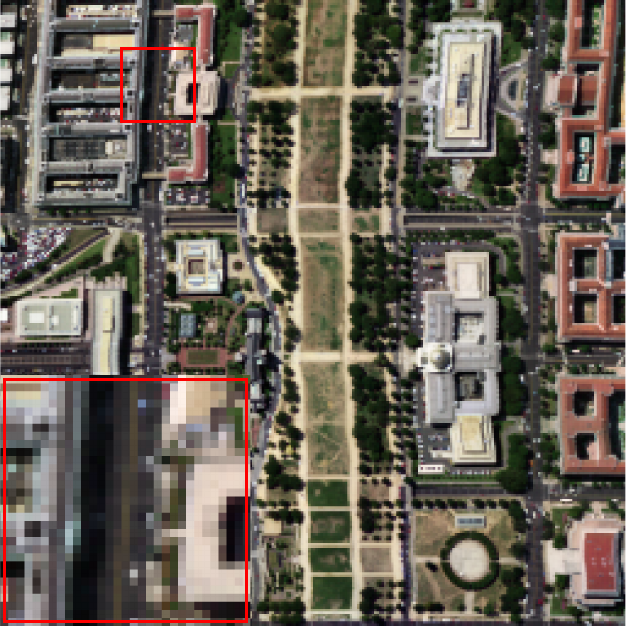}    &
		\includegraphics[width=\linewidth]{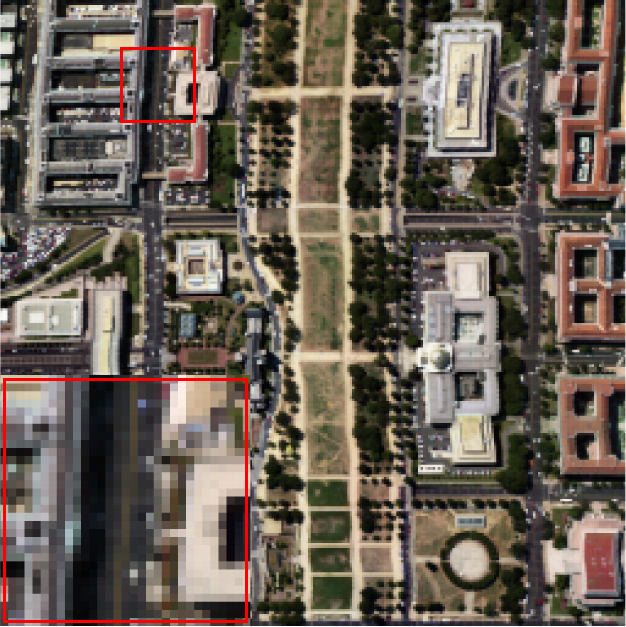}   &
		\includegraphics[width=\linewidth]{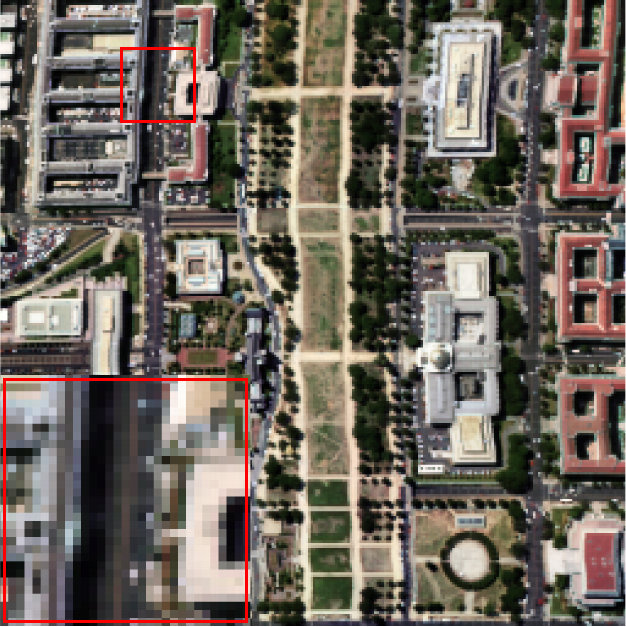}\\
		\includegraphics[width=\linewidth]{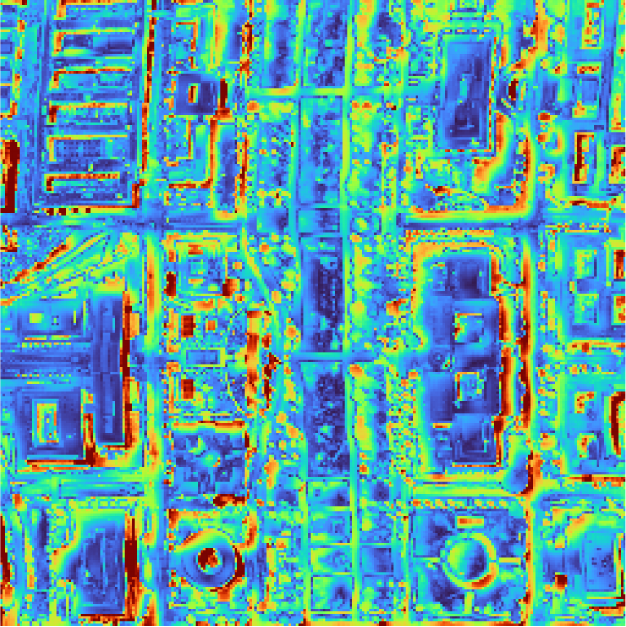}&
		\includegraphics[width=\linewidth]{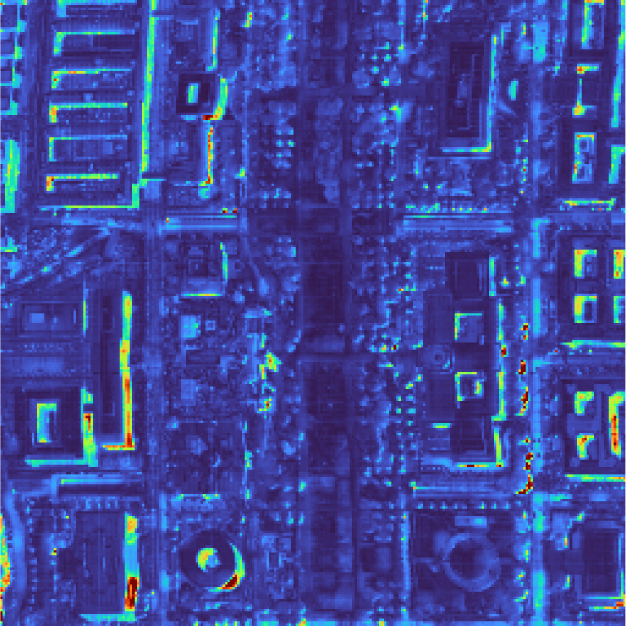}  &
		\includegraphics[width=\linewidth]{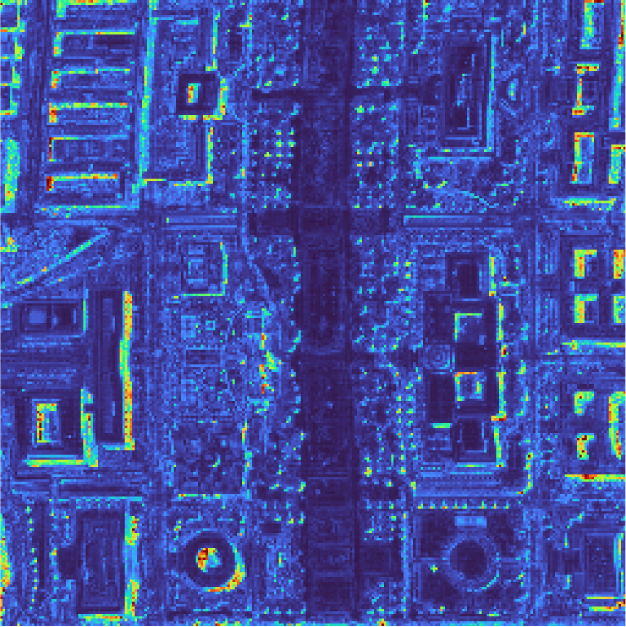}    &
		\includegraphics[width=\linewidth]{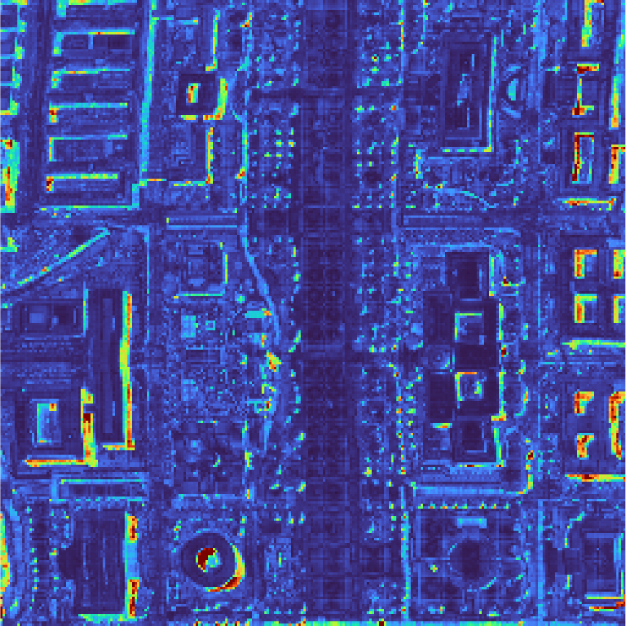}   &
		\includegraphics[width=\linewidth]{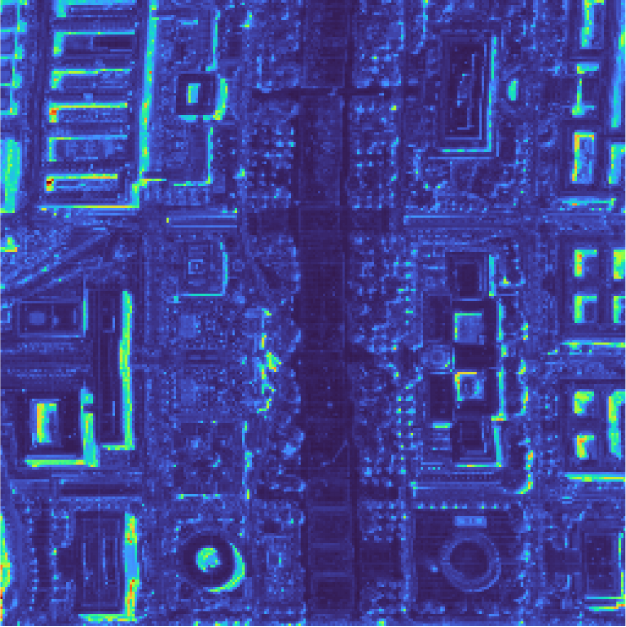}\\
		\multicolumn{1}{c}{\footnotesize{Bicubic}}
		&\multicolumn{1}{c}{\footnotesize{Hysure}}
		& \multicolumn{1}{c}{\footnotesize{LTTR}}
		& \multicolumn{1}{c}{\footnotesize{LRTA}}
		& \multicolumn{1}{c}{\footnotesize{SURE}}\\
		\includegraphics[width=\linewidth]{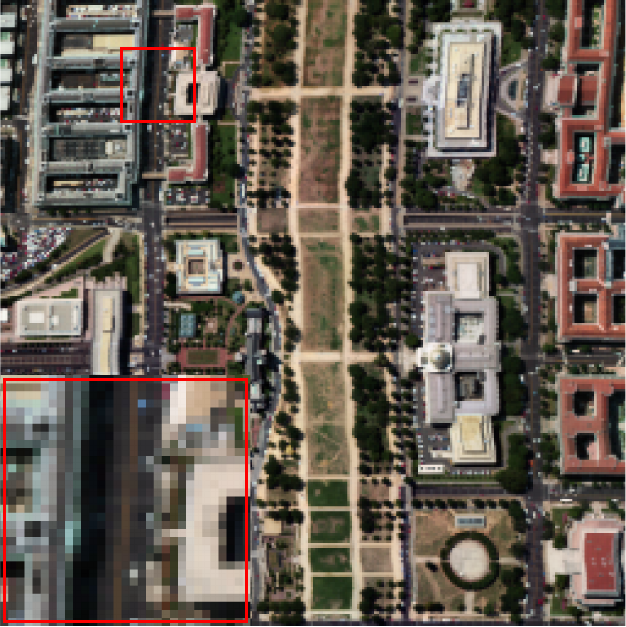}  &
		\includegraphics[width=\linewidth]{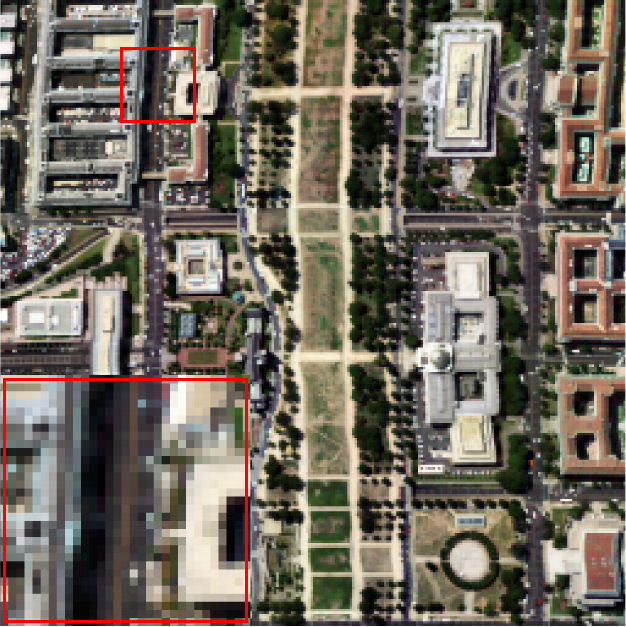}  &
		\includegraphics[width=\linewidth]{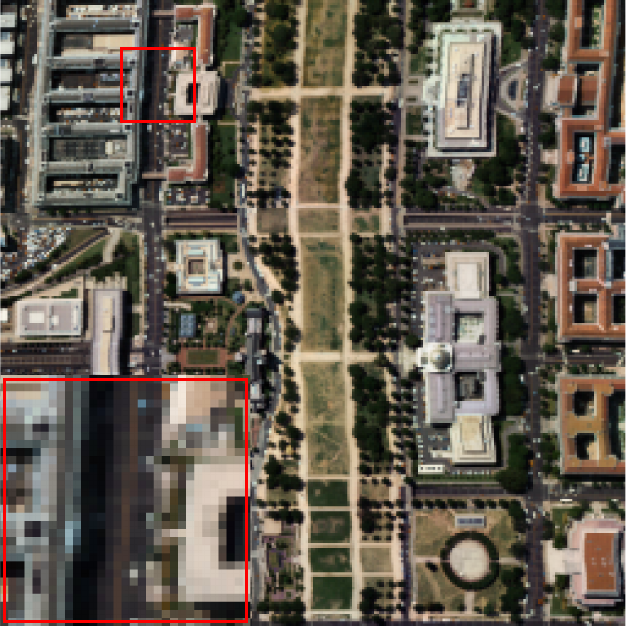}    &
		\includegraphics[width=\linewidth]{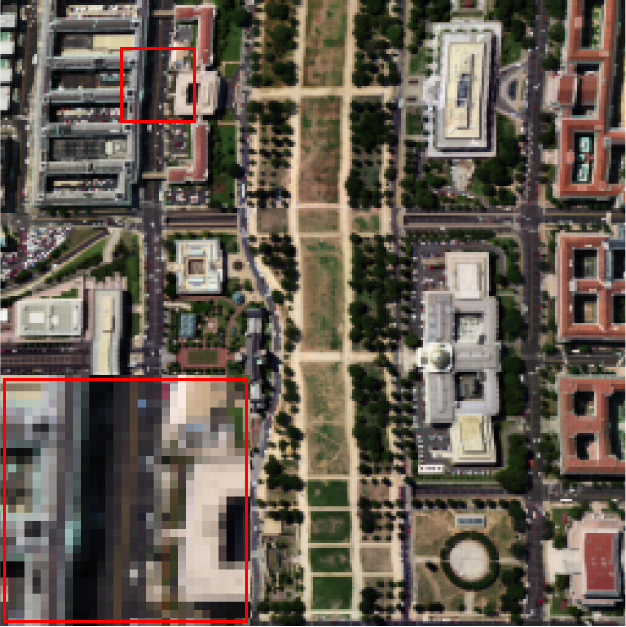}   &
		\includegraphics[width=\linewidth]{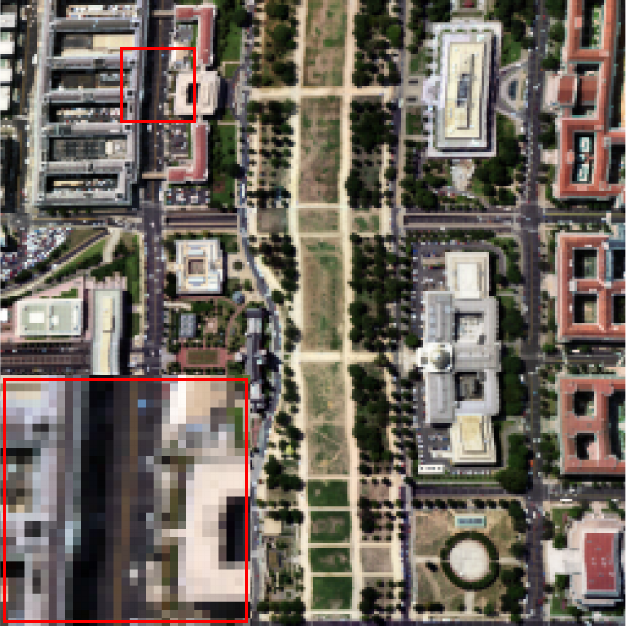}  \\
		\includegraphics[width=\linewidth]{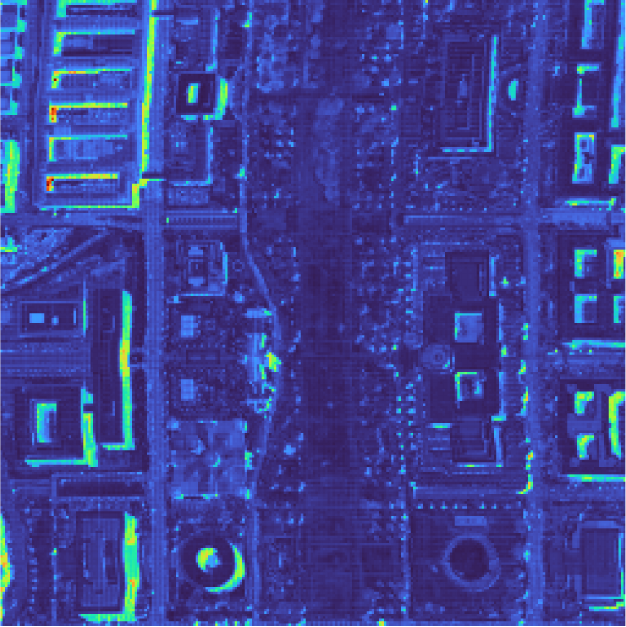}  &
		\includegraphics[width=\linewidth]{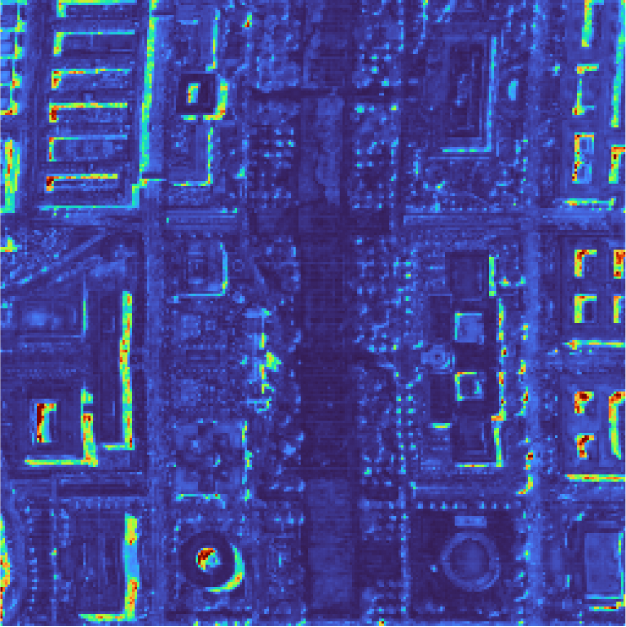}  &
		\includegraphics[width=\linewidth]{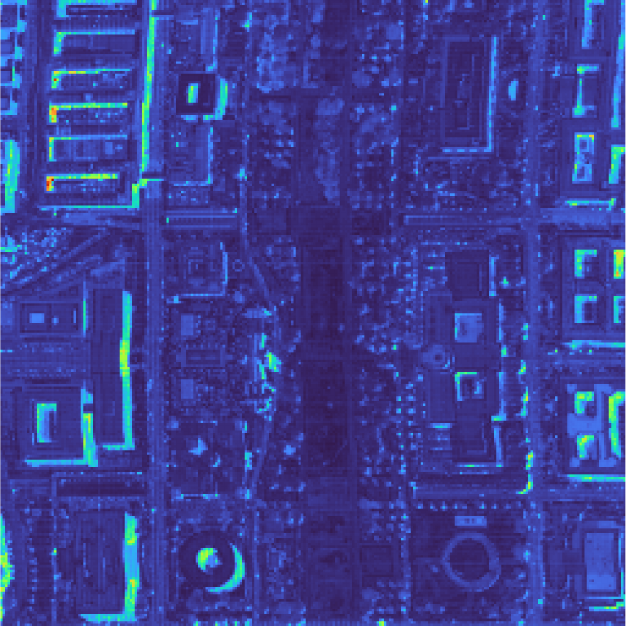}    &
		\includegraphics[width=\linewidth]{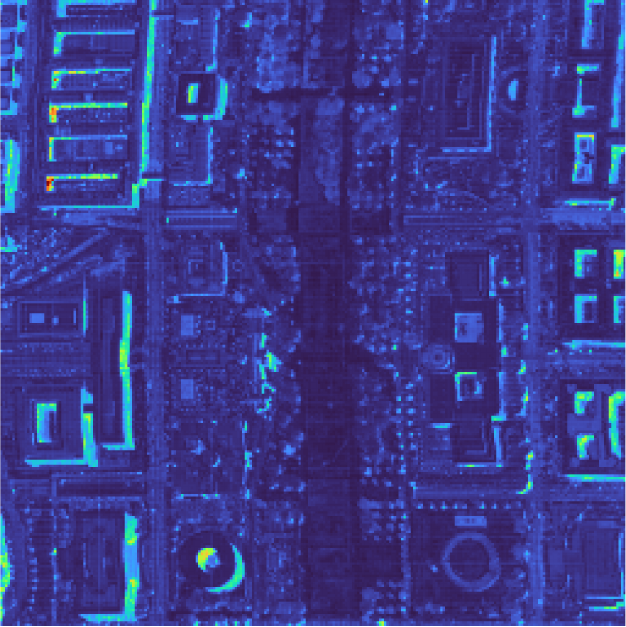}   &
		\includegraphics[width=\linewidth]{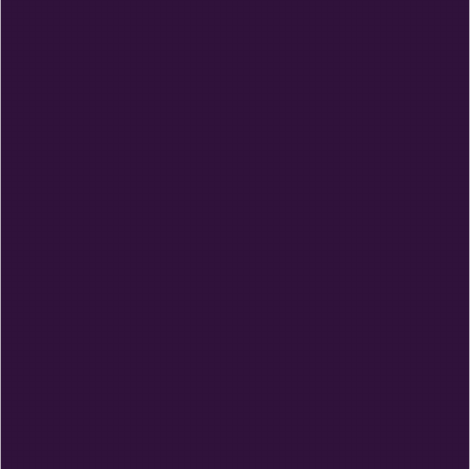} \\
		\multicolumn{1}{c}{\footnotesize{ASLA}}
		&\multicolumn{1}{c}{\footnotesize{ZSL}}
		& \multicolumn{1}{c}{\footnotesize{GTNN}}
		& \multicolumn{1}{c}{\footnotesize{CMlpTR}}
		& \multicolumn{1}{c}{\footnotesize{GT}}\\
		\multicolumn{5}{c}{\includegraphics[width=0.5\linewidth]{WDC_nblind_colorbar.pdf}} 
	\end{tabular}
	\caption{\label{fig:WDC visualization 2} Blind fusion results and error maps on the WDC dataset. {Pseudo-color is composed of bands 40, 30 and 20.} Error maps are calculated by the pixel-wise SAM.}
\end{figure}
\begin{table}[htbp]
	\centering
	\renewcommand{\arraystretch}{1}
	\tabcolsep=0.5mm
	\caption{Numerical performance on the WDC dataset. Best results are in boldface.}
	\resizebox{\linewidth}{!}{\begin{tabular}{c|cccc|cccc}
			\toprule
			\hline
			\multirow{3}{*}{\textbf{Methods}}&\multicolumn{8}{c}{\textbf{Setup}} \\\cline{2-9}\quad& \multicolumn{4}{c|}{\textbf{Non-blind}} &\multicolumn{4}{c}{\textbf{Blind}}\\\cline{2-9} \quad
			&  PSNR$\uparrow$                 & {ERGAS}$\downarrow$    & SAM$\downarrow$    & SSIM$\uparrow$  &    PSNR$\uparrow$                 & {ERGAS}$\downarrow$    & SAM$\downarrow$    & SSIM$\uparrow$   \\\hline
			{Bicubic}            &  19.6920   &  7.1549   &   12.3534  &   0.2588 &  19.6920   &  7.1549   &   12.3534  &   0.2588
			\\Hysure            &   43.6645   &  0.6142   &   3.2399  &   0.9936 &  43.4473   &  0.6048   &   3.2742  &   0.9935 \\ LTTR 
			&       42.9732          &  0.7308   &   3.5122  &  0.9886    &           42.7777          &  0.7406   &   3.6109  &  0.9883    \\
			LRTA             & 43.8438   & 0.6604    & 3.6210    &   0.9890   & 43.6905   & 0.6624    & 3.6378    &   0.9890  \\ SURE           & 42.2816   &  0.5884   &   2.9213  &  0.9931   & 42.0958   &  0.5961   &   2.8804  &  0.9928   \\       ASLA          &   \textbf{45.1584}   &  0.5735   &  2.4119   & 0.9958  &    44.3206   &  0.5838   &  2.7515   & 0.9950  \\        ZSL           &    42.3886  &  0.5939   &  2.9659   &  0.9935  &   42.9526  &  0.6144   &  3.1297   &  0.9932  \\        GTNN         &   43.2923  &  0.5920   &  2.5136   &  0.9949 & 43.2412   &   0.5917  &   2.5163  &  0.9949   \\     CMlpTR            & 45.0320   &   \textbf{0.4819}  &  \textbf{2.2969}   &  \textbf{0.9959}   & \textbf{44.9592}   &   \textbf{0.4817}  &  \textbf{2.2971}   &  \textbf{0.9959}   \\
			\hline
			\bottomrule
	\end{tabular}}
	\label{tab:WDC metrics}
\end{table}
\subsubsection{HSR Results on WDC Dataset}
The results on the WDC dataset are depicted in Figs. \ref{fig:WDC visualization 1}-\ref{fig:WDC visualization 2} and numerically reported in Tab. \ref{tab:Houston metrics}. These visual results demonstrate that the WDC dataset presents greater textural complexity compared to the other two datasets, as evidenced by the pronounced high-frequency patterns in the error maps and the extended dynamic range of the colorbar. In Tab. \ref{tab:WDC metrics}, the proposed CMlpTR performs the best while ASLA turns out to be the most competitive achieving the best PSNR value in the non-blind experiment. The visual results agree with the numerical ones, where ASLA, GTNN and CMlpTR obtain the darkest error maps.
\section{Conclusion}
\label{sec:Conclusion}
This paper raised the attention on the difficulty in the co-representation of multi-level-priors in the HSR task. Through the proposal of our NMS-t-CTV and combining it with the BTD, we achieved compact and tight representations of multi-level and multi-dimensional structural correlations inherent in HSSI while maintaining theoretical soundness. To efficiently solve the resulting high-dimensional optimization problem, a LADMM-inspired algorithm was customized, with convergence guarantees established under practical boundedness conditions. The compactness of the proposed model not only facilitated convergence analysis but also saved it from the problem of mutual-interference between multiple constraints. Extensive experiments on both non-blind and blind HSR tasks demonstrated the practical effectiveness of the proposed approach, showing improved performance in preserving spectral-spatial details while keeping computational demands manageable.
\appendix
\subsection{Proof of Prop. \ref{Prop: Mlp}}
\label{App: Proof 1}
\begin{proof}
	In analogy with the proof architecture of Prop. 2 in the Supplementary Material of \cite{10078018}, on one hand, we have 
	\begin{align*}
		 &\quad rank_{\overset{3-n}{t}}\left(\mathcal{A}\times_n\boldsymbol{D}_{I_n}\right)\\&=rank_{\overset{3-n}{t}}\left(\mathcal{Z}\times_n\boldsymbol{D}_{I_n}\times_3\boldsymbol{S}^\ast\right)\\&=\underset{i\in\left\{1,2,\cdots,I_{3-n}\right\}}{\max} rank\left(\overline{\mathtt{P}_{3-n}\left(\mathcal{Z}\times_n\boldsymbol{D}_{I_n}\times_3\boldsymbol{S}^\ast\right)}_{:,:,i}\right)\\&=\underset{i\in\left\{1,2,\cdots,I_{3-n}\right\}}{\max} rank\left(\boldsymbol{D}_{I_n}\overline{\mathtt{P}_{3-n}\left(\mathcal{Z}\right)}_{:,:,i}\boldsymbol{S}^\ast\right)\\&=\underset{i\in\left\{1,2,\cdots,I_{3-n}\right\}}{\max} rank\left(\boldsymbol{D}_{I_n}\overline{\mathtt{P}_{3-n}\left(\mathcal{Z}\right)}_{:,:,i}\right)\\&\geq\underset{i\in\left\{1,2,\cdots,I_{3-n}\right\}}{\max} \Big\{rank\left(\boldsymbol{D}_{I_n}\right)+rank\left(\overline{\mathtt{P}_{3-n}\left(\mathcal{Z}\right)}_{:,:,i}\right)-I_n\Big\}\\&=\underset{i\in\left\{1,2,\cdots,I_{3-n}\right\}}{\max} \left\{rank\left(\overline{\mathtt{P}_{3-n}\left(\mathcal{Z}\right)}_{:,:,i}\right)-1\right\}\\&=rank_{\overset{3-n}{t}}\left(\mathcal{Z}\right)-1.
	\end{align*}
	On the other hand, 
	\begin{align*}
		 rank_{\overset{3-n}{t}}\left(\mathcal{A}\times_n\boldsymbol{D}_{I_n}\right)&=\underset{i\in\left\{1,2,\cdots,I_{3-n}\right\}}{\max} rank\left(\boldsymbol{D}_{I_n}\overline{\mathtt{P}_{3-n}\left(\mathcal{Z}\right)}_{:,:,i}\right)\\&\leq\underset{i\in\left\{1,2,\cdots,I_{3-n}\right\}}{\max} rank\left(\overline{\mathtt{P}_{3-n}\left(\mathcal{Z}\right)}_{:,:,i}\right)\\&=rank_{\overset{3-n}{t}}\left(\mathcal{Z}\right).
	\end{align*}
	Thus, Eq. \eqref{eq: Mlp1} is proven.
	We proceed to prove Eq. \eqref{eq: Mlp2}. Denoting $\mathcal{G}_n=\mathcal{A}\times_n\boldsymbol{D}_{I_n}$, let $\mathcal{U}\star\mathcal{S}\star\mathcal{V}^\ast$ be the t-SVD of $\mathtt{P}_{3-n}\left(\mathcal{G}_n\right)=\mathsf{permute}\left(\mathcal{G}_n,[n,3,3-n]\right)$, then we can write
	\begin{align*}
		\left\lVert\mathcal{G}_n\right\lVert_{\overset{3-n}{\ast},\psi}=\frac{1}{I_{3-n}}\sum_{i=1}^{I_{3-n}}\sum_{j=1}^{\min\left\{I_n,R\right\}}\psi(\overline{\mathcal{S}}\left[j,j,i\right]).
	\end{align*}
	Since $\mathcal{A}$ is bounded, $\exists\,x_B>0$, such that $\overline{\mathcal{S}}\left[j,j,i\right]\leq x_B,\,\forall\,i,j.$ Let $b=\frac{\psi(x_B)}{x_B}$, then based on the properties of $\psi(\cdot)$, $\psi(\overline{\mathcal{S}}\left[j,j,i\right])\geq b\overline{\mathcal{S}}\left[j,j,i\right],\,\forall\,i,j$. Thereupon,
	\begin{align*}
		\left\lVert\mathcal{G}_n\right\lVert_{\overset{3-n}{\ast},\psi}&=\frac{1}{I_{3-n}}\sum_{i=1}^{I_{3-n}}\sum_{j=1}^{\min\left\{I_n,R\right\}}\psi(\overline{\mathcal{S}}\left[j,j,i\right])\\&\geq\frac{b}{I_{3-n}}\sum_{i=1}^{I_{3-n}}\sum_{j=1}^{\min\left\{I_n,R\right\}}\overline{\mathcal{S}}\left[j,j,i\right]\\&=\frac{b}{I_{3-n}}\left\lVert\overline{\mathtt{P}_{3-n}\left(\mathcal{G}_n\right)}\right\lVert_\ast\\&=\frac{b}{I_{3-n}}\left\lVert\overline{\mathtt{P}_{3-n}\left(\mathcal{G}_n\right)}\right\lVert_F\\&\geq\frac{b}{\sqrt{I_{3-n}}}\left\lVert{\mathtt{P}_{3-n}\left(\mathcal{G}_n\right)}\right\lVert_F\\&\geq\frac{b}{\sqrt{I^m}}\left\lVert{\mathcal{G}_n}\right\lVert_F\\&\geq\frac{b}{\sqrt{I^mI_1I_2R}}\left\lVert{\mathcal{G}_n}\right\lVert_1
	\end{align*}
	where $I^m\triangleq\max\left\{I_1,I_2\right\}$. It then follows that:
	\begin{align*}
		\left\lVert\mathcal{A}\right\lVert_{\overset{\sim}{\ast},\mathtt{\psi}}&\geq\frac{b}{2\sqrt{I^m}}\left\lVert{\mathcal{A}}\right\lVert_{TV},\\
		\left\lVert\mathcal{A}\right\lVert_{\overset{\sim}{\ast},\mathtt{\psi}}&\geq\frac{b}{2\sqrt{I^mI_1I_2R}}\left\lVert{\mathcal{A}}\right\lVert_{ATV}.
	\end{align*}
	Furthermore, let $g\triangleq\underset{x\rightarrow0^+}{\overline{\lim}}\psi(x)$, then based on the properties of $\psi(\cdot)$, we have $\psi(\overline{\mathcal{S}}\left[j,j,i\right])\leq g\overline{\mathcal{S}}\left[j,j,i\right]$. Then, it is derived that:
	\begin{align*}
		\left\lVert\mathcal{G}_n\right\lVert_{\overset{3-n}{\ast},\psi}&=\frac{1}{I_{3-n}}\sum_{i=1}^{I_{3-n}}\sum_{j=1}^{\min\left\{I_n,R\right\}}\psi(\overline{\mathcal{S}}\left[j,j,i\right])\\&\leq\frac{g}{I_{3-n}}\sum_{i=1}^{I_{3-n}}\sum_{j=1}^{\min\left\{I_n,R\right\}}\overline{\mathcal{S}}\left[j,j,i\right]\\&=\frac{g}{I_{3-n}}\left\lVert\overline{\mathtt{P}_{3-n}\left(\mathcal{G}_n\right)}\right\lVert_\ast\\&\leq g\sqrt{\frac{\min\left\{I_n,R\right\}}{I_{3-n}}}\left\lVert\overline{\mathtt{P}_{3-n}\left(\mathcal{G}_n\right)}\right\lVert_F\\&= g\sqrt{\min\left\{I_n,R\right\}}\left\lVert{\mathtt{P}_{3-n}\left(\mathcal{G}_n\right)}\right\lVert_F\\&= g\sqrt{\min\left\{I_n,R\right\}}\left\lVert{\mathcal{G}_n}\right\lVert_F\\&\leq g\sqrt{\min\left\{I_n,R\right\}}\left\lVert{\mathcal{G}_n}\right\lVert_1,
	\end{align*}
	which leads to
	\begin{align*}
		\left\lVert\mathcal{A}\right\lVert_{\overset{\sim}{\ast},\mathtt{\psi}}&\leq g\sqrt{2\min\left\{I_n,R\right\}}\left\lVert{\mathcal{A}}\right\lVert_{TV},\\
		\left\lVert\mathcal{A}\right\lVert_{\overset{\sim}{\ast},\mathtt{\psi}}&\leq g\sqrt{\min\left\{I_n,R\right\}}\left\lVert{\mathcal{A}}\right\lVert_{ATV}.
	\end{align*}
	Hence, Eq. \eqref{eq: Mlp2} is proven.
\end{proof}

\subsection{Proof of Thm. \ref{Thm: Convergence}}
\label{App: Proof 2}
\begin{proof}
	Proof of (1) and (2): 
	 According to the optimality of subproblem \eqref{eq: G-sub}, it holds in the $(t+1)$th iteration that:
	 \begin{equation}
	 	\begin{aligned}
	 		0\in\partial&\left\lVert\mathcal{G}_{n,t+1}\right\lVert_{\overset{3-n}{\ast},\psi}+2\rho_t\left(\mathcal{G}_{n,t+1}+\frac{\mathcal{M}_{n,t}}{\rho_t}-\mathcal{A}_{t+1}\times_n\boldsymbol{D}_{I_n}\right)
	 	\end{aligned}
	 	\nonumber
	 \end{equation}
	 where $\partial\left\lVert\mathcal{G}_{n,t+1}\right\lVert_{\overset{3-n}{\ast},\psi}$ denotes the subdifferential of $\left\lVert\cdot\right\lVert_{\overset{3-n}{\ast},\psi}$ at $\mathcal{G}_{n,t+1}$, whose existence is guaranteed by Lemma A1 in the Supplementary Material of \cite{9340243}, which together with \textbf{P1} and \textbf{P2} results in the boundedness of $\partial\left\lVert\mathcal{G}_{n,t+1}\right\lVert_{\overset{3-n}{\ast},\psi}$. Further exploiting the updating rule of $\mathcal{M}_n$, it is derived that:
	 \begin{align}
	 	\mathcal{M}_{n,t+1}\in-\frac{1}{2}\partial\left\lVert\mathcal{G}_{n,t+1}\right\lVert_{\overset{3-n}{\ast},\psi}
	 	\label{eq: G-opt}
	 \end{align}
	 Thus, $\mathcal{M}_{n,t+1}$ is bounded. Moreover, by the updating rule \eqref{eq: A-update}, 
	 \begin{align}
	 	&\quad\tau\left(\mathcal{A}_{t+1}-\mathcal{A}_t\right)\\&=-\bigtriangledown L_1\left(\mathcal{A}_t\right)\notag\\&=-2\Bigg(\mathcal{A}_t\times_1\boldsymbol{P}_1^*\boldsymbol{P}_1\times_2\boldsymbol{P}_2^*\boldsymbol{P}_2+\mathcal{A}_t\times_3\boldsymbol{S}^*\boldsymbol{P}_3^*\boldsymbol{P}_3\boldsymbol{S}+\notag\\&\quad\quad\quad\quad\sum_{n=1}^{2}\mathcal{A}_t\times_n\boldsymbol{D}_{I_n}^*\boldsymbol{D}_{I_n}-\left(\mathcal{X}+\frac{\mathcal{M}_{x,t}}{\rho_t}\right)\times_1\boldsymbol{P}_1^*\notag\\&\quad\quad\quad\quad\times_2\boldsymbol{P}_2^*\times_3\boldsymbol{S}^*-\left(\mathcal{Y}+\frac{\mathcal{M}_{y,t}}{\rho_t}\right)\times_3\boldsymbol{S}^*\boldsymbol{P}_3^*\notag\\&\quad\quad\quad-\sum_{n=1}^{2}\left(\mathcal{G}_{n,t}+\frac{\mathcal{M}_{n,t}}{\rho_t}\right)\times_n\boldsymbol{D}_{I_n}^*\Bigg).\label{eq: aux}
	 \end{align}
	 By the updating rule of $\mathcal{M}_{x}$,
	 $$\mathcal{X}+\frac{\mathcal{M}_{x,t}}{\rho_t}=\frac{\mathcal{M}_{x,t+1}}{\rho_t}+\mathcal{A}_{t+1}\times_1\boldsymbol{P}_1\times_2\boldsymbol{P}_2\times_3\boldsymbol{S}.$$
	 Then
	 \begin{align}
	 	&\quad\mathcal{A}_t\times_1\boldsymbol{P}_1^*\boldsymbol{P}_1\times_2\boldsymbol{P}_2^*\boldsymbol{P}_2-\left(\mathcal{X}+\frac{\mathcal{M}_{x,t}}{\rho_t}\right)\times_1\boldsymbol{P}_1^*\times_2\boldsymbol{P}_2^*\notag\\&\times_3\boldsymbol{S}^*=\left(\mathcal{A}_t-\mathcal{A}_{t+1}\right)\times_1\boldsymbol{P}_1^*\boldsymbol{P}_1\times_2\boldsymbol{P}_2^*\boldsymbol{P}_2\notag\\&\quad\quad-\frac{\mathcal{M}_{x,t+1}\times_1\boldsymbol{P}_1^*\times_2\boldsymbol{P}_2^*\times_3\boldsymbol{S}^*}{\rho_t}.
	 	\label{eq: aux1}
	 \end{align}
	 Similarly, by the updating rule of $\mathcal{M}_{y}$,
	 $$\mathcal{Y}+\frac{\mathcal{M}_{y,t}}{\rho_t}=\frac{\mathcal{M}_{y,t+1}}{\rho_t}+\mathcal{A}_{t+1}\times_3\boldsymbol{P}_3\boldsymbol{S}.$$
	 Then
	 \begin{align}
	 	&\quad\mathcal{A}_t\times_3\boldsymbol{S}^*\boldsymbol{P}_3^*\boldsymbol{P}_3\boldsymbol{S}-\left(\mathcal{Y}+\frac{\mathcal{M}_{y,t}}{\rho_t}\right)\times_3\boldsymbol{S}^*\boldsymbol{P}_3^*\notag\\&=\left(\mathcal{A}_t-\mathcal{A}_{t+1}\right)\times_3\boldsymbol{S}^*\boldsymbol{P}_3^*\boldsymbol{P}_3\boldsymbol{S}-\frac{\mathcal{M}_{y,t+1}\times_3\boldsymbol{S}^*\boldsymbol{P}_3^*}{\rho_t}.
	 	\label{eq: aux2}
	 \end{align}
	 Subsequently, we invoke the updating rule of $\mathcal{M}_n$ to obtain
	 $$\frac{\mathcal{M}_{n,t}}{\rho_t}=\frac{\mathcal{M}_{n,t+1}}{\rho_t}+\mathcal{A}_{t+1}\times_n\boldsymbol{D}_{I_n}-\mathcal{G}_{n,t+1}.$$
	 Then
	 \begin{align}
	 	&\quad\mathcal{A}_t\times_n\boldsymbol{D}_{I_n}^*\boldsymbol{D}_{I_n}-\left(\mathcal{G}_{n,t}+\frac{\mathcal{M}_{n,t}}{\rho_t}\right)\times_n\boldsymbol{D}_{I_n}^*\notag\\&=\left(\mathcal{A}_t-\mathcal{A}_{t+1}\right)\times_n\boldsymbol{D}_{I_n}^*\boldsymbol{D}_{I_n}\notag\\&\quad-\left(\mathcal{G}_{n,t}-\mathcal{G}_{n,t+1}+\frac{\mathcal{M}_{n,t+1}}{\rho_t}\right)\times_n\boldsymbol{D}_{I_n}^*.
	 	\label{eq: aux3}
	 \end{align}
	 We then use the updating rule of $\mathcal{M}_n$ again to yield
	 \begin{align*}
	 	\mathcal{G}_{n,t}&=\frac{\mathcal{M}_{n,t}-\mathcal{M}_{n,t-1}}{\rho_{t-1}}+\mathcal{A}_t\times_n\boldsymbol{D}_{I_n},\\
	 	\mathcal{G}_{n,t+1}&=\frac{\mathcal{M}_{n,t+1}-\mathcal{M}_{n,t}}{\rho_{t}}+\mathcal{A}_{t+1}\times_n\boldsymbol{D}_{I_n},
	 \end{align*}
	 from which it follows
	 \begin{equation}
	 	\begin{aligned}
	 		&\quad\mathcal{G}_{n,t}-\mathcal{G}_{n,t+1}=\frac{(\nu+1)\mathcal{M}_{n,t}+\mathcal{M}_{n,t+1}-\nu\mathcal{M}_{n,t-1}}{\rho_t}\\&+\left(\mathcal{A}_{t}-\mathcal{A}_{t+1}\right)\times_n\boldsymbol{D}_{I_n}.
	 	\end{aligned}
	 	\label{eq: aux4}
	 \end{equation}
	 Taking Eq. \eqref{eq: aux4} into Eq. \eqref{eq: aux3}, it is derived that:
	 \begin{align}
	 	&\quad\mathcal{A}_t\times_n\boldsymbol{D}_{I_n}^*\boldsymbol{D}_{I_n}-\left(\mathcal{G}_{n,t}+\frac{\mathcal{M}_{n,t}}{\rho_t}\right)\times_n\boldsymbol{D}_{I_n}^*\notag\\&=-\frac{(\nu+1)\mathcal{M}_{n,t}+2\mathcal{M}_{n,t+1}-\nu\mathcal{M}_{n,t-1}}{\rho_t}\times_n\boldsymbol{D}_{I_n}^*.
	 	\label{eq: aux5}
	 \end{align}
	 Absorbing Eq. \eqref{eq: aux1}, Eq. \eqref{eq: aux2} and Eq. \eqref{eq: aux5} into Eq. \eqref{eq: aux}, we have
	 \begin{align}
	 	&\quad\mathtt{H}\left(\mathcal{A}_{t+1}-\mathcal{A}_t\right)\notag\\&=2\Bigg(\frac{\mathcal{M}_{x,t+1}\times_1\boldsymbol{P}_1^*\times_2\boldsymbol{P}_2^*\times_3\boldsymbol{S}^*}{\rho_t}+\frac{\mathcal{M}_{y,t+1}\times_3\boldsymbol{S}^*\boldsymbol{P}_3^*}{\rho_t}\notag\\&+\sum_{n=1}^{2}\frac{(\nu+1)\mathcal{M}_{n,t}+2\mathcal{M}_{n,t+1}-\nu\mathcal{M}_{n,t-1}}{\rho_t}\times_n\boldsymbol{D}_{I_n}^*\Bigg)
	 \end{align}
	 where
	 \begin{align*}
	 	\mathtt{H}(\mathcal{A})=\tau\mathcal{A}-2\left(\mathcal{A}\times_1\boldsymbol{P}_1^*\boldsymbol{P}_1\times_2\boldsymbol{P}_2^*\boldsymbol{P}_2+\mathcal{A}\times_3\boldsymbol{S}^*\boldsymbol{P}_3^*\boldsymbol{P}_3\boldsymbol{S}\right)
	 \end{align*}
	 for any $\mathcal{A}\in\mathbb{R}^{I_1\times I_2\times R}$. According to Eq. \eqref{eq: Lipschitz Variable}, it can be proven that $\mathtt{H}(\cdot)$ is an invertible linear mapping. Hence,
	 \begin{align}
	 	&\quad\mathcal{A}_{t+1}-\mathcal{A}_t\notag\\&=2\mathtt{H}^{-1}\Bigg(\frac{\mathcal{M}_{x,t+1}\times_1\boldsymbol{P}_1^*\times_2\boldsymbol{P}_2^*\times_3\boldsymbol{S}^*}{\rho_t}+\frac{\mathcal{M}_{y,t+1}\times_3\boldsymbol{S}^*\boldsymbol{P}_3^*}{\rho_t}\notag\\&+\sum_{n=1}^{2}\frac{(\nu+1)\mathcal{M}_{n,t}+2\mathcal{M}_{n,t+1}-\nu\mathcal{M}_{n,t-1}}{\rho_t}\times_n\boldsymbol{D}_{I_n}^*\Bigg).\notag
	 \end{align}
	 Since $\left\{\mathcal{M}_{x,t},\mathcal{M}_{y,t},\left\{\mathcal{M}_{n,t}\right\}_{n=1,2}\right\}_{t\in\mathbb{N}}$ is bounded and $\rho_t$ is exponential, it is concluded that $\left\{\mathcal{A}_{t}\right\}_{t\in\mathbb{N}}$ is Cauchy sequence. Besides, from the updating rule of $\mathcal{M}_n$, we have
	 \begin{align*}
	 	&\lim_{t\rightarrow\infty}\mathcal{G}_{n,t+1}\\=&\lim_{t\rightarrow\infty}\frac{\mathcal{M}_{n,t+1}-\mathcal{M}_{n,t}}{\rho_{t}}+\mathcal{A}_{t+1}\times_n\boldsymbol{D}_{I_n}\\=&\lim_{t\rightarrow\infty}\mathcal{A}_{t+1}\times_n\boldsymbol{D}_{I_n}.
	 \end{align*}
	 Hence $\left\{\left\{\mathcal{G}_{n,t}\right\}_{n\in\left\{1,2\right\}}\right\}_{t\in\mathbb{N}}$ is also Cauchy sequence.
	 We now have (1) and (2) proven.
	 
	 Proof of (3): Recall that $\psi(x)=\frac{\log(\gamma x+1)}{\log(\gamma+1)}$, once again invoking Lemma A1 in the Supplementary Material of \cite{9340243}, it is easy to conclude from Eq. \eqref{eq: G-opt} that $\mathcal{M}_{n,*}\in-\frac{1}{2}\partial\left\lVert\mathcal{G}_{n,\ast}\right\lVert_{\overset{3-n}{\ast},\psi}.$ 
	 
	 By Eq. \eqref{eq: A-update}, $\mathcal{A}_{t+1}-\mathcal{A}_t=\frac{\bigtriangledown L_1\left(\mathcal{A}_t\right)}{\tau}$. Taking $t\rightarrow\infty$, we have $\bigtriangledown L_1\left(\mathcal{A}_\ast\right)=0$. 
	 
	 By Eq. \eqref{eq: Multipliers Update},
	 \begin{equation}
	 	\begin{aligned}
	 		\frac{\mathcal{M}_{x,t+1}-\mathcal{M}_{x,t}}{\rho_t}&=\mathcal{X}-\mathcal{A}_t\times_1\boldsymbol{P}_1\times_2\boldsymbol{P}_2\times_3\boldsymbol{S},\\
	 		\frac{\mathcal{M}_{y,t+1}-\mathcal{M}_{y,t}}{\rho_t}&=\mathcal{Y}-\mathcal{A}_t\times_3\boldsymbol{P}_3\boldsymbol{S},\\
	 		\frac{\mathcal{M}_{n,t+1}-\mathcal{M}_{n,t}}{\rho_t}&=\mathcal{G}_n-\mathcal{A}_t\times_n\boldsymbol{D}_{I_n},\,n=1,2.\\
	 	\end{aligned}
	 	\nonumber
	 \end{equation}
	 Taking $t\rightarrow\infty$, we have:
	 \begin{equation}
	 	\begin{aligned}
	 		\mathcal{X}&=\mathcal{A}_*\times_1\boldsymbol{P}_1\times_2\boldsymbol{P}_2\times_3\boldsymbol{S},\\
	 		\mathcal{Y}&=\mathcal{A}_*\times_3\boldsymbol{P}_3\boldsymbol{S},\\
	 		\mathcal{G}_{n,*}&=\mathcal{A}_*\times_n\boldsymbol{D}_{I_n}, n=1,2.
	 	\end{aligned}
	 	\nonumber
	 \end{equation}
\end{proof}

\bibliographystyle{IEEEtran}
\bibliography{main}

\end{document}